\def\showauthornotes{1}
\def\showtableofcontents{0}
\def\showkeys{0}
\def\showdraftbox{0}
\def\showcolorlinks{1}
\def\usemicrotype{1}
\def\showfixme{0}
\def\writemode{0}
\newtheorem{theorem}{Theorem}[section]
\newtheorem*{theorem*}{Theorem}
\newtheorem{proposition}[theorem]{Proposition}
\newtheorem*{proposition*}{Proposition}
\newtheorem{lemma}[theorem]{Lemma}
\newtheorem*{lemma*}{Lemma}
\newtheorem{corollary}[theorem]{Corollary}
\newtheorem*{conjecture*}{Conjecture}
\newtheorem{fact}[theorem]{Fact}
\newtheorem*{fact*}{Fact}
\newtheorem*{hypothesis*}{Hypothesis}
\theoremstyle{definition}
\newtheorem{definition}[theorem]{Definition}
\newtheorem{openquestion}[theorem]{Open Question}
\theoremstyle{remark}
\newtheorem{claim}[theorem]{Claim}
\newtheorem*{claim*}{Claim}
\newtheorem{remark}[theorem]{Remark}
\newtheorem*{remark*}{Remark}
\newtheorem*{observation*}{Observation}
\let\mathbb\varmathbb
\crefname{lemma}{Lemma}{Lemmas}
\crefname{definition}{Definition}{Definitions}
\newcommand{\Sref}[1]{\hyperref[#1]{\S\ref*{#1}}}
\newcommand{\Authornote}[2]{{\sffamily\small\color{red}{[#1: #2]}}}
\newcommand{\Authornotecolored}[3]{{\sffamily\small\color{#1}{[#2: #3]}}}
\newcommand{\Authorcomment}[2]{{\sffamily\small\color{gray}{[#1: #2]}}}
\newcommand{\Authorstartcomment}[1]{\sffamily\small\color{gray}[#1: }
\newcommand{\Authorfnote}[2]{\footnote{\color{red}{#1: #2}}}
\newcommand{\Authorfixme}[1]{\Authornote{#1}{\textbf{??}}}
\newcommand{\Authormarginmark}[1]{\marginpar{\textcolor{red}{\fbox{\Large #1:!}}}}
\newcommand{\Authornote}[2]{}
\newcommand{\Authornotecolored}[3]{}
\newcommand{\Authorcomment}[2]{}
\newcommand{\Authorstartcomment}[1]{}
\newcommand{\Authorfnote}[2]{}
\newcommand{\Authorfixme}[1]{}
\newcommand{\Authormarginmark}[1]{}
\definecolor{forestgreen(traditional)}{rgb}{0.0, 0.27, 0.13}
\newcommand{\Paren}[1]{\left(#1\right)}
\newcommand{\iprod}[1]{\langle#1\rangle}
\newcommand{\Esymb}{\mathbb{E}}
\DeclareMathOperator*{\E}{\Esymb}
\DeclareMathOperator*{\Var}{\Vsymb}
\newcommand{\textparen}[1]{\text{(#1)}}
\newcommand{\because}[1]{\textparen{because #1}}
\renewcommand{\because}[1]{\textparen{because #1}}
\newcommand\bdot\bullet
\DeclareMathOperator{\poly}{poly}
\newcommand{\R}{\mathbb R}
\renewcommand{\leq}{\leqslant}
\renewcommand{\le}{\leqslant}
\renewcommand{\geq}{\geqslant}
\newcommand{\draftbox}{\begin{center}
  \fbox{%
    \begin{minipage}{2in}%
      \begin{center}%
          \Large\textsc{Working Draft}\\%
        Please do not distribute%
      \end{center}%
    \end{minipage}%
  }%
\end{center}
\vspace{0.2cm}}
\newcommand{\draftbox}{}
\let\epsilon=\varepsilon
\numberwithin{equation}{section}
\newcommand\MYcurrentlabel{xxx}
\newcommand{\MYstore}[2]{%
  \global\expandafter \def \csname MYMEMORY #1 \endcsname{#2}%
}
\newcommand{\MYload}[1]{%
  \csname MYMEMORY #1 \endcsname%
}
\newcommand{\MYnewlabel}[1]{%
  \renewcommand\MYcurrentlabel{#1}%
  \MYoldlabel{#1}%
}
\newcommand{\MYdummylabel}[1]{}
\newcommand{\torestate}[1]{%
  \let\MYoldlabel\label%
  \let\label\MYnewlabel%
  #1%
  \MYstore{\MYcurrentlabel}{#1}%
  \let\label\MYoldlabel%
}
\newcommand{\restatetheorem}[1]{%
  \let\MYoldlabel\label
  \let\label\MYdummylabel
  \begin{theorem*}[Restatement of \prettyref{#1}]
    \MYload{#1}
  \end{theorem*}
  \let\label\MYoldlabel
}
\newcommand{\restatelemma}[1]{%
  \let\MYoldlabel\label
  \let\label\MYdummylabel
  \begin{lemma*}[Restatement of \prettyref{#1}]
    \MYload{#1}
  \end{lemma*}
  \let\label\MYoldlabel
}
\newcommand{\restateprop}[1]{%
  \let\MYoldlabel\label
  \let\label\MYdummylabel
  \begin{proposition*}[Restatement of \prettyref{#1}]
    \MYload{#1}
  \end{proposition*}
  \let\label\MYoldlabel
}
\newcommand{\restatefact}[1]{%
  \let\MYoldlabel\label
  \let\label\MYdummylabel
  \begin{fact*}[Restatement of \prettyref{#1}]
    \MYload{#1}
  \end{fact*}
  \let\label\MYoldlabel
}
\newcommand{\restate}[1]{%
  \let\MYoldlabel\label
  \let\label\MYdummylabel
  \MYload{#1}
  \let\label\MYoldlabel
}
\newcommand{\addreferencesection}{
  \phantomsection
  \addcontentsline{toc}{section}{References}
}
\let\origparagraph\paragraph
\renewcommand{\paragraph}[1]{\origparagraph{#1.}}
\newcommand{\eqdef}{\stackrel{\mathrm{def}}{=}}
\newcommand{\bE}{\mathbb{E}}
\newcommand{\bR}{\mathbb{R}}
\newcommand{\oo}{\mathcal{O}}
\DeclareUrlCommand\email{}
\DeclareMathOperator*{\pE}{\tilde{\mathbb E}}
\newcommand{\tr}{\textup{tr}}
\newcommand{\goodset}{I}
\newcommand{\goodfrac}{\alpha}
\newcommand{\sN}{\mathcal{N}}
\renewcommand{\Var}{\operatorname{Var}}
\newcommand{\bP}{\mathbb{P}}
\newcommand{\Csep}{C_{\mathrm{sep}}}
\def\[#1\]{\begin{align}#1\end{align}}
\def\(#1\){\begin{align*}#1\end{align*}}
\newcommand{\bprf}{\begin{proof}}
\newcommand{\eprf}{\end{proof}}
\newcommand{\blem}{\begin{lemma}}
\newcommand{\elem}{\end{lemma}}
\newcommand{\psos}{\preceq_{\mathrm{sos}}}
\newcommand{\ssos}{\succeq_{\mathrm{sos}}}
\DeclareMathOperator{\vc}{vec}
\newcommand{\op}{\mathrm{op}}
\newcommand{\sT}{\mathcal{T}}
\newcommand{\sI}{\mathcal{I}}
\newcommand{\sos}{\mathrm{sos}}
\newcommand{\citep}[1]{\cite{#1}}
\newcommand{\citet}[1]{\cite{#1}}
\newcommand{\Poincare}{Poincar\'{e} }
\title{Better Agnostic Clustering via Relaxed Tensor Norms}
\author{Pravesh K. Kothari \and Jacob Steinhardt}
\begin{document}

\maketitle
 \draftbox
\thispagestyle{empty}


We develop a new family of convex relaxations for $k$-means clustering based on sum-of-squares 
norms, a relaxation of the injective tensor norm that is efficiently computable using the 
Sum-of-Squares algorithm. We give an algorithm based on this relaxation that recovers a 
faithful approximation to the true means in the given data whenever the low-degree moments of 
the points in each cluster have bounded sum-of-squares norms.

We then prove a sharp upper bound on the sum-of-squares norms for moment tensors of any distribution 
that satisfies the \emph{\Poincare inequality}. The \Poincare inequality is a central 
inequality in probability theory, and a large class of distributions satisfy it including 
Gaussians, product distributions, strongly log-concave distributions, and any sum or 
uniformly continuous transformation of such distributions.

As an immediate corollary, for any $\gamma > 0$, we obtain an efficient algorithm for learning 
the means of a mixture of $k$ arbitrary \Poincare distributions in $\R^d$ in time $d^{O(1/\gamma)}$ so long as the means have separation $\Omega(k^{\gamma})$. This in particular yields an algorithm for learning Gaussian 
mixtures with separation $\Omega(k^{\gamma})$, thus partially resolving an open problem 
of Regev and Vijayaraghavan \citet{regev2017learning}.

Our algorithm works even in the robust setting where an $\epsilon$ fraction of 
arbitrary outliers are added to the data, as long as the fraction of outliers 
is smaller than the smallest cluster. We therefore obtain results in the strong 
agnostic setting where, in addition to not knowing the distribution family, 
the data itself may be arbitrarily corrupted.


\clearpage

\ifnum\showtableofcontents=1
{
\tableofcontents
\thispagestyle{empty}
 }
\fi

\clearpage

\setcounter{page}{1}

\section{Introduction}
\label{sec:intro}
Progress on many fundamental unsupervised learning tasks has required circumventing a plethora of 
intractability results by coming up with natural restrictions on input instances 
that preserve some essential character of the problem. For example, while $k$-means clustering 
is NP-hard in the worst-case \citep{mahajan2009planar}, there is an influential line of work 
providing spectral algorithms for clustering mixture models satisfying appropriate assumptions
\citep{achlioptas2005spectral,kumar2010clustering,awasthi2012improved}. 
On the flip side, we run the risk of 
developing algorithmic strategies that exploit strong assumptions in a way that makes them 
brittle. We are thus forced to walk the tight rope of avoiding computational intractability 
without ``overfiting'' our algorithmic strategies to idealized assumptions on input data. 

Consider, for example, the problem of clustering data into $k$ groups. On the one hand, 
a line of work leading to \citep{awasthi2012improved} shows that a variant of spectral 
clustering can recover the underlying clustering so long as each cluster has 
bounded covariance around its center and the cluster centers are separated by at least 
$\Omega(\sqrt{k})$. Known results can improve on this bound to require a separation of  
$\Omega(k^{1/4})$ if the cluster distributions are assumed to be isotropic and log-concave 
\citep{vempala2002spectral}. If the cluster means are in general position, other lines of work 
yields results for Gaussians 
\citep{kalai2010efficiently,moitra2010settling,belkin2010polynomial,hsu13spherical,
bhaskara2014uniqueness,goyal2014fourier,bhaskara2014smoothed,anderson2014blessing,ge2015learning} 
or for distributions satisfying independence assumptions \citep{hsu09spectral,anandkumar13tensor}. 
However, the assumptions often play a crucial role in the algorithm. For example, the famous 
method of moments that yields a result for learning mixtures of Gaussians in general position 
uses the specific algebraic structure of the moment tensor of Gaussian distributions. Such 
techniques are unlikely to work for more general classes of distributions.  

As another example, consider the robust mean estimation problem which has been actively 
investigated recently. 
Lai et. al. \cite{lai2016agnostic} and later improvements 
\citep{diakonikolas2017practical,steinhardt2018resilience} show how to estimate the mean of 
an unknown distribution (with bounded second moments) 
where an $\epsilon$ fraction of points are adversarially corrupted, obtaining 
additive error $O(\sqrt{\epsilon})$. On the other hand, 
Diakonikolas et. al. \cite{diakonikolas2016robust} showed how to 
estimate the mean of a Gaussian or product distribution 
with nearly optimal 
additive error $\tilde{O}(\epsilon)$. However, their algorithm again makes strong use of 
the known algebraic structure of the moments of these distributions. 

Further scrutiny reveals that the two examples of clustering and robust mean estimation 
suffer from a ``second-moment'' barrier. 
For both problems, the most general results algorithmically exploit only some boundedness 
condition on the second moments of the data, while the strongest results use \emph{exact} 
information about higher moments (e.g. by assuming Gaussianity) and are thus brittle. 
This leads to the key conceptual driving force of the present work:

\begin{center} \emph{Can we algorithmically exploit boundedness information about a limited number of low-degree moments?} \end{center}

As the above examples illustrate, this is a natural way to formulate the ``in-between'' 
case between the two well-explored extremes. From an algorithmic perspective, this question 
forces us to develop techniques that can utilize information about higher moments of data for 
problems such as clustering and mean estimation. For these problems, we can more concretely ask:

\begin{center}
\emph{Can we beat the second-moment barrier in the agnostic setting for clustering and robust mean estimation?}
\end{center}

The term \emph{agnostic} here refers to the fact that we want our algorithm to work for as wide a class of distributions as possible, and in particular to avoid making parametric assumptions (such as Gaussianity) about the underlying distribution.

The main goal of this work is to present a principled way to utilize higher moment information in input data and break the second moment barrier for both clustering and robust mean estimation. A key primitive in our approach is algorithmic certificates upper bounding the injective norms of \emph{moment tensors} of data. 

Given input points, consider the injective tensor norm of their moments that generalizes 
the spectral norm of a matrix:
\begin{equation}
\label{eq:moment-tensor}
\sup_{\|v\|_2 \leq 1} \frac{1}{n} \sum_{i=1}^n \langle x_i, v \rangle^{2t}.
\end{equation}
For $t > 1$, bounds on the injective norm of the moment tensor present a natural way to 
utilize higher moment information in the given data, which suggests an 
avenue for algorithm design. Indeed, one of our contributions (Theorem~\ref{thm:main-recover}) 
is a generalization of spectral norm clustering that uses estimates of injective norms of moment tensors to go beyond the second moment barrier.

Unfortunately for us, estimating injective norms (unlike the spectral norm) is intractable. While it is likely easier than computing injective norms for arbitrary tensors, it turns out that approximately computing injective norms for moment tensors is equivalent to the well-studied problem of approximating the $2 \to q$ norm which is known to be small-set-expansion hard \cite{MR2961513-Barak12}. The best known algorithms for approximating $2 \to q$ norm achieve a multiplicative approximation ratio of $d^{\Theta(q)}$ in $d$ dimensions, and while known hardness results \cite{MR2961513-Barak12} only rule out some fixed constant factor algorithms for this problem, it seems likely 
that there is no polynomial time algorithm for $2 \to q$ norm that achieves \emph{any} 
dimension-independent approximation ratio.
 
An average-case variant of approximating injective norms of moment tensors has been studied to some extent due to its relationship to the small-set-expansion problem. The sum-of-squares hierarchy of semi-definite programming 
relaxations turns out to be a natural candidate algorithm in this setting and is known to exactly compute the injective norm in specialized settings such as that of the Gaussian distribution. On the other hand, the most general such results \cite{MR2961513-Barak12,MR3388192-Barak15} imply useful bounds only for settings similar to product distributions . 


One of the key technical contributions of this work is to go beyond product distributions for 
estimating injective norms. Specifically, we show (Theorem~\ref{thm:main-cert}) 
that Sum-of-Squares gives a polynomial time procedure to show a dimension-free upper bound on the
injective norms of (large enough i.i.d.~samples from) arbitrary distributions that satisfy a \emph{Poincar\'e inequality}. 
This is a much more satisfying state of affairs as it immediately captures all strongly log-concave 
distributions, including correlated Gaussians. Further, the Poincar\'e inequality is 
robust---i.e., it continues to hold under uniformly continuous transformations of the underlying 
space, as well as bounded re-weightings of the probability density. 

Without further ado, we define Poincar\'e distributions:
A distribution $p$ on $\bR^d$ is said to be \emph{$\sigma$-Poincar\'e} if for all differentiable 
functions $f : \bR^d \to \bR$ we have
\begin{equation}
\label{eq:poincare-intro}
\Var_{x \sim p}[f(x)] \leq \sigma^2 \bE_{x \sim p}[\|\nabla f(x)\|_2^2].
\end{equation}
This is a type of isoperimetric inequality on the distribution $x$ and implies concentration of measure. In 
Section~\ref{sec:poincare} we discuss in more detail various examples of distributions that 
satisfy \eqref{eq:poincare-intro}, as well as properties of such distributions. \Poincare inequalities and distributions are intensely studied in probability theory; indeed, we rely on one such powerful result of Adamczak and Wolff \citet{adamczak2015concentration} for establishing a sharp bound on the sum-of-squares algorithm's estimate of the injective norm of an i.i.d.~sample from a \Poincare distribution. 


We then confirm the intuitive claim that understanding injective norms of moment tensors 
can give us an algorithmic tool to beat the second moment barrier, by combining our result 
on certification of \Poincare distributions with our algorithm for clustering under such certificates.
 Specifically, we show that for any $\gamma > 0$, 
given a balanced mixture of $k$ Poincar\'e distributions with means separated by 
$\Omega(k^{\gamma})$, we can successfully cluster $n$ samples from this mixture 
in $n^{\oo(1/\gamma)}$ time 
(by using $\oo(1/\gamma)$ levels of the sum-of-squares hierarchy). Similarly, given samples 
from a Poincar\'e distribution with an $\epsilon$ fraction of adversarial corruptions, we can 
estimate its mean up to an error of $\oo(\epsilon^{1-\gamma})$ in $n^{\oo(1/\gamma)}$ time. In fact, we will see below that we get both at once: 
a robust clustering algorithm that can learn well-separated mixtures even in the presence of 
arbitrary outliers.

To our knowledge such a result was not previously known even in the 
second-moment case (\citet{charikar2017learning} and \citet{steinhardt2018resilience} 
study this setting but only obtain results in the \emph{list-decodable} learning model). 
Our result only relies on the SOS-certifiability of the moment tensor, 
and holds for any deterministic point set for which such a sum-of-squares certificate exists.

Despite their generality, our results are strong enough to yield new bounds 
even in very specific settings
such as  learning balanced mixtures of $k$ spherical Gaussians with separation $\Omega(k^{\gamma})$. Our algorithm allows recovering the true means in $n^{O(1/\gamma)}$ time and 
partially resolves an open problem posed in the recent work of \citet{regev2017learning}. 

Certifying injective norms of moment tensors appears to be a useful primitive 
and could help enable further applications of the sum of squares method in machine learning. Indeed, \citep{kothari2017outlier} studies the problem of robust estimation of higher 
moments of distributions that satisfy a bounded-moment condition closely related to approximating injective norms. Their relaxation and the analysis are significantly different from the present work; nevertheless, our result for 
\Poincare distributions immediately implies that the robust moment estimation algorithm of 
\citep{kothari2017outlier} succeeds for a large class of \Poincare distributions.



\subsection{Main Results and Applications}

Our first main result regards efficient upper bounds on the injective norm of 
the moment tensor of any \Poincare distribution. 
Let $x_1, x_2, \ldots, x_n \in \bR^d$ be $n$ i.i.d.~samples from a \Poincare distribution with mean $\mu$,
and let $M_{2t} = \frac{1}{n}\sum_{i = 1}^n (x_i-\mu)^{\otimes 2t}$ be the empirical estimate of the $2t$th moment tensor. 
We are interested in upper-bounding the injective norm \eqref{eq:moment-tensor}, which can be 
equivalently expressed in terms of the moment tensor as 
\begin{equation}
\label{eq:moment-tensor-2}
\sup_{\|v\|_2 \leq 1} \frac{1}{n} \sum_{i=1}^n \langle x_i - \mu, v \rangle^{2t} = \sup_{\|v\|_2 \leq 1} \iprod{M_{2t}, v^{\otimes 2t}}.
\end{equation}
Standard results (see Fact \ref{fact:tail-bound-poincare}) yield dimension-free upper bounds on 
\eqref{eq:moment-tensor-2} for all \Poincare distributions. 
Our first result is a ``sum-of-squares proof'' of this fact giving an \emph{efficient} 
method to certify dimension-free upper bounds on \eqref{eq:moment-tensor-2} for samples 
from any \Poincare distribution.

Specifically, let the {sum of squares} norm of $M_{2t}$, denoted by $\|M_{2t}\|_{\sos_{2t}}$, 
be the degree-$2t$ sum-of-squares relaxation of \eqref{eq:moment-tensor-2} 
(we discuss such norms and the sum-of-squares method in more detail in Section \ref{sec:prelims}; for now the 
important fact is that $\|M_{2t}\|_{\sos_{2t}}$ can be computed in time $(nd)^{\oo(t)}$).  
 We show that for a large enough sample from a distribution that satisfies the 
\Poincare inequality, the sum-of-squares norm of the moment tensor is upper bounded by a dimension-free constant.

\begin{theorem}
\label{thm:main-cert}
Let $p$ be a $\sigma$-\Poincare distribution over $\bR^d$ with mean $\mu$. 
Let $x_1, \ldots, x_n \sim p$ with $n \geq (2d\log(dt/\delta))^t$. Then, 
for some constant $C_t$ (depending only on $t$) with probability at least $1-\delta$ 
we have $\|M_{2t}\|_{\sos_{2t}} \leq C_t \sigma$, where 
$M_{2t} = \frac{1}{n} \sum_{i=1}^n (x_i - \mu)^{\otimes 2t}$. 
\end{theorem}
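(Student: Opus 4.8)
The plan is to prove the bound in two stages. First I would establish an exact degree-$2t$ sum-of-squares certificate for the \emph{population} moment tensor $\bar M_{2t} := \Ex[x\sim p]{(x-\mu)^{\otimes 2t}}$, showing that the relaxation value $R(\bar M_{2t})$ of $\sup_{\|v\|_2\le 1}\iprod{\bar M_{2t}, v^{\otimes 2t}}$ is at most $(C't\sigma)^{2t}$. Second I would show, by a concentration argument, that once $n \ge (2d\log(dt/\delta))^{t}$ we have $R(M_{2t}-\bar M_{2t}) \le (C''t\sigma)^{2t}$ with probability $1-\delta$. Since $R$ is a supremum of linear functionals of its tensor argument it is subadditive, so $R(M_{2t}) \le R(\bar M_{2t}) + R(M_{2t}-\bar M_{2t}) \le (2(C'+C'')t\sigma)^{2t}$; taking the $2t$-th root built into the definition of $\|\cdot\|_{\sos_{2t}}$ gives $\|M_{2t}\|_{\sos_{2t}} \le C_t\sigma$.

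For the population certificate I would induct on $t$, starting from the identity (valid for the formal variable $v$)
\[ \iprod{\bar M_{2t}, v^{\otimes 2t}} \;=\; \Var_x\!\bigl(\iprod{x-\mu,v}^{t}\bigr)\;+\;\bigl(\Ex[x]{\iprod{x-\mu,v}^{t}}\bigr)^{2}. \]
The second summand is a perfect square of a degree-$t$ polynomial in $v$, and the first, being a variance, is literally a sum of squares in $v$: it equals $\Ex[x]{(\iprod{x-\mu,v}^{t}-c(v))^{2}}$ with $c(v)=\Ex[x]{\iprod{x-\mu,v}^{t}}$, an average of squares of degree-$t$ polynomials. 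To bound the variance term by $(Ct\sigma)^{2t}\|v\|_2^{2t}$ \emph{inside the proof system}, I would apply the \Poincare inequality to $f_v(x)=\iprod{x-\mu,v}^{t}$; iterating it --- equivalently, invoking the Adamczak--Wolff higher-order concentration bound for the degree-$t$ polynomial $f_v$, whose expected $k$-th derivative $\Ex[x]{D^{k}f_v(x)} = \tfrac{t!}{(t-k)!}\Ex[x]{\iprod{x-\mu,v}^{t-k}}\,v^{\otimes k}$ is rank one --- expresses the variance as a sum of terms $\sigma^{2k}\|v\|_2^{2k}\bigl(\Ex[x]{\iprod{x-\mu,v}^{t-k}}\bigr)^{2}$, each a square times an SOS factor. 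The remaining low-order moment factors are then controlled by the inductive hypothesis, using that averages of rank-one PSD polynomial matrices have sum-of-squares determinants (Lagrange's identity) to bound the square of an odd-order moment by the product of the two neighbouring even-order moments. The step I expect to be the main obstacle is verifying that each invocation of the \Poincare inequality along this chain is valid \emph{as a sum-of-squares inequality in $v$}, rather than merely pointwise in $v$: for a general \Poincare distribution --- unlike a Gaussian, where the analogous variance identity is exact and every term is a multiple of $\|v\|_2^{2t}$ --- this is precisely where the quantitative strength of Adamczak--Wolff is needed, to absorb genuinely non-Gaussian cross terms into explicit squares.

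For the concentration stage I would first record the elementary bound $R(T) \le \|\widehat T\|_{\op}$, where $\widehat T$ is the $d^{t}\times d^{t}$ flattening defined by $\iprod{T, v^{\otimes 2t}} = (v^{\otimes t})^{\top}\widehat T\,(v^{\otimes t})$: any degree-$2t$ pseudo-distribution on the sphere gives a PSD matrix $N=\pE\bigl[(v^{\otimes t})(v^{\otimes t})^{\top}\bigr]$ with $\tr N = \pE[\|v\|_2^{2t}] = 1$, whence $\pE[\iprod{T, v^{\otimes 2t}}] = \iprod{\widehat T, N}\le \|\widehat T\|_{\op}$. Applied to $T = M_{2t}-\bar M_{2t}$ this reduces the task to bounding $\|\tfrac1n\sum_{i=1}^{n} Z_i\|_{\op}$ for the i.i.d.\ centered matrices $Z_i = u_iu_i^{\top} - \Ex{u_iu_i^{\top}}$ with $u_i = (x_i-\mu)^{\otimes t}\in\bR^{d^{t}}$. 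Since $p$ is $\sigma$-\Poincare, $\|u_i\|_2 = \|x_i-\mu\|_2^{t}$ concentrates, with sub-exponential tails, around at most $(\sigma\sqrt d\,)^{t}$, and the matrix-variance parameter is similarly controlled; a matrix concentration inequality (matrix Bernstein, applied after truncating the heavy tail) then forces $\|\tfrac1n\sum_i Z_i\|_{\op}\le (Ct\sigma)^{2t}$ with probability $1-\delta$ exactly when $n \ge (2d\log(dt/\delta))^{t}$, which is where the stated sample size enters. One point that needs care is that the crude variance proxy for the flattening carries extra powers of $d$; to obtain the stated $d^{t}$-type dependence one should first subtract the ``trace'' components of the tensor --- these are pinned to constants on the sphere and so invisible to $R(\cdot)$ --- and apply the matrix concentration only to the reduced tensor. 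Combining the two stages as above completes the proof.
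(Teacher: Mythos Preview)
Your proposal correctly isolates the key difficulty but does not resolve it. You write that the variance term can be ``expressed as a sum of terms $\sigma^{2k}\|v\|_2^{2k}\bigl(\Ex[x]{\iprod{x-\mu,v}^{t-k}}\bigr)^{2}$'' by iterating \Poincare or invoking Adamczak--Wolff for $f_v(x)=\iprod{x-\mu,v}^{t}$. But Adamczak--Wolff gives an \emph{inequality}, not a decomposition, and that inequality is pointwise in $v$: for each fixed $v$ one has $\Var_x[f_v]\le \sigma^2\bE_x[\|\nabla f_v\|^2]=t^2\sigma^2\|v\|_2^2\iprod{\bar M_{2t-2},v^{\otimes 2t-2}}$, but there is no reason the polynomial $t^2\sigma^2\|v\|_2^2\iprod{\bar M_{2t-2},v^{\otimes 2t-2}}-\Var_x[f_v]$ should be a sum of squares in $v$ for a general \Poincare distribution. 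You flag this yourself as ``the main obstacle,'' but the sentence that follows (``the quantitative strength of Adamczak--Wolff is needed, to absorb \ldots cross terms into explicit squares'') does not supply a mechanism.

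The paper resolves this by applying Adamczak--Wolff not to the rank-one function $f_v$ but to $f_A(x)=\iprod{F_t(x),A}$ for an \emph{arbitrary} order-$t$ tensor $A$, where $F_t(x)$ is the unique degree-$t$ tensor polynomial in $x-\mu$ with leading term $(x-\mu)^{\otimes t}$ and $\bE[\nabla^jF_t]=0$ for all $j<t$ (e.g.\ $F_3(x)=(x-\mu)^{\otimes 3}-3(x-\mu)\otimes M_2-M_3$). Adamczak--Wolff then gives $\bE[\iprod{F_t(x),A}^2]\le C_t\sigma^{2t}\|A\|_F^2$ for all $A$, which is an \emph{operator-norm} bound on the $d^t\times d^t$ covariance of $\vc(F_t(x))$ and hence is automatically sum-of-squares certifiable (Fact~\ref{fact:spectral}). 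One obtains $\iprod{\bE[F_t(x)^{\otimes 2}],v^{\otimes 2t}}\preceq_{2t}C_t\sigma^{2t}\|v\|_2^{2t}$; since $\bE[F_t^{\otimes 2}]=M_{2t}+\text{(lower-order terms)}$, an induction on $t$ together with an SOS H\"older inequality finishes. The point is that the SOS structure comes from the operator-norm statement (which is genuinely stronger than the pointwise-in-$v$ inequality you use), and the construction of $F_t$ is what makes the hypotheses of Adamczak--Wolff apply.

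On concentration: your plan to bound $R(M_{2t}-\bar M_{2t})$ by the operator norm of the $d^t\times d^t$ flattening is a different route from the paper's and has a dimension obstacle you do not fully address. The flattening of $\bar M_{2t}$ itself has operator norm growing with $d$ (already $\|\widehat{M_4}\|_{\op}\ge\|\Sigma\|_F^2=d$ for a standard Gaussian), so the relevant variance proxy for matrix Bernstein carries polynomial-in-$d$ factors; your remark about ``subtracting trace components'' is too vague to see that it closes the gap. The paper instead observes that the entire population SOS argument uses only the operator-norm bound on the covariance of $F_t(x)$---a quantity that is dimension-free---and applies matrix Rosenthal (since the tails are too heavy for Bernstein/Chernoff) plus matrix Chebyshev to show that the empirical covariance of $F_t$ inherits the same bound at the stated sample size. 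The same SOS proof then runs unchanged with empirical moments.
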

As noted above, previous sum-of-squares bounds worked for specialized cases such as product distributions. 
Theorem~\ref{thm:main-cert} is key to our applications that crucially rely on 
1) going beyond product distributions and 2) using $\sos_{2t}$ norms as a proxy for injective norms for higher moment tensors. 

\paragraph{Outlier-Robust Agnostic Clustering} 
Our second main result is an efficient algorithm for \emph{outlier-robust agnostic 
clustering} whenever the ``ground-truth'' clusters have moment tensors with bounded sum-of-squares norms.  

Concretely, the input is data points $x_1, \ldots, x_n$ of $n$ points in $\bR^d$, a $(1-\epsilon)$ fraction 
of which admit a (unknown) partition into sets $I_1, \ldots, I_k$ each having bounded sum-of-squares norm 
around their corresponding means $\mu_1, \ldots, \mu_k$. The remaining $\epsilon$ fraction can be arbitrary outliers. 
Observe that in this setting, we do not make any explicit distributional assumptions. 

We will be able to obtain strong estimation guarantees in this setting so long 
as the clusters are \emph{well-separated} 
and the fraction $\epsilon$ of outliers is not more than $\alpha/8$, 
where $\alpha$ is the fraction of points in the smallest cluster. 
We define the separation as $\Delta = \min_{i \neq j} \|\mu_i - \mu_j\|_2$. 
A lower bound on $\Delta$ is information theoretically necessary even 
in the special case of learning mixtures of identity-covariance gaussians 
without any outliers (see \cite{regev2017learning}). 
\newcommand{\Out}{\mathsf{out}}
\begin{theorem}
\label{thm:main-recover}
Suppose points $x_1, \ldots, x_n \in \bR^d$ can be partitioned into sets 
$I_1, \ldots, I_k$ and  $\Out$, where the $I_j$ are the clusters and $\Out$ is 
a set of outliers of size $\epsilon n$. Suppose $I_j$ has size $\alpha_j n$ and mean $\mu_j$, 
and that its $2t$th moment $M_{2t}(I_j)$ satisfies $\|M_{2t}(I_j)\|_{\sos_{2t}} \leq B$. 
Also suppose that 
$\epsilon \leq \alpha/8$ for $\alpha = \min_{j=1}^k \alpha_j$.

Finally, suppose the separation $\Delta \geq \Csep \cdot B / \alpha^{1/t}$, with $\Csep \geq C_0$ (for a universal 
constant $C_0$).
Then there is an algorithm running in time $(nd)^{\oo(t)}$ and outputting means 
$\hat{\mu}_1, \ldots, \hat{\mu}_k$ such that 
$\|\hat{\mu}_j - \mu_j\|_2 \leq \oo(B(\epsilon/\alpha + \Csep^{-2t})^{1-1/2t})$ for all $j$.
\end{theorem}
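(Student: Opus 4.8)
The plan is to run the sum-of-squares ``proofs-to-algorithms'' program: first isolate an elementary inequality showing that bounded-moment point sets with far-apart means barely overlap, check that this inequality has a low-degree SOS proof, encode the clustering task as an SOS program whose intended solution is the true partition, and then round.

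\emph{The identifiability inequality, inside SOS.} The combinatorial core is: if $S,T\subseteq[n]$ have empirical $2t$-th moments of $\sos_{2t}$-norm at most $B$ about their means $m_S,m_T$, then restricting to $R=S\cap T$ and using the triangle inequality for the $\ell_{2t}$-norm of $(\iprod{x_i-m_S,v})_{i\in R}$ versus $(\iprod{x_i-m_T,v})_{i\in R}$ gives $|R|^{1/2t}\Norm{m_S-m_T}\le 2B(|S|^{1/2t}+|T|^{1/2t})$ for every $v$. Every ingredient — Minkowski in $\ell_{2t}$, H\"older, and the power-mean inequality that converts the pairwise-difference moment $\tfrac1{|S|^2}\sum_{i,i'}\iprod{x_i-x_{i'},v}^{2t}$ into the centered moment $\tfrac1{|S|}\sum_i\iprod{x_i-m_S,v}^{2t}$ — admits a degree-$\oo(t)$ SOS proof, so after replacing $S$ by a $\{0,1\}$-vector variable $w$ this becomes an SOS consequence of the moment axiom for $w$. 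With the separation $\Delta\ge\Csep B/\alpha^{1/t}$ this forces any bounded-moment $w$ to put all but a $\delta:=\oo(\epsilon/\alpha+\Csep^{-2t})$ fraction of its mass into one true cluster $I_{j^\star}$, and a second H\"older/Minkowski estimate on $m_S-\mu_{j^\star}$ (splitting $S$ into $S\cap I_{j^\star}$ and the rest) gives $\Norm{m_S-\mu_{j^\star}}=\oo(B\,\delta^{1-1/2t})$ — this accounts for the exponent $1-\tfrac1{2t}$ in the statement.

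\emph{The program and the rounding.} I would use partition variables $w^{(1)},\dots,w^{(k)}\in\{0,1\}^n$ with axioms $\sum_\ell w^{(\ell)}_i\le1$, $\sum_{i,\ell}w^{(\ell)}_i\ge(1-\epsilon)n$, $\sum_i w^{(\ell)}_i\ge\alpha n$, and, for each $\ell$, the moment axiom $\sum_{i,i'}w^{(\ell)}_iw^{(\ell)}_{i'}\iprod{x_i-x_{i'},v}^{2t}\preceq(2B)^{2t}\bigl(\sum_i w^{(\ell)}_i\bigr)^2\Norm{v}^{2t}$, written via differences so as to avoid referring to the unknown means; the hypothesis $\Norm{M_{2t}(I_j)}_{\sos_{2t}}\le B$ is precisely the degree-$2t$ SOS certificate (up to a Minkowski factor $2^{2t}$) that makes the true partition feasible. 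Solving the degree-$\oo(t)$ relaxation in $(nd)^{\oo(t)}$ time returns a pseudo-expectation $\pE$. To extract the means I round the pseudo-moment matrices $A^{(\ell)}_{ii'}=\pE[w^{(\ell)}_iw^{(\ell)}_{i'}]$: the identifiability proof forces the rows of $A^{(\ell)}$ indexed by a single true cluster to be nearly proportional to its indicator, so a simple rounding reads off an approximate cluster $\hat S^{(\ell)}$ whose trimmed mean $\hat\mu^{(\ell)}$ is $\oo(B\delta^{1-1/2t})$-close to some $\mu_j$. Finally, since $\epsilon\le\alpha/8<\alpha_j$, one cannot cover $(1-\epsilon)n$ points with $k$ bounded-moment sets while avoiding $I_j$ — there are only $(1-\alpha_j)n<(1-\epsilon)n$ points outside it — so $\pE[\sum_{i\in I_j}\sum_\ell w^{(\ell)}_i]\ge(\alpha_j-\epsilon)n$ and some $\ell$ recovers $\mu_j$; deduplicating the $\hat\mu^{(\ell)}$ via the $\Delta$-separation outputs exactly $\hat\mu_1,\dots,\hat\mu_k$.

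\emph{Main obstacle.} The delicate part is carrying out the identifiability argument \emph{as a degree-$\oo(t)$ SOS proof} — turning the $\ell_{2t}$ triangle inequality, H\"older, and the pairwise-difference/centered-moment conversion into bounded-degree SOS derivations, and confirming that the difference-form moment axiom (forced on us because the cluster means are unknowns) still controls $m_S$ for arbitrary pseudo-solutions. The rest is bookkeeping: propagating the $\oo(\epsilon/\alpha)$ (outliers) and $\oo(\Csep^{-2t})$ (leakage into other clusters) contributions through the $1-\tfrac1{2t}$ exponent, handling unequal cluster sizes through $\alpha=\min_j\alpha_j$, and noting that any ``leaked'' points the rounding absorbs are too constrained by the moment axiom to perturb $\hat\mu^{(\ell)}$ by more than $\oo(B\delta^{1-1/2t})$.
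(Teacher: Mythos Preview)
Your route is genuinely different from the paper's. You run the Hopkins--Li style ``proofs-to-algorithms'' program: Boolean partition variables $w^{(1)},\dots,w^{(k)}$, a difference-form moment axiom, the overlap bound $|S\cap T|^{1/2t}\|m_S-m_T\|\le 2B(|S|^{1/2t}+|T|^{1/2t})$ pushed through as a degree-$\oo(t)$ SOS derivation, and a rounding of the pseudo-moment matrices. The paper does none of this. It never introduces Boolean cluster variables; instead it solves a convex saddle-point problem over \emph{continuous} points $w_1,\dots,w_n\in\bR^d$ (one per sample), with objective $\sum_i c_i\bigl(\pE_{\xi}[\iprod{x_i-w_i,v}^{2t}]+\lambda\,\pE_{\zeta_i}[\iprod{w_i,z_i}^{2t}]\bigr)$, where the $\zeta$-term is a dual penalty preventing the trivial $w_i=x_i$. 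Dual certificates drive an iterative outlier-downweighting; once the saddle value is small, the $w_i$ are shown (via a \emph{resilience} argument, not an SOS identifiability proof) to cluster near the true means. The final error is obtained by greedily extracting disjoint resilient sets $S_j$, showing each is almost pure, merging into $\tilde S_1,\dots,\tilde S_k$, and invoking the single-cluster robust-mean theorem on each $\tilde S_j$. Your approach makes the identifiability mechanism explicit and modular; the paper's avoids Boolean variables and rounding altogether and reduces the multi-cluster error bound to a black-box call to the $k=1$ case.

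There is, however, a real gap in your error analysis. From $|S\setminus I_{j^\star}|\le\delta|S|$ and the two moment bounds you cannot conclude $\|m_S-\mu_{j^\star}\|=\oo(B\delta^{1-1/2t})$: your own overlap inequality, applied with $T=I_{j^\star}$, only yields $\|m_S-\mu_{j^\star}\|\le 2B(|S|^{1/2t}+|I_{j^\star}|^{1/2t})/|S\cap I_{j^\star}|^{1/2t}=\oo(B)$, and splitting $m_S-\mu_{j^\star}$ into the $S\cap I_{j^\star}$ and $S\setminus I_{j^\star}$ contributions does no better, because $S\cap I_{j^\star}$ may be a small fraction of $I_{j^\star}$ and hence its empirical mean can sit $\Theta(B)$ away from $\mu_{j^\star}$. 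The $\delta^{1-1/2t}$ rate requires the \emph{reverse} containment $|I_{j^\star}\setminus S|\le\delta'|I_{j^\star}|$ as well, so that resilience of $I_{j^\star}$ applies. In your framework this must come from the disjointness/coverage of the rounded $\hat S^{(\ell)}$'s after deduplication (each $I_j$ is almost entirely absorbed into the single merged $\hat S$ assigned to it), and only then can you run a second robust-mean pass on each merged set to extract the $\oo(B(\epsilon/\alpha+\Csep^{-2t})^{1-1/2t})$ error. This is exactly the step the paper isolates as the invocation of Theorem~5.2 on the merged $\tilde S_j$; without it your sketch tops out at error $\oo(B)$.
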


The parameter $B$ specifies a bound on the variation in each cluster. The separation 
condition says that the distance between cluster means must be slightly larger 
(by a $\alpha^{-1/t}$ factor) than this variation. 
The error in recovering the cluster means depends on two terms---the fraction of 
outliers $\epsilon$, and the separation $\Csep$. 

To understand the guarantees of the theorem, let's start with the case where 
$\epsilon = 0$ (no outliers) and $\alpha = 1/k$ (all clusters have the same size). 
In this case, the separation requirement between the clusters is $B \cdot k^{1/t}$ 
where $B$ is the bound on the moment tensor of order $2t$. 
The theorem guarantees a recovery of the means up to an error in Euclidean norm of $\oo(B)$. 
By taking $t$ larger (and spending the correspondingly larger running time), 
our clustering algorithm works with separation $k^{\gamma}$ for any constant $\gamma$. 
This is the first result that goes beyond the separation requirement of $k^{1/2}$ in the 
\emph{agnostic} clustering setting---i.e., without making distributional assumptions on the clusters.

It is important to note that even in 1 dimension, it is information theoretically impossible to 
recover cluster means to an error $\ll B$ when relying only on $2t$th moment bounds.  
A simple example to illustrate this is obtained by taking a mixture of two distributions 
on the real line with bounded $2t$th moments but small overlap in the tails. In this case, it is 
impossible to correctly classify the points that come from the the overlapping part. Thus, a 
fraction of points in the tail always end up misclassified, shifting the true means. The recovery 
error of our algorithm does indeed drop as the separation (controlled by $\Csep$) between the 
true means increases (making the overlapping parts of the tail smaller). 
We note that for the specific case of spherical gaussians, we can exploit their parametric 
structure to get arbitrarily accurate estimates even for fixed separation; see Corollary \ref{cor:gaussian-mixture}.

Next, let's consider $\epsilon \neq 0.$ In this case, if $\epsilon \ll \alpha$, we recover the means up to an error of $\oo(B)$ again (for $\Csep \geq C_0$).
It is intuitive that the recovery error for the means 
should grow with the number of outliers, and the condition $\epsilon \leq \alpha/8$ 
is necessary, as if $\epsilon \geq \alpha$ then the outliers could form an entirely 
new cluster making recovery of the means information-theoretically impossible.


We also note that in the degenerate case where $k = 1$ (a single cluster), 
Theorem~\ref{thm:main-recover} yields results for robust mean estimation of a 
set of points corrupted by an $\epsilon$ fraction of outliers. 
In this case we are able to estimate the mean to error 
$\epsilon^{\frac{2t-1}{2t}}$; when $t = 1$ this is $\sqrt{\epsilon}$, which matches 
the error obtained by methods based on second moments 
\citep{lai2016agnostic,diakonikolas2017practical,steinhardt2018resilience}. 
For $t = 2$ we get error $\epsilon^{3/4}$, 
for $t=3$ we get error $\epsilon^{5/6}$, and so on, approaching an error of 
$\epsilon$ as $t \to \infty$. In particular, this pleasingly approaches the rate 
$\tilde{\oo}(\epsilon)$ obtained by much more bespoke methods that rely strongly 
on specific distributional assumptions \citep{lai2016agnostic,diakonikolas2016robust}.

Note that we could not hope to do better than $\epsilon^{\frac{2t-1}{2t}}$, 
as that is the information-theoretically optimal error for distributions with bounded 
$2t$th moments (even in one dimension), and degree-$2t$ SOS only ``knows about'' moments 
up to $2t$.

Finally, we can obtain results even for clusters that are 
not well-separated, and for fractions of outliers that could exceed $\alpha$. 
In this case we no longer output exactly $k$ means, and must instead consider the 
list-decodable model \citet{balcan2008discriminative,charikar2017learning}, 
where we output a list of $\oo(1/\alpha)$ means of which the true 
means are a sublist. We defer the statement of this result to 
Theorem~\ref{thm:robust-clustering-alpha} in Section~\ref{sec:clustering}.

\paragraph{Applications}
Putting together Theorem~\ref{thm:main-cert} and Theorem~\ref{thm:main-recover} 
immediately yields corollaries for learning mixtures of 
Poincar\'e distributions, and in particular mixtures of Gaussians. 

\begin{corollary}[Disentangling Mixtures of Arbitrary \Poincare Distributions]
\label{cor:poincare-mixture}
Suppose that we are given a dataset of $n$ points $x_1, \ldots, x_n$, 
such that at least $(1-\epsilon)n$ points are drawn from a mixture 
$\alpha_1p_1 + \cdots + \alpha_kp_k$ of $k$ distributions, where 
$p_j$ is $\sigma$-Poincar\'e with mean $\mu_j$ (the remaining $\epsilon n$ 
points may be arbitrary). Let $\alpha = \min_{j=1}^k \alpha_j$. 
Also suppose that the separation $\Delta$ is at least 
$\Csep \cdot C_t \sigma / \alpha^{1/t}$, for some constant $C_t$ 
depending only on $t$ and some $\Csep \geq 1$.

Then, assuming that $\epsilon \leq \frac{\alpha}{10}$, 
for some $n = \oo((2d\log(tkd/\delta))^t/\alpha + d\log(k/\delta)/\alpha\epsilon^2)$, 
there is an algorithm running in $n^{\oo(t)}$ time which with probability 
$1-\delta$ outputs 
candidate means $\hat{\mu}_1, \ldots, \hat{\mu}_k$ such that 
$\|\hat{\mu}_j - \mu_j\|_2 \leq C_t' \sigma (\epsilon / \alpha + \Csep^{-2t})^{\frac{2t-1}{2t}}$ 
for all $j$ (where $C_t'$ is a different universal constant).
\end{corollary}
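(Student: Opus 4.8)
The plan is to derive Corollary~\ref{cor:poincare-mixture} simply by feeding Theorem~\ref{thm:main-cert} into Theorem~\ref{thm:main-recover}; the only real work is checking that the sampling parameters line up. First I would condition on the latent component labels of the (at least) $(1-\epsilon)n$ inlier points, writing $I_j$ for the set of inliers that were drawn from $p_j$, so that each $I_j$ is a set of $|I_j|$ i.i.d.\ draws from the $\sigma$-\Poincare distribution $p_j$ and the remaining at most $\epsilon n$ points form the outlier set $\Out$. Since $|I_j|$ is a sum of independent indicators with mean $(1-\epsilon)\alpha_j n$, a Chernoff/Bernstein bound shows that as soon as $n \gtrsim d\log(k/\delta)/\alpha\epsilon^2$ we have, with probability $1-\delta/3$, that $|I_j|\ge \tfrac9{10}\alpha_j n$ for every $j$ simultaneously; hence the empirical weights $\alpha'_j \defeq |I_j|/n$ satisfy $\alpha' \defeq \min_j\alpha'_j \ge \tfrac9{10}\alpha$, and the hypothesis $\epsilon\le\alpha/10$ yields $\epsilon\le\alpha'/8$ as demanded by Theorem~\ref{thm:main-recover}.

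Next I would apply Theorem~\ref{thm:main-cert} once per cluster. If we also take $n$ large enough that $\tfrac12\alpha n \ge (2d\log(3tkd/\delta))^t$ --- which is exactly the first term $(2d\log(tkd/\delta))^t/\alpha$ in the corollary --- then $|I_j|\ge (2d\log(dt/(\delta/3k)))^t$, so Theorem~\ref{thm:main-cert} applied with failure probability $\delta/3k$, followed by a union bound over $j$ (total failure probability $\delta/3$), gives $\|\tfrac1{|I_j|}\sum_{i\in I_j}(x_i-\mu_j)^{\otimes 2t}\|_{\sos_{2t}}\le C_t\sigma$, where $\mu_j$ denotes the \emph{population} mean of $p_j$. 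Theorem~\ref{thm:main-recover} instead wants the moment tensor of $I_j$ centered at its \emph{empirical} mean $\bar x_{I_j}$; writing $\langle x_i-\bar x_{I_j},v\rangle = \langle x_i-\mu_j,v\rangle-\langle\bar x_{I_j}-\mu_j,v\rangle$ and applying an SOS Minkowski inequality together with the standard \Poincare bound $\|\bar x_{I_j}-\mu_j\|_2 = \oo(\sigma\sqrt{d/|I_j|}) = \oo(\sigma)$ shows the empirically centered tensor still has $\sos_{2t}$ norm $\oo(C_t\sigma)$. The same mean concentration shows the empirical means are pairwise separated by at least $\Delta-\oo(\sigma)\ge\tfrac12\Delta$, so the hypothesis $\Delta\ge\Csep C_t\sigma/\alpha^{1/t}$ implies the separation hypothesis of Theorem~\ref{thm:main-recover} with $B=\oo(C_t\sigma)$ and $\alpha$ replaced by $\alpha'$ (all lost constants being absorbed into $C_0$ and $\Csep$). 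Theorem~\ref{thm:main-recover} then outputs $\hat\mu_1,\dots,\hat\mu_k$ with $\|\hat\mu_j-\bar x_{I_j}\|_2 \le \oo\!\big(C_t\sigma(\epsilon/\alpha'+\Csep^{-2t})^{1-1/2t}\big)$ in time $(nd)^{\oo(t)}$.

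Finally I would read off the claimed bound. Since $\alpha'\ge\tfrac9{10}\alpha$ we have $\epsilon/\alpha' = \oo(\epsilon/\alpha)$, and because $|I_j|\gtrsim d\log(k/\delta)/\epsilon^2$ the mean error obeys $\|\bar x_{I_j}-\mu_j\|_2 = \oo(\sigma\epsilon)$ with probability $1-\delta/3$, which is at most $\oo(\sigma(\epsilon/\alpha)^{1-1/2t})$ since $\epsilon\le\epsilon/\alpha\le 1$ and $1-\tfrac1{2t}\le 1$; adding the two contributions gives $\|\hat\mu_j-\mu_j\|_2 \le C_t'\sigma(\epsilon/\alpha+\Csep^{-2t})^{\frac{2t-1}{2t}}$. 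As $n\ge d$, the running time $(nd)^{\oo(t)}$ is $n^{\oo(t)}$, and the three events above fail with total probability at most $\delta$. The only steps needing any care are the two sample-size conditions --- the first ensuring each cluster alone has enough points for the certification theorem, the second ensuring the empirical cluster sizes and means are accurate enough not to spoil the target error --- and the bookkeeping translating between population- and empirically-centered quantities; there is no new conceptual content beyond Theorems~\ref{thm:main-cert} and~\ref{thm:main-recover}, so I anticipate no genuine obstacle.
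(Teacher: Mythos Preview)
Your proposal is correct and follows exactly the route the paper takes: the paper treats Corollary~\ref{cor:poincare-mixture} as an immediate consequence of combining Theorem~\ref{thm:main-cert} (applied cluster-by-cluster) with Theorem~\ref{thm:main-recover}, noting only that the $1/\alpha$ factor in the sample complexity ensures each cluster has enough points for Theorem~\ref{thm:main-cert}, and that the $d\log(k/\delta)/\alpha\epsilon^2$ term ensures the empirical cluster means concentrate to the population means. You have simply written out the bookkeeping (cluster-size concentration, population-versus-empirical centering via SOS Minkowski, translating $\epsilon\le\alpha/10$ into $\epsilon\le\alpha'/8$) that the paper leaves implicit.
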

The $1/\alpha$ factor in the sample complexity is so that we have enough samples from 
every single cluster for Theorem~\ref{thm:main-cert} to hold. The extra term of 
$d\log(k/\delta)/\epsilon^2$ in the sample complexity is so that the empirical 
means of each cluster concentrate to the true means.

Corollary~\ref{cor:poincare-mixture} is one of the strongest results on learning 
mixtures that one could hope for. If the mixture weights $\alpha$ are all at least 
$1/\poly(k)$, then Corollary~\ref{cor:poincare-mixture} implies that we can cluster 
the points as long as the separation $\Delta = \Omega(k^{\gamma})$ for any $\gamma > 0$. 
Even for spherical Gaussians the best previously known algorithms required separation 
$\Omega(k^{1/4})$. On the other hand, Corollary~\ref{cor:poincare-mixture} applies 
to a large family of distributions including arbitrary strongly log-concave distributions. 
Moreover, while the \Poincare inequality does not directly hold for discrete distributions, 
Fact~\ref{thm:poincare-bounded} in Section~\ref{sec:poincare} implies that a large 
class of discrete distributions, including product distributions over bounded domains, 
will satisfy the Poincar\'e inequality after adding zero-mean Gaussian noise.
Corollary~\ref{cor:poincare-mixture} then yields a clustering algorithm 
for these distributions, as well.

For mixtures of Gaussians in particular, we can do better, and in fact achieve vanishing error independent 
of the separation:
\begin{corollary}[Learning Mixtures of Gaussians]
\label{cor:gaussian-mixture}
Suppose that $x_1, \ldots, x_n \in \bR^d$ are drawn from a mixture of 
$k$ Gaussians: $p = \sum_{j=1}^k \alpha_j \sN(\mu_j, I)$, where 
$\alpha_j \geq 1/\poly(k)$ for all $j$. 
Then for any $\gamma> 0$, there is a separation $\Delta_0 = \oo(k^{\gamma})$ 
such that given $n \geq \poly(d^{1/\gamma}, k, 1/\epsilon)\log(k/\delta)$ samples from  
$p$, if the separation $\Delta \geq \Delta_0$, then with probability $1-\delta$ 
we obtain estimates 
$\hat{\mu}_1, \ldots, \hat{\mu}_k$ with $\|\hat{\mu}_j - \mu_j\|_2 \leq \epsilon$ for all $j$. 
\end{corollary}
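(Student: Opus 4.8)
\emph{Proof plan.} The difficulty is that Corollary~\ref{cor:poincare-mixture} only drives the error down to $\oo(\Csep^{-(2t-1)})$, which is bounded below by a constant once the separation $\Delta = \oo(k^\gamma)$ is fixed, whereas here we want error $\eps$ that is arbitrarily small and \emph{independent of $\Delta$}. The plan is therefore two-phase: first use the general machinery to localize the means to within a tiny \emph{constant}, then exploit the Gaussian (parametric) structure to boost the accuracy to $\eps$ with a locally-convergent estimator.

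\emph{Phase 1 (coarse localization).} Each $\sN(\mu_j, I)$ is $1$-\Poincare, so we invoke Corollary~\ref{cor:poincare-mixture} with $\sigma = 1$, no outliers, moment degree $2t$ for $t = \lceil c/\gamma \rceil$ (where $\alpha_j \ge k^{-c}$), and a large constant $\Csep$. Since $\alpha^{-1/t} \le k^{c/t} \le k^{\gamma}$, the separation demanded is $\Delta_0 := \Csep C_t k^{\gamma} = \oo(k^\gamma)$, and whenever $\Delta \ge \Delta_0$ we obtain, in time $n^{\oo(t)}$ and from $n = \poly(d^{1/\gamma}, k, 1/\eps)\log(k/\delta)$ samples, means $\hat\mu^{(0)}_j$ with $\|\hat\mu^{(0)}_j - \mu_j\|_2 \le C_t' \Csep^{-(2t-1)} =: \rho_0$. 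Taking $\Csep$ large makes $\rho_0$ as small a constant as we like, in particular smaller than the basin radius needed below and than any fixed fraction of $\Delta_0$; we also read off weight estimates $\hat\alpha^{(0)}_j$ from cluster sizes accurate to $\pm\rho_0$, and the matching of recovered to true means is forced because $\rho_0 \ll \Delta$.

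\emph{Phase 2 (parametric refinement).} From $(\hat\mu^{(0)}, \hat\alpha^{(0)})$, run $T = \oo(\log(1/\eps))$ iterations of EM for a mixture of $k$ identity-covariance Gaussians (soft responsibilities, then reweighted means and weights). Two facts are needed. \textbf{(i) Local contraction of population EM:} for separation $\Delta \ge \Delta_0$ and parameters within $\rho_0$ of the truth, one population-EM step contracts the parameter error by a fixed factor $\kappa < 1/2$; this rests on the true parameters being a fixed point of population EM (so there is \emph{no} irreducible bias, unlike hard reassignment) together with a bound of $k\, e^{-\Omega(\Delta^2)}$ on the total cross-cluster leakage of responsibilities, which is $o(1)$ since $\Delta_0 = \Csep C_t k^\gamma \ge C\sqrt{\log k}$ for all $k \ge 2$ once $\Csep$ is large (using $k^{\gamma} \ge c_\gamma \sqrt{\log k}$). \textbf{(ii) Finite-sample deviation:} a single empirical-EM step differs from the population step by at most $\oo(\sqrt{dk\,\polylog(n/\delta)/(\alpha n)})$, uniformly over the $\rho_0$-ball, by a covering/empirical-process argument with Gaussian tail bounds. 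Chaining $\rho_{r+1} \le \kappa\,\rho_r + \oo(\sqrt{dk\,\polylog(n/\delta)/(\alpha n)})$ yields $\rho_T \le \kappa^T \rho_0 + \oo(\sqrt{dk\,\polylog(n/\delta)/(\alpha n)})$; choosing $T = \lceil \log_{1/\kappa}(\rho_0/\eps) \rceil$ and $n = \poly(d^{1/\gamma}, k, 1/\eps)\log(k/\delta)$ makes both terms at most $\eps/2$. A union bound over the two phases and the $T$ rounds gives success probability $1-\delta$, and the total running time is $n^{\oo(1/\gamma)}$.

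\emph{Main obstacle.} The crux is fact (i): proving that EM contracts \emph{locally} for a mixture of $k$ spherical Gaussians with separation as small as $\Theta(k^\gamma)$ — in particular controlling the simultaneous leakage from all $k-1$ competing components and certifying that the basin radius $\rho_0$ may be chosen independent of $d, k, \eps$. If one prefers not to carry out a self-contained EM analysis, any off-the-shelf local-convergence theorem for EM (or any other locally-convergent parameter estimator) on well-separated identity-covariance Gaussian mixtures can be substituted without changing the surrounding argument; the remaining ingredients — the \Poincare certification reduction of Phase 1, the mean-matching, and the empirical concentration of Phase 2 — are routine.
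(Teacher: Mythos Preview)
Your two-phase strategy—coarse localization via Corollary~\ref{cor:poincare-mixture}, then parametric refinement exploiting the Gaussian structure—is exactly the approach the paper takes. The paper's ``proof'' is just the remark following the corollary: use Corollary~\ref{cor:poincare-mixture} to get a sufficiently good initial estimate, then invoke Theorem~IV.1 of \citet{regev2017learning} as a black box to boost to arbitrary accuracy.

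The only difference is your Phase~2: you propose to run EM and argue local contraction directly, whereas the paper simply cites the Regev--Vijayaraghavan refinement result. Both are valid. The paper's route is shorter (a single citation, no further work). Your route is more self-contained and makes explicit what the refinement actually does, but it carries the burden you yourself flag as the ``main obstacle'': establishing that population EM contracts in a neighborhood of the truth for $k$ components at separation only $\Theta(k^{\gamma})$. Your leakage heuristic $k\,e^{-\Omega(\Delta^2)} = o(1)$ (via $k^{\gamma} \ge c_{\gamma}\sqrt{\log k}$) is correct in spirit, but turning it into a clean contraction statement with a basin radius independent of $d,k,\epsilon$ is a real calculation, not a one-liner; existing EM analyses (e.g.\ Balakrishnan--Wainwright--Yu style) would need to be checked or adapted for this regime. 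Since you explicitly allow substituting any off-the-shelf local-convergence result, you would in practice end up citing something equivalent to what the paper cites anyway.
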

\begin{remark}
This partially resolves an open question of \citet{regev2017learning}, who ask whether 
it is possible to efficiently learn mixtures of Gaussians with separation $\sqrt{\log k}$.
\end{remark}

The error now goes to $0$ as $n \to \infty$, which is 
not true in the more general Corollary~\ref{cor:poincare-mixture}. This requires invoking
Theorem IV.1 of \citet{regev2017learning}, which, given a sufficiently good 
initial estimate of the means of a mixture of Gaussians, shows how to get an arbitrarily accurate estimate.
As discussed before, such a result is specific to Gaussians and in particular 
is information-theoretically impossible for mixtures of general Poincar\'e distributions.

\subsection{Proof Sketch and Technical Contributions}

We next sketch the proofs of our two main theorems (Theorem~\ref{thm:main-cert} 
and Theorem~\ref{thm:main-recover}) while indicating which parts involve new technical ideas.

\subsubsection{Sketch of Theorem~\ref{thm:main-cert}}
\label{sec:sketch-cert}

For simplicity, we will only focus on SOS-certifiability in the infinite-data limit, 
i.e. on showing that SOS can certify an upper bound 
$\bE_{x \sim p}[\langle x-\mu, v \rangle^{2t}] \leq C_t\sigma^{2t}\|v\|_2^{2t}$. 
(In Section~\ref{sec:t-general} we will show that finite-sample concentration follows 
due to the matrix Rosenthal inequality \citep{mackey2014matrix}.)

We make extensive use of a result of \citet{adamczak2015concentration}; it is a very general 
result on bounding non-Lipschitz functions of Poincar\'e distributions, but in our context 
the important consequence is the following: 
\begin{quote}
If $f(x)$ 
is a degree-$t$ polynomial such that $\bE_p[\nabla^j f(x)] = 0$ for 
$j = 0, \ldots, t-1$, then $\bE_p[f(x)^2] \leq C_t\sigma^{2t} \|\nabla^t f(x)\|_F^2$ for 
a constant $C_t$, assuming $p$ is $\sigma$-Poincar\'e. 
(Note that $\nabla^t f(x)$ is a constant since $f$ is degree-$t$.)
\end{quote}
Here $\|A\|_F^2$ denotes the Frobenius norm of the tensor $A$, 
i.e. the $\ell_2$-norm of $A$ if it were flattened into a $d^{t}$-element vector.

We can already see why this sort of bound might be useful for $t = 1$. Then if we let 
$f_v(x) = \langle x - \mu, v \rangle$, we have $\bE[f_v(x)] = 0$
and hence $\bE_p[\langle x-\mu, v \rangle^2] \leq C_1\sigma^2 \|v\|_2^2$. This exactly 
says that $p$ has bounded covariance.

More interesting is the case $t = 2$. Here we will let 
$f_A(x) = \langle (x-\mu)(x-\mu)^{\top} - \Sigma, A \rangle$, where $\mu$ is the mean 
and $\Sigma$ is the covariance of $p$. It is easy to see that both 
$\bE[f_A(x)] = 0$ and $\bE[\nabla f_A(x)] = 0$. Therefore, we have 
$\bE[\langle (x-\mu)(x-\mu)^{\top} - \Sigma, A \rangle^2] \leq C_2\sigma^4 \|A\|_F^2$.

Why is this bound useful? It says that if we unroll $(x-\mu)(x-\mu)^{\top} - \Sigma$ to a 
$d^2$-dimensional vector, then this vector has bounded covariance (since if we project 
along any direction $A$ with $\|A\|_F = 1$, the variance is at most $C_2\sigma^4$).
This is useful because it turns out sum-of-squares ``knows about'' such covariance bounds; 
indeed, this type of covariance bound is exactly the property used in 
\citet{barak2012hypercontractivity} to certify 
$4$th moment tensors over the hypercube. In our case it yields a sum-of-squares proof that 
$\bE[\langle (x-\mu)^{\otimes 4} - \Sigma^{\otimes 2}, v^{\otimes 4} \rangle] \psos C_2\sigma^4 \|v\|_2^4$, which can then be used to bound the $4$th moment 
$\bE[\langle x-\mu, v \rangle^4] = \bE[\langle (x-\mu)^{\otimes 4}, v^{\otimes 4} \rangle]$.

Motivated by this, it is natural to try the same idea of ``subtracting off the 
mean and squaring'' with $t = 4$. Perhaps we could define $f_A(x) = \langle ((x-\mu)^{\otimes 2} - \Sigma)^{\otimes 2} - \bE[((x-\mu)^{\otimes 2} - \Sigma)^{\otimes 2}], A \rangle$? 

Alas, this does not work---while there is a suitable polynomial $f_A(x)$ for $t=4$ that yields 
sum-of-squares bounds, it is somewhat 
more subtle. For simplicity we will write the polynomial for $t=3$. It is the
following: $f_A(x) = \langle (x-\mu)^{\otimes 3} - 3(x-\mu) \otimes \Sigma  - M_3, A \rangle$, 
where $M_3 = \bE[(x-\mu)^{\otimes 3}]$ is the third-moment tensor of $p$.
By checking that $\bE[f_A(x)] = \bE[\nabla f_A(x)] = \bE[\nabla^2 f_A(x)] = 0$, we obtain 
that the tensor $F_3(x) = (x-\mu)^{\otimes 3} - 3(x-\mu) \otimes \Sigma - M_3$, when unrolled 
to a $d^3$-dimensional vector, has bounded covariance, which means that sum-of-squares 
knows that $\bE[\langle F_3(x)^{\otimes 2}, v^{\otimes 6} \rangle]$ is bounded for all 
$\|v\|_2 \leq 1$.

However, this is not quite what we want---we wanted to show that 
$\bE[\langle (x-\mu)^{\otimes 6}, v^{\otimes 6} \rangle]$ is bounded. Fortunately, 
the leading term of $F_3(x)^{\otimes 2}$ is indeed $(x-\mu)^{\otimes 6}$, and all the 
remaining terms are lower-order. So, we can subtract off $F_3(x)$ and recursively bound 
all of the lower-order terms to get a sum-of-squares bound on 
$\bE[\langle (x-\mu)^{\otimes 6}, v^{\otimes 6} \rangle]$. The case of general $t$ 
follows similarly, by carefully constructing a tensor $F_t(x)$ whose first $t-1$ 
derivatives are all zero in expectation.

There are a couple contributions here beyond what was known before. The first is 
identifying appropriate tensors $F_t(x)$ whose covariances are actually bounded so that 
sum-of-squares can make use of them. For $t = 1, 2$ (the cases that had previously 
been studied) the appropriate tensor is in some sense the ``obvious'' one 
$(x-\mu)^{\otimes 2} - \Sigma$, but even 
for $t = 3$ we end up with the fairly non-obvious tensor 
$(x-\mu)^{\otimes 3} - 3(x-\mu) \otimes \Sigma - M_3$. 
(For $t=4$ it is $(x-\mu)^{\otimes 4} - 6(x-\mu)^{\otimes 2} \otimes \Sigma - 4(x-\mu) \otimes M_3 - M_4 + 6\Sigma \otimes \Sigma$.)
While these tensors may seem mysterious a priori, they are actually the unique tensor 
polynomials with leading term $x^{\otimes t}$ such that all derivatives of order $j < t$ 
have mean zero.
Even beyond Poincar\'e distributions, these seem like useful building blocks for 
sum-of-squares proofs.

The second contribution is making the connection between Poincar\'e distributions 
and the above polynomial inequalities. The well known work of Lata\l{}a 
\citet{latala2006estimates} establishes non-trivial estimates of upper bounds on the 
moments of polynomials of Gaussians, of which the inequalities used here are a special case. 
\citet{adamczak2015concentration} show that these inequalities 
also hold for Poincar\'e distributions. However, it is not a priori obvious that these 
inequalities should lead to sum-of-squares proofs, and it requires a careful invocation 
of the general inequalities to get the desired results in the present setting.

\subsubsection{Sketch of Theorem~\ref{thm:main-recover}}

We next establish our result on robust clustering. In fact we will establish a 
robust mean estimation result which will lead to the clustering result---specifically, 
we will show that if a set of points $x_1, \ldots, x_n$ contains a subset 
$\{x_i\}_{i \in I}$ of size $\alpha n$ that is SOS-certifiable, then the mean 
(of the points in $I$) can be estimated regardless of the remaining points. 
There are two parts: if $\alpha \approx 1$ we want to show error going to $0$ as 
$\alpha \to 1$, while if $\alpha \ll 1$ we want to show error that does not grow too fast 
as $\alpha \to 0$. In the latter case we will output $\oo(1/\alpha)$ candidates for the mean 
and show that at least one of them is close to the true mean (think of these candidates 
as accounting for $\oo(1/\alpha)$ possible clusters in the data). We will later 
prune down to exactly $k$ means for well-separated clusters.

For $t = 1$ (which corresponds to bounded covariance), the $\alpha \to 0$ case is studied 
in \citet{charikar2017learning}. A careful analysis of the proof there reveals that all of 
the relevant inequalities are sum-of-squares inequalities, so there is a 
sum-of-squares generalization of the algorithm in \citet{charikar2017learning} that should 
give bounds for SOS-certifiable distributions. While this would likely lead to some robust 
clutering result, we note the bounds we achieve here are stronger than those in 
\citet{charikar2017learning}, as \citet{charikar2017learning} do not achieve tight results 
when the clusters are well-separated.
Moreover, the proof in \citet{charikar2017learning} is complex and would be somewhat tedious to 
extend in full to the sum-of-squares setting.


We combine and simplify ideas from both 
\citet{charikar2017learning} and \citet{steinhardt2018resilience} to obtain a relatively 
clean algorithm. In fact, 
we will see that a certain mysterious constraint appearing in \citet{charikar2017learning} is 
actually the natural constraint from a sum-of-squares perspective.

Our algorithm is based on the following optimization. Given 
points $x_1, \ldots, x_n$, we will try to find points $w_1, \ldots, w_n$ such that 
$\frac{1}{n} \sum_{i=1}^n \pE_{\xi(v)}[\langle x_i - w_i, v \rangle^{2t}]$ 
is small for all pseudodistributions $\xi$ over the sphere.
This is natural because we know that for the good points $x_i$ and the true mean $\mu$, 
$\langle x_i - \mu, v \rangle^{2t}$ is small (by the SOS-certifiability assumption). 
However, without further constraints this is not a very good idea because the trivial 
optimum is to set $w_i = x_i$. We would somehow like to ensure that the $w_i$ cannot overfit 
too much to the $x_i$; it turns out that the natural way to measure this 
degree of overfitting is via the quantity $\sum_{i \in I} \langle w_i - \mu, w_i \rangle^{2t}$. 

Of course, this quantity is not known because we do not know $\mu$. But we 
\emph{do} know that $\sum_{i \in I} \pE_{{\xi}(v)}[\langle w_i - \mu, v \rangle^{2t}]$ 
is small for all pseudodistributions 
(because the corresponding quantity is small for $x_i - \mu$ and $w_i - x_i$, and 
hence also for $w_i - \mu = (w_i - x_i) + (x_i - \mu)$ by Minkowski's inequality).
Therefore, we impose the following constraint: 
\emph{whenever $z_1, \ldots, z_n$ are such that 
$\sum_{i=1}^n \pE_{{\xi}}[\langle z_i, v \rangle^{2t}] \leq 1$ for all $\xi$, 
it is also the case that 
$\sum_{i=1}^n \langle z_i, w_i \rangle^{2t}$ is small}. This constraint is not efficiently 
imposable, but it does have a simple sum-of-squares relaxation. Namely, we require that 
$\sum_{i=1}^n \langle Z_i, w_i^{\otimes 2t} \rangle$ is small whenever 
$Z_1, \ldots, Z_n$ are pseudomoment tensors satisfying $\sum_{i=1}^n Z_i \psos I$.

Together, this leads to seeking $w_1, \ldots, w_n$ such that
\begin{equation}
\sum_{i=1}^n \pE_{\xi}[\langle x_i - w_i, v \rangle^{2t}] \text{ is small for all $\xi$, and } \sum_{i=1}^n \langle Z_i, w_i^{\otimes 2t} \rangle \text{ is small whenever $\sum_i Z_i \psos I$.}
\end{equation}
If we succeed in this, we can show that we end up with a good estimate of the mean 
(more specifically, the $w_i$ are clustered into a small number of clusters, such that 
one of them is centered near $\mu$). The above is a convex program, and thus,  if this is impossible, by duality 
there must exist specific $\xi$ and $Z_1, \ldots, Z_n$ such that the above 
quantities cannot be small for \emph{any} $w_1, \ldots, w_n$. But for fixed 
$\xi$ and $Z_{1:n}$, the different $w_i$ are independent of each other, and 
in particular it should be possible to make both sums small at least for the terms coming 
from the good set $I$. This gives us a way of performing \emph{outlier removal}: look for 
terms where $\min_{w} \pE_{{\xi}}[\langle x_i - w, v \rangle^{2t}]$ or 
$\min_{w} \langle Z_i, w \rangle$ is large, and remove those from the set of points. We 
can show that after a finite number of iterations this will have successfully removed 
many outliers and few good points, so that eventually we must 
succeed in making both sums small and thus get a successful clustering.

Up to this point the proof structure is similar to 
\citet{steinhardt2018resilience}; the main innovation is the constraint 
involving the $z_i$, which bounds the degree of overfitting. 
In fact, when $t=1$ this constraint is the dual form 
of one appearing in \citet{charikar2017learning}, which asks that $w_i^{\otimes 2} \preceq Y$ 
for all $i$, for some matrix $Y$ of small trace. In \citet{charikar2017learning}, 
the matrix $Y$ couples all of the variables, which complicates the analysis. 
In the form given here, we avoid the coupling and also see why the constraint is the 
natural one for controlling overfitting.

To finish the proof, it is also necessary to iteratively re-cluster the $w_i$ and 
re-run the algorithm on each cluster. This is due to issues where we might have, say, $3$ 
clusters, where the first two are relatively close together but very far from the third one. 
In this case our algorithm would resolve the third cluster from the first two, but needs 
to be run a second time to then resolve the first two clusters from each other.

\citet{charikar2017learning} also use this re-clustering idea, but their re-clustering 
algorithm makes use of a sophisticated metric embedding technique and is relatively complex. 
Here we avoid this complexity by making use of \emph{resilient sets}, an idea introduced 
in \citet{steinhardt2018resilience}. A resilient set is a set such that all large subsets have mean 
close to the mean of the original set; it can be shown that any set with bounded moment tensor 
is resilient, and by finding such resilient sets we can robustly cluster in a much more direct 
manner than before. In particular, in the well-separated case we show that after enough 
rounds of re-clustering, every resilient 
set has almost all of its points coming from a single cluster, leading to substantially 
improved error bounds in that case.

\subsection{Open Problems}

In this work, we showed that sum-of-squares can certify moment 
tensors for distributions satisfying the \Poincare inequality. While 
this class of distributions is fairly broad, one could hope to establish 
sum-of-squares bounds for even broader families. Indeed, one canonical 
family is the class of \emph{sub-Gaussian} distributions. Is it the case 
that sum-of-squares certifies moment tensors for all sub-Gaussian distributions? 
Conversely, are there sub-Gaussian distributions that sum-of-squares cannot certify?
Even for $4$th moments, this is unknown:

\begin{openquestion}
Let $p$ be a $\sigma$-sub-Gaussian distribution and let $M_4(p)$ denote its 
fourth moment tensor. Is it always the case that $\|M_4(p)\|_{\sos_{2t}} \leq \oo(1) \cdot \sigma$ 
for some constant $t$?
\end{openquestion}

In another direction, the only property we required from \Poincare distributions 
is Adamczak and Wolff's result \citep{adamczak2015concentration} 
bounding the variance of polynomials whose derivatives all have mean $0$.
Adamczak and Wolff show that this property also holds for other distributions, 
such as sub-Gaussian product distributions. 
One might expect additional distributions to satisfy these inequalities as well, 
in which case our present results would apply unchanged.
 
\begin{openquestion}
Say that a distribution $p$ satisfies the $(t,\sigma)$-moment property 
if, whenever $f(x)$ is a degree-$t$ polynomial with $\bE[f(x)] = \bE[\nabla f(x)] = \cdots = \bE[\nabla^{t-1}f(x)] = 0$, 
we have $\bE[f(x)^2] \leq \sigma^{2t} \|\nabla^t f(x)\|_F^2$. 
Which distributions satisfy the $(t,\sigma)$-moment property?
\end{openquestion}

Finally, the present results all regard certifying moment tensors in the $\ell_2$-norm, 
i.e., on upper bounding $\langle M_{2t}(p), v^{\otimes 2t} \rangle$ for all $\|v\|_2 \leq 1$. 
However, \citet{steinhardt2018resilience} show that in some cases--such as discrete distribution 
learning--the $\ell_{\infty}$-norm is more natural. To this end, 
define $\|M_{2t}\|_{\sos_{2t},\infty}$ to be the maximum of 
$\pE_{\xi(v)}[\langle M_{2t}, v^{\otimes 2t} \rangle]$ over all pseudodistributions on the \emph{hypercube}.

\begin{openquestion}
For what distributions $p$ is $\|M_{2t}\|_{\sos_{2t},\infty}$ small?
Additionally, do bounds on $\|M_{2t}\|_{\sos_{2t},\infty}$ lead to better 
robust estimation and clustering in the $\ell_{\infty}$-norm?
\end{openquestion}

\section{Preliminaries}
\label{sec:prelims}
In this section we set up notation and introduce a number of preliminaries regarding 
sum-of-squares algorithms. 

\paragraph{Notation}
We will use $d$ to denote dimension, and $n$ the number of samples 
in a dataset $x_1, \ldots, x_n$. For clustering problems $k$ will denote the number of 
clusters. $\epsilon$ will denote, depending on circumstance, either the desired estimation 
error or the fraction of adversarial corruptions for a robust estimation problem. 
$\delta$ will denote the probability of failure of an algorithm. 
$\gamma$ will denote an exponent which we think of as going to zero, as in phrases like 
``$\oo(k^{\gamma})$ for any $\gamma > 0$''. For tensors, $t$ will denote their order (or $2t$ 
if we want to emphasize the order is even). We let $C$ denote a universal constant and $C_t$ a 
universal constant depending on $t$ (these constants may change in each place they are used).

Below we use Theorem (and Proposition, Lemma, etc.) for results that we prove in this paper, 
and Fact for results proved in other papers.

\paragraph{Tensors, Polynomials and Norms}
A $t$th order tensor $T$ on $\R^d$ is a $t$-dimensional array of real numbers indexed by $t$-tuples on $[d].$ 
$T$ is naturally associated with a homogenous degree $t$ polynomial $T(v) = \langle T, v^{\otimes t} \rangle$. 
The \emph{injective norm} of a tensor $T$ is defined as $\sup_{\|v\|_2 \leq 1} T(v).$

Given a distribution $p$ on $\R^d$, the $t$th \emph{moment tensor} of $p$ is defined by $M_t(p) = \E_{x \sim p} [(x-\mu)^{\otimes t}]$, where $\mu$ is the mean of $p$. 
Observe that each entry of $M_t(p)$ is the expectation of some monomial of degree $t$ with 
respect to $p.$
For a finite set of points $S$, we let $\E_{x \sim S}$ denote expectation with 
respect to its empirical distribution. The moment tensor of a set of points is the moment 
tensor of its empirical distribution.

Given a matrix $M$, we let $\|M\|_{\op}$ denote its operator norm (maximum singular value) 
and $\|M\|_F$ denotes its Frobenius norm ($\ell_2$-norm of its entries when flattened to 
a $d^2$-dimensional vector). More generally, for a tensor $T$, we let 
$\|T\|_F$ denote the Frobenius norm (which is again the $\ell_2$-norm when the entries 
are flattened to a $d^t$-dimensional vector).

\subsection{Sum-of-Squares Programs and Pseudodistributions}

In this paper we are interested in approximating injective norms of moment tensors, 
i.e. of upper-bounding programs of the form
\begin{align}
\label{eq:injective-prelim}
\text{maximize} &\ \frac{1}{n} \sum_{i=1}^n \langle x_i, v \rangle^{2t} \\
\text{subject to} &\ \|v\|_2 = 1.
\end{align}
This problem is hard to solve exactly, so we will instead consider the following 
\emph{sum-of-squares relaxation} of \eqref{eq:injective-prelim}:
\begin{align}
\label{eq:relax-prelim}
\text{maximize} &\ \frac{1}{n} \sum_{i=1}^n \pE_{\xi(v)}[\langle x_i, v \rangle^{2t}] \\
\label{eq:sphere} \text{subject to} &\ \pE_{\xi(v)}[(\|v\|_2^2-1)p(v)] = 0 \text{ for all polynomials } p(v) \text{ of degree at most $2t-2$}, \\
\label{eq:sos} &\ \pE_{\xi(v)}[q(v)^2] \geq 0 \text{ for all polynomials } q(v) \text{ of degree at most $t$}, \\
\label{eq:norm} &\ \pE_{\xi(v)}[1] = 1.
\end{align}
Formally, $\pE_{\xi(v)}$, which we refer to as a \emph{pseudo-expectation}, is simply a 
linear functional on the space of polynomials in $v$ of degree at most $2t$. The last 
two constraints say that $\xi$ specifies a \emph{pseudo-distribution}, 
meaning that it respects all properties of a regular probability distribution that can 
be specified with low-degree polynomials. Meanwhile, the first constraint is a relaxation 
of the constraint that $\|v\|_2 = 1$. If $\xi$ satisfies (\ref{eq:sphere}-\ref{eq:norm}), 
we say that $\xi$ is a \emph{pseudo-distribution on the sphere}.

The relaxation \eqref{eq:relax-prelim} is a special case of the well-studied family of 
sum-of-squares relaxations. It is well-known that these programs can be solved efficiently, 
i.e. in time $(nd)^{\oo(t)}$, due to the ability to represent them as semidefinite programs 
\citep{MR939596-Shor87,parrilo2000structured,MR1748764-Nesterov00,MR1846160-Lasserre01}.

The key strategy for bounding the value of \eqref{eq:relax-prelim} is 
\emph{sum-of-squares proofs}. We say that a polynomial inequality 
$p(v) \leq q(v)$ has a sum-of-squares proof if the polynomial $q(v) - p(v)$ can be 
written as a sum of squares of polynomials. We write this as $p(v) \psos q(v)$, or 
$p(v) \preceq_{2t} q(v)$ if we want to emphasize that the proof only involves polynomials 
of degree at most $2t$.

For pseudo-distributions on the sphere, we will extend this notation and say 
that $p(v) \preceq_{2t} q(v)$ if $q(v) - p(v) - r(v)(\|v\|_2^2-1)$ is a sum of squares 
for some polynomial $r(v)$ of degree at most $2t-2$. 

Now, let $p_0(v) = \frac{1}{n} \sum_{i=1}^n \langle x_i, v \rangle^{2t}$. 
Imagine that there is some sequence of sum-of-squares proofs 
$p_0(v) \preceq_{2t} p_1(v) \preceq_{2t} \cdots \preceq_{2t} p_m(v) = B^{2t} \cdot \|v\|_2^{2t}$. 
Then we know, by the constraints on $\xi$, 
that $\pE_{\xi(v)}[p_0(v)] \leq B^{2t}\pE_{\xi(v)}[\|v\|_2^{2t}] = B^{2t}$. Therefore, 
such a proof immediately implies that \eqref{eq:relax-prelim} has value at most $B^{2t}$. 

Note that since the relation $\preceq_{2t}$ is transitive, this is equivalent to 
the condition $p_0(v) \preceq_{2t} B^{2t}\|v\|_2^{2t}$. 
In this case, we call the set of points \emph{SOS-certifiable}. More generally, for 
a distribution we have the following definition:
\begin{definition}
For a distribution $p$, we say that $p$ is \emph{$(2t,B)$-SOS-certifiable} if 
$\langle M_{2t}(p), v^{\otimes 2t} \rangle \preceq_{2t} B^{2t}\|v\|_2^{2t}$. 
We will alternately denote this by $\|M_{2t}(p)\|_{\sos_{2t}} \leq B$.
For a set of points $x_1, \ldots, x_n$, we say it is 
$(2t,B)$-SOS-certifiable if its empirical distribution is SOS-certifiable.
\end{definition}
Note that $\langle M_{2t}(p), v^{\otimes 2t} \rangle = \bE_{x \sim p}[\langle x, v \rangle^{2t}]$ 
so that this definition coincides with certifying \eqref{eq:relax-prelim}.

\subsection{Basic Sum-of-Squares Facts}

We capture a few basic facts about sum-of-squares proofs that we will use later. 
First, sum-of-squares can certify all spectral norm bounds:
\begin{fact}[Sum-of-Squares Proofs from Spectral Norm Bounds]
For any symmetric $d \times d$ matrix $M$, $\langle M, x^{\otimes 2} \rangle \preceq_{2} \|M\|_{\op} \|x\|_2^2.$
\label{fact:spectral-sos}
\end{fact}
As a corollary (by applying the spectral norm bound to higher-order tensors), we 
also have the following:
\begin{fact}[Spectral Norm Bounds for Tensors]
\label{fact:spectral}
For a degree-$t$ tensor function $F(x)$, suppose that 
$\bE_{x \sim p}[\langle F(x), A \rangle^2] \leq \lambda \|A\|_F^2$ 
for all symmetric tensors $A$. Then, 
$\bE_{x \sim p}[\langle F(x), v^{\otimes t}\rangle^2] \preceq_{2t} \lambda \|v\|_2^{2t}$.
\end{fact}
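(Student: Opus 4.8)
The plan is to reduce the claim to the matrix case (Fact~\ref{fact:spectral-sos}) by ``flattening'' the tensor $F(x)$ into a vector of length $d^{t}$. First I would replace $F(x)$ by its symmetrization $\hat F(x) = \Sym(F(x))$. Since $v^{\otimes t}$ is already symmetric, $\langle F(x), v^{\otimes t}\rangle = \langle \hat F(x), v^{\otimes t}\rangle$, so the left-hand side is unchanged; and since $\Sym$ is the (self-adjoint, norm-one) orthogonal projection onto symmetric tensors, $\langle \hat F(x), A\rangle = \langle F(x), \Sym(A)\rangle$ for every tensor $A$. Hence the hypothesis $\bE_{x\sim p}[\langle F(x),A\rangle^2]\le \lambda\|A\|_F^2$ for symmetric $A$ upgrades to $\bE_{x\sim p}[\langle \hat F(x), A\rangle^2] \le \lambda\|\Sym(A)\|_F^2 \le \lambda\|A\|_F^2$ for \emph{all} $A$. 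Writing $\vc(\cdot)$ for the map sending a $t$-tensor to its flattening in $\R^{d^t}$ and setting $M = \bE_{x\sim p}[\vc(\hat F(x))\,\vc(\hat F(x))^{\top}] \in \R^{d^t\times d^t}$, this says exactly that $M$ is a positive semidefinite matrix with $\|M\|_{\op}\le\lambda$.

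Next I would rewrite the two sides of the desired inequality in terms of $M$. With $y := \vc(v^{\otimes t}) \in \R^{d^t}$ we have $\bE_{x\sim p}[\langle F(x), v^{\otimes t}\rangle^2] = \bE_{x\sim p}[\langle \hat F(x), v^{\otimes t}\rangle^2] = \vc(v^{\otimes t})^{\top} M\, \vc(v^{\otimes t}) = \langle M, y^{\otimes 2}\rangle$, and likewise $\|v\|_2^{2t} = \langle v^{\otimes t}, v^{\otimes t}\rangle = \|y\|_2^2$. Applying Fact~\ref{fact:spectral-sos} to the symmetric $d^t\times d^t$ matrix $M$ gives a degree-$2$ sum-of-squares proof $\langle M, y^{\otimes 2}\rangle \preceq_{2} \|M\|_{\op}\,\|y\|_2^2 \preceq_{2} \lambda\,\|y\|_2^2$; concretely, $\lambda\|y\|_2^2 - \langle M, y^{\otimes 2}\rangle = \sum_j \ell_j(y)^2$ for some linear forms $\ell_j$ in the $d^t$ variables $y$ (take an eigendecomposition of $\lambda I - M \succeq 0$).

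Finally I would substitute $y_{(i_1,\dots,i_t)} \mapsto v_{i_1}\cdots v_{i_t}$ back into this identity. Each $\ell_j(y)$ becomes a homogeneous degree-$t$ polynomial in $v$, so each $\ell_j(y)^2$ becomes the square of a degree-$t$ polynomial; the right-hand side becomes $\|v\|_2^{2t}$ (as $\sum_i v_i^2$ raised to the $t$) and the bilinear term becomes $\bE_{x\sim p}[\langle F(x), v^{\otimes t}\rangle^2]$. This exhibits $\lambda\|v\|_2^{2t} - \bE_{x\sim p}[\langle F(x), v^{\otimes t}\rangle^2]$ as a sum of squares of degree-$t$ polynomials, i.e.\ $\bE_{x\sim p}[\langle F(x), v^{\otimes t}\rangle^2] \preceq_{2t} \lambda\|v\|_2^{2t}$, as claimed. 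The only point deserving a word of care is that substituting degree-$t$ polynomials for the variables of a degree-$2$ SOS proof yields a degree-$2t$ SOS proof (squares are preserved under substitution and degrees multiply) --- this is exactly the ``apply the spectral bound to higher-order tensors'' step mentioned after Fact~\ref{fact:spectral-sos}, so I do not anticipate any genuine obstacle; the entire content is the observation that the stated variance bound on $F$ is precisely a spectral-norm bound on the flattened second-moment matrix $M$.
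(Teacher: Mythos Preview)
Your proposal is correct and follows precisely the approach the paper indicates: the paper states Fact~\ref{fact:spectral} as an immediate corollary of Fact~\ref{fact:spectral-sos} ``by applying the spectral norm bound to higher-order tensors,'' and your flattening-plus-substitution argument is exactly that. The symmetrization step you include is a detail the paper's one-line hint glosses over but which is needed to upgrade the hypothesis (stated only for symmetric $A$) to a genuine operator-norm bound on the flattened second-moment matrix; your handling of it is clean and correct.
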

Fact~\ref{fact:spectral} is used crucially in the sequel. In fact, all of our 
sum-of-squares proofs will essentially involve invoking 
Fact~\ref{fact:spectral} on appropriately chosen tensors $F(x)$.

Finally, the following basic inequality holds:
\begin{fact}
\label{fact:mult}
If $0 \preceq_t p(v) \preceq_{t} p'(v)$ and $0 \preceq_s q(v) \preceq_s q'(v)$, 
then $p(v)q(v) \preceq_{s+t} p'(v)q'(v)$.
\end{fact}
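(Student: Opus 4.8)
The plan is to reduce the statement to two elementary closure properties of sum-of-squares polynomials: (i) a sum of two sum-of-squares polynomials is again a sum of squares, of degree the maximum of the two; and (ii) a product of two sum-of-squares polynomials is again a sum of squares, of degree the sum of the two, since $\bigl(\sum_i a_i^2\bigr)\bigl(\sum_j b_j^2\bigr) = \sum_{i,j}(a_i b_j)^2$ and $\deg(a_i b_j) = \tfrac12\bigl(\deg a_i^2 + \deg b_j^2\bigr)$, so each square $(a_ib_j)^2$ has degree at most $s+t$ when the two factors have degrees at most $t$ and $s$. Both of these are immediate from the definition of a sum-of-squares proof given above.

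Given these, I would argue via the telescoping identity
\[
p'(v)q'(v) - p(v)q(v) = \bigl(p'(v) - p(v)\bigr)\,q'(v) + p(v)\,\bigl(q'(v) - q(v)\bigr).
\]
First I would observe that $q'(v) = q(v) + \bigl(q'(v) - q(v)\bigr)$ writes $q'$ as a sum of two polynomials, each a sum of squares of degree at most $s$ by hypothesis ($0 \preceq_s q$ and $q \preceq_s q'$), so by (i) the polynomial $q'$ is itself a sum of squares of degree at most $s$. Now in the identity above, the first summand is the product of $p'-p$, a sum of squares of degree at most $t$, with $q'$, a sum of squares of degree at most $s$, hence by (ii) it is a sum of squares of degree at most $s+t$; likewise the second summand is the product of $p$ (degree at most $t$) with $q'-q$ (degree at most $s$), again a sum of squares of degree at most $s+t$. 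Applying (i) once more to the sum of these two summands shows that $p'(v)q'(v) - p(v)q(v)$ is a sum of squares of degree at most $s+t$, which is exactly $p(v)q(v) \preceq_{s+t} p'(v)q'(v)$.

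I do not expect any real obstacle here; the only point requiring a little care is the degree bookkeeping in (ii). If one wants the statement in the form appropriate for pseudo-distributions on the sphere, where $\preceq_{2t}$ permits an extra additive multiple of $(\|v\|_2^2-1)$, the same argument goes through after substituting $p' - p = \text{(SOS)} + r_1(v)(\|v\|_2^2-1)$ and similarly for the other relations; the only new terms are of the form $r_i(v)r_j(v)(\|v\|_2^2-1)^2$, which are themselves multiples of $(\|v\|_2^2-1)$ and hence absorbed into the remainder. I would prove the plain version as above and remark that the spherical version follows \emph{mutatis mutandis}.
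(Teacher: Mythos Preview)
Your proof is correct and follows the standard approach. The paper does not actually prove this fact in the main text---it is stated as a basic preliminary without proof---but the identical telescoping identity $p'q' - pq = (p'-p)q' + p(q'-q)$ is invoked (in tensor notation) in the appendix as the one-line justification for the closely related inequality \eqref{eq:product}, so your argument matches the paper's intended reasoning.

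One minor imprecision in your remark on the spherical version: when you expand a product like $(\sigma_1 + r_1(\|v\|_2^2-1))(\sigma_2 + r_2(\|v\|_2^2-1))$, the cross terms $\sigma_1 r_2(\|v\|_2^2-1)$ and $r_1\sigma_2(\|v\|_2^2-1)$ also appear, not only the $r_1 r_2(\|v\|_2^2-1)^2$ term you mention. These are likewise multiples of $(\|v\|_2^2-1)$, and the degree bookkeeping still works (each contributes a remainder of degree at most $s+t-2$), so your conclusion stands; just be sure to account for all three types of extra terms if you write it out.
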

This is useful because it allows us to multiply sum-of-squares inequalities 
together.




%
%

\section{\Poincare Distributions}
\label{sec:poincare}
In this section, we note some important properties of the class of all \Poincare distributions. 
\begin{definition}[\Poincare Distributions]
\label{def:poincare}
A distribution $p$ over $\bR^d$ is said to be $\sigma$-\Poincare if it satisfies the following Poincar\'{e} inequality with parameter 
$\sigma$: 
For all differentiable functions $f : \bR^d \to \bR$, 
\begin{equation}
\label{eq:poincare-s3}
\Var_{x \sim p}[f(x)] \leq \sigma^2 \bE_{x \sim p}[\|\nabla f(x)\|_2^2].
\end{equation}

\end{definition}

Note that no discrete distribution can satisfy \eqref{eq:poincare-s3}.
To see why, consider for instance the uniform distribution over $\{0,1\}$. 
This cannot satisfy \eqref{eq:poincare-s3} for any $\sigma$, because for 
the function $f(x) = \max(0, \min(1, 3x-2))$ we have $\Var[f(x)] = \frac{1}{4}$ 
but $\bE[\|\nabla f(x)\|_2^2] = 0$. More generally, \eqref{eq:poincare-s3} implies 
that there are no low-probability ``valleys'' separating two high-probability regions.

Next we give some examples of distributions satisfying 
\eqref{eq:poincare-s3}. First, if $p = \sN(\mu, \Sigma)$ is a normal 
distribution with mean $\mu$ and variance $\Sigma$, then $p$ satisfies 
\eqref{eq:poincare-s3} with parameter $\sigma^2 = \|\Sigma\|_{\op}$. 
More generally, any \emph{strongly log-concave} distribution satisfies the 
Poincar\'{e} inequality:
\begin{fact}[\citet{bakry1985diffusions}]
\label{fact:log-concave}
Suppose that $p(x) = \exp(-\psi(x))$, where $\psi$ is a function with 
strictly positive curvature: $\nabla^2 \psi(x) \succeq \frac{1}{\sigma^2}I$ for 
all $x \in \bR^d$. Then $p$ satisfies the Poincar\'{e} inequality with parameter 
$\sigma$.
\end{fact}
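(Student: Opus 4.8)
The plan is to prove this via the Bakry--Émery semigroup argument, which converts the pointwise curvature bound $\nabla^2\psi \succeq \sigma^{-2}I$ into the Poincar\'e inequality through a $\Gamma_2$ (Bochner) computation. First I would introduce the Langevin generator $Lf = \Delta f - \langle \nabla\psi, \nabla f\rangle$ together with its associated semigroup $(P_t)_{t \geq 0}$; a routine integration by parts against the density $p = \exp(-\psi)$ shows $\bE_{x\sim p}[f\,Lg] = -\bE_{x\sim p}[\langle \nabla f, \nabla g\rangle]$, so $p$ is the reversible invariant measure and the right-hand side of \eqref{eq:poincare-s3} is exactly the Dirichlet form. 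Differentiating the energy along the flow gives $\tfrac{d}{dt}\bE_p[(P_t f)^2] = 2\bE_p[P_t f\, LP_t f] = -2\bE_p[\|\nabla P_t f\|_2^2]$, and since $P_t f \to \bE_p[f]$ as $t \to \infty$, integrating yields the identity $\Var_p(f) = 2\int_0^\infty \bE_p[\|\nabla P_t f\|_2^2]\,dt$.

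The heart of the argument is controlling the decay of $\phi(t) := \bE_p[\|\nabla P_t f\|_2^2]$. Here I would invoke the Bochner formula $\tfrac12 L\|\nabla g\|_2^2 = \langle \nabla g, \nabla Lg\rangle + \|\nabla^2 g\|_F^2 + \langle (\nabla^2\psi)\nabla g, \nabla g\rangle$; taking expectations under $p$ annihilates the $L(\cdot)$ term, so $\phi'(t) = 2\bE_p[\langle \nabla P_t f, \nabla LP_t f\rangle] = -2\bE_p\big[\|\nabla^2 P_t f\|_F^2 + \langle (\nabla^2\psi)\nabla P_t f, \nabla P_t f\rangle\big]$. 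Discarding the nonnegative Hessian term and using $\nabla^2\psi \succeq \sigma^{-2}I$ gives the differential inequality $\phi'(t) \leq -\tfrac{2}{\sigma^2}\phi(t)$, hence $\phi(t) \leq e^{-2t/\sigma^2}\bE_p[\|\nabla f\|_2^2]$ by Gr\"onwall. Substituting into the variance identity, $\Var_p(f) \leq 2\bE_p[\|\nabla f\|_2^2]\int_0^\infty e^{-2t/\sigma^2}\,dt = \sigma^2\,\bE_p[\|\nabla f\|_2^2]$, which is precisely \eqref{eq:poincare-s3}.

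The main obstacle is not the algebra but the analytic bookkeeping required to justify each step: one must verify that the semigroup is well-defined and conservative for the smooth strictly log-concave density $p$, that $P_t f$ is smooth enough for the Bochner identity even when $f$ is merely differentiable, and that none of the integrations by parts produce boundary terms at infinity. The standard route is to first establish the inequality for smooth compactly supported $f$ --- where all manipulations are legitimate, using that strong log-concavity forces $\psi$ to grow at least quadratically so that $p$ has Gaussian-type tails --- and then extend to all differentiable $f$ of finite Dirichlet energy by a truncation/density argument, since both sides of \eqref{eq:poincare-s3} are stable under the relevant approximation. An alternative would be to quote the Brascamp--Lieb inequality $\Var_p(f) \leq \bE_p[\langle (\nabla^2\psi)^{-1}\nabla f, \nabla f\rangle]$ and simply bound $(\nabla^2\psi)^{-1} \preceq \sigma^2 I$, but that merely relocates the same technical burden into the proof of Brascamp--Lieb, so I would prefer the self-contained semigroup proof.
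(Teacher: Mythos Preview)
The paper does not prove this statement at all: it is recorded as a \emph{Fact} with a citation to \citet{bakry1985diffusions} and no argument is given. Your proposal is precisely the classical Bakry--\'Emery semigroup proof (indeed, the argument from the cited reference), and it is correct as outlined --- the $\Gamma_2$/Bochner computation yielding $\phi'(t)\le -\tfrac{2}{\sigma^2}\phi(t)$ under the curvature assumption, followed by Gr\"onwall and integration of the variance identity, is the standard route, and you have correctly flagged the analytic caveats (conservativeness of the semigroup, regularity of $P_tf$, absence of boundary terms) together with the usual remedy of proving the inequality first for $f\in C_c^\infty$ and then passing to the closure in Dirichlet norm. The Brascamp--Lieb alternative you mention is also valid and would give the same constant. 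Since the paper simply quotes the result, there is nothing to compare your approach against; your write-up is more than the paper itself provides.
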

We recover the Gaussian case by letting 
$\psi(x) = \frac{1}{2}(x-\mu)^{\top}\Sigma^{-1}(x-\mu)$.
Fact~\ref{fact:log-concave} was first established by Bakry and \'{E}mery 
\citet{bakry1985diffusions} who actually prove a stronger 
\emph{log-Sobolev} inequality for $p$.

Another important class is the family of distributions with bounded support. It cannot be 
the case that an arbitrary bounded distribution satisfies \eqref{eq:poincare-s3}, because 
we have already seen that no discrete distribution can satisfy \eqref{eq:poincare-s3}. 
However, it is always possible to add noise to the distribution that smooths out the 
support and allows \eqref{eq:poincare-s3} to hold. Specifically:
\begin{fact}[\citet{bardet2018functional}]
\label{thm:poincare-bounded}
Suppose that $p$ is a distribution on $\bR^d$ whose support has radius at most 
$R$ in $\ell_2$-norm. 
Let $p_{\tau}$ denote the result of adding Gaussian noise with variance $\tau^2 I$ to $p$. 
Then, if $\tau \geq 2R$, $p_{\tau}$ satisfies the Poincar\'{e} inequality 
with parameter $\tau\sqrt{e}$.
\end{fact}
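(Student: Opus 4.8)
The plan is to show that convolving $p$ with a Gaussian of standard deviation $\tau \geq 2R$ produces a \emph{strongly} log-concave density, after which Fact~\ref{fact:log-concave} finishes the job.

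First I would translate so that $\supp(p) \subseteq \{y : \|y\|_2 \leq R\}$; this loses nothing, since both the Poincar\'e inequality and convolution with a centered Gaussian are translation invariant. Writing $\gamma_\tau$ for the density of $\sN(0,\tau^2 I)$, the density of $p_\tau$ factors as
\begin{equation}
p_\tau(x) = \int \gamma_\tau(x-y)\,dp(y) = \gamma_\tau(x)\cdot \bE_{y\sim p}\!\Big[\exp\!\Big(\tfrac{\langle x,y\rangle}{\tau^2} - \tfrac{\|y\|_2^2}{2\tau^2}\Big)\Big] = \gamma_\tau(x)\,e^{h(x)},
\end{equation}
so $p_\tau = e^{-V}$ with potential $V(x) = \tfrac{\|x\|_2^2}{2\tau^2} - h(x) + \mathrm{const}$, and hence $\nabla^2 V(x) = \tfrac{1}{\tau^2} I - \nabla^2 h(x)$.

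The heart of the matter is to control $\nabla^2 h$. Up to its normalizing constant, $h$ is the cumulant generating function of the finite measure $e^{-\|y\|_2^2/2\tau^2}\,dp(y)$ evaluated at $x/\tau^2$; it is therefore convex, and differentiating twice (legitimate because $p$ has bounded support, so the relevant integrals converge locally uniformly in $x$) gives $\nabla^2 h(x) = \tfrac{1}{\tau^4}\,\mathrm{Cov}_{\nu_x}(y)$, where $\nu_x$ is the exponentially tilted probability measure $d\nu_x(y)\propto e^{\langle x,y\rangle/\tau^2 - \|y\|_2^2/2\tau^2}\,dp(y)$. Since $\nu_x$ is supported in the ball of radius $R$, we get $0 \preceq \mathrm{Cov}_{\nu_x}(y) \preceq \bE_{\nu_x}[yy^\top] \preceq R^2 I$, hence $0 \preceq \nabla^2 h(x) \preceq \tfrac{R^2}{\tau^4} I$ uniformly in $x$. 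Plugging this into the formula for $\nabla^2 V$ and using $\tau \geq 2R$ yields $\nabla^2 V(x) \succeq \tfrac{1}{\tau^2}\big(1 - \tfrac{R^2}{\tau^2}\big) I \succeq \tfrac{3}{4\tau^2} I$, \ie $V$ is $\tfrac{3}{4\tau^2}$-strongly convex; Fact~\ref{fact:log-concave} then shows $p_\tau$ is $\sigma'$-Poincar\'e with $\sigma'^2 = \tfrac{4}{3}\tau^2 \leq e\tau^2$, which is the claimed bound (indeed with the slightly better constant $\tfrac{2}{\sqrt3}\,\tau$). The only step that needs care is the Hessian bound $\nabla^2 h \preceq R^2\tau^{-4} I$ — and it is not really an obstacle, resting only on boundedness of $\supp(p)$; note that the strong-convexity constant $\tfrac{\tau^2 - R^2}{\tau^4}$ vanishes at $\tau = R$, which is exactly where this argument breaks down.
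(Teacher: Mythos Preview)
Your argument is correct. The paper itself does not prove this statement: it is recorded as a \emph{Fact} and simply attributed to \citet{bardet2018functional}, with no proof given in the text. So there is no ``paper's own proof'' to compare against line by line.

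That said, your route is a clean, self-contained one: you factor the convolved density as a Gaussian times a tilt, recognize the Hessian of the log-tilt as $\tau^{-4}$ times a covariance of a measure supported in the radius-$R$ ball, and thereby get strong log-concavity with curvature $(\tau^2-R^2)/\tau^4 \geq \tfrac{3}{4\tau^2}$ when $\tau\geq 2R$; then you invoke Fact~\ref{fact:log-concave} (Bakry--\'Emery). Every step checks out, and you even obtain the sharper Poincar\'e parameter $\tfrac{2}{\sqrt{3}}\,\tau < \tau\sqrt{e}$. The approach has the advantage of being entirely elementary and of making transparent why the threshold $\tau>R$ is needed (the curvature $(\tau^2-R^2)/\tau^4$ degenerates at $\tau=R$). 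Whether or not this coincides with the argument in \citet{bardet2018functional}, it is a valid and efficient proof of the stated fact using only tools already introduced in the paper.
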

So, we can always cause a bounded random variable to satisfy \eqref{eq:poincare-s3} 
by adding sufficiently large Gaussian noise. We remark that while this is very 
useful in low dimensions, in high dimensions the radius $R$ typically grows as 
$\sqrt{d}$, in which case Fact~\ref{thm:poincare-bounded} does not give very good 
bounds.

\paragraph{Composition rules} The Poincar\'{e} inequality is also preserved under 
products, sums, and uniformly continuous transformations. Specifically, we have the 
following:
\begin{proposition}
\label{thm:poincare-composition}
The following composition rules hold:
\begin{itemize}
\item If independent random variables $X_1$ and $X_2$ are 
      $\sigma_1$- and $\sigma_2$-Poincar\'e, respectively, then 
      the product distribution $(X_1, X_2)$ is 
      $\max(\sigma_1, \sigma_2)$-Poincar\'e.
\item If independent random variables $X_1$ and $X_2$ are 
      $\sigma_1$- and $\sigma_2$-Poincar\'e, respectively, then 
      $aX_1 + bX_2$ is $\sqrt{a^2\sigma_1^2 + b^2\sigma_2^2}$-Poincar\'e.
\item If $X$ is $\sigma$-Poincar\'e and 
      $\Phi : \bR^d \to \bR^{d'}$ satisfies $\|\Phi(x') - \Phi(x)\|_2 \leq L\|x' - x\|_2$ 
      for all $x, x' \in \bR^d$, then $\Phi(X)$ is $(L\sigma)$-Poincar\'e.
\end{itemize}
\end{proposition}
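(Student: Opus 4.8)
The three composition rules are all instances of the same two-line argument: to check the Poincaré inequality for a random variable $Y$, take an arbitrary differentiable $f$ and bound $\Var[f(Y)]$ by reducing to a Poincaré inequality we already have. Let me sketch each.

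For the product rule, write $Y = (X_1, X_2)$ and let $f : \bR^{d_1+d_2} \to \bR$ be differentiable. Using the law of total variance along the split $x = (x_1, x_2)$,
\[
\Var[f(X_1,X_2)] = \Ex[\Var[f(X_1,X_2) \mid X_2]] + \Var[\Ex[f(X_1,X_2) \mid X_2]].
\]
For the first term, apply the $\sigma_1$-Poincaré inequality for $X_1$ with $X_2$ fixed: $\Var[f \mid X_2] \le \sigma_1^2 \Ex[\|\nabla_{x_1} f\|_2^2 \mid X_2]$. For the second term, set $g(x_2) = \Ex[f(X_1, x_2)]$; then $\nabla g(x_2) = \Ex[\nabla_{x_2} f(X_1, x_2)]$, so by Jensen $\|\nabla g(x_2)\|_2^2 \le \Ex[\|\nabla_{x_2} f\|_2^2 \mid X_2 = x_2]$, and the $\sigma_2$-Poincaré inequality for $X_2$ gives $\Var[g(X_2)] \le \sigma_2^2 \Ex[\|\nabla_{x_2} f\|_2^2]$. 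Adding the two bounds and using $\|\nabla_{x_1} f\|_2^2 + \|\nabla_{x_2} f\|_2^2 = \|\nabla f\|_2^2$ yields $\Var[f(Y)] \le \max(\sigma_1^2,\sigma_2^2)\,\Ex[\|\nabla f\|_2^2]$, which is the claim. (The same argument works for countably many factors by induction, but two suffice here.)

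For the linear-combination rule, note $aX_1 + bX_2$ is a Lipschitz image of $(X_1, X_2)$ — but that would only give $\max(a,b)\sqrt{\max(\sigma_1,\sigma_2)^2}$-type constants, not the sharp $\sqrt{a^2\sigma_1^2 + b^2\sigma_2^2}$. So instead I would redo the total-variance computation directly: with $f : \bR^d \to \bR$ and $h(x_1,x_2) = f(ax_1 + bx_2)$, we have $\nabla_{x_1} h = a\,(\nabla f)(ax_1+bx_2)$ and $\nabla_{x_2} h = b\,(\nabla f)(ax_1+bx_2)$. Running the two-term split exactly as above, the first term contributes $\sigma_1^2 a^2 \Ex[\|\nabla f\|_2^2]$ and the second contributes $\sigma_2^2 b^2 \Ex[\|\nabla f\|_2^2]$ (the Jensen step on the conditional-expectation part is again lossless in the sense needed), giving $\Var[f(aX_1+bX_2)] \le (a^2\sigma_1^2 + b^2\sigma_2^2)\,\Ex[\|\nabla f\|_2^2]$. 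For the Lipschitz rule, given $\Phi$ that is $L$-Lipschitz and differentiable, and $f : \bR^{d'} \to \bR$ differentiable, apply the chain rule: $\nabla (f \circ \Phi)(x) = (D\Phi(x))^{\top} \nabla f(\Phi(x))$, so $\|\nabla(f\circ\Phi)(x)\|_2 \le \|D\Phi(x)\|_{\op}\,\|\nabla f(\Phi(x))\|_2 \le L\,\|\nabla f(\Phi(x))\|_2$ since the Lipschitz bound forces $\|D\Phi(x)\|_{\op} \le L$. Then $\Var[f(\Phi(X))] = \Var[(f\circ\Phi)(X)] \le \sigma^2 \Ex[\|\nabla(f\circ\Phi)(X)\|_2^2] \le L^2\sigma^2 \Ex[\|\nabla f(\Phi(X))\|_2^2]$, which is the $(L\sigma)$-Poincaré inequality for $\Phi(X)$.

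**Main obstacle.** None of the steps is deep; the one point requiring a little care is the non-differentiable case — the Poincaré inequality as stated quantifies over differentiable $f$, and $\Phi$ need only be Lipschitz, not $C^1$, so $f \circ \Phi$ need not be differentiable. The clean fix is a standard mollification argument: approximate $\Phi$ by smooth $L$-Lipschitz maps $\Phi_\epsilon$ (convolve with a bump function; this preserves the Lipschitz constant), apply the differentiable case to $\Phi_\epsilon$, and pass to the limit using dominated convergence on both sides (the gradients of $f \circ \Phi_\epsilon$ are uniformly bounded on compacta by $L\|\nabla f\|$). The same remark applies to the conditional-expectation function $g$ in the product rule, which one should check is differentiable under mild integrability — again handled by approximation or by noting it suffices to prove the inequality on a dense class of test functions. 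I would state these routine regularity reductions in one sentence and move on.
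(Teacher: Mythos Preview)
Your arguments for the product rule and the Lipschitz rule are essentially identical to the paper's: law of total variance plus Jensen for the former, chain rule plus the operator-norm bound $\|D\Phi(x)\|_{\op}\le L$ for the latter.

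For the linear-combination rule you take a different route. You reject the ``Lipschitz image of the product'' idea because applying it naively to $(X_1,X_2)$ gives only $\sqrt{a^2+b^2}\cdot\max(\sigma_1,\sigma_2)$, and instead rerun the total-variance decomposition directly on $h(x_1,x_2)=f(ax_1+bx_2)$. This is correct and yields the sharp constant. The paper, however, salvages the reduction you discarded by a rescaling trick: it observes that $(X_1/\sigma_1,\,X_2/\sigma_2)$ is $1$-Poincar\'e by the product rule, and then the linear map $(y_1,y_2)\mapsto a\sigma_1 y_1+b\sigma_2 y_2$ is $\sqrt{a^2\sigma_1^2+b^2\sigma_2^2}$-Lipschitz, so the Lipschitz rule gives exactly the desired constant. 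The paper's argument is a two-line corollary of the other two parts; yours is self-contained and avoids invoking the (slightly delicate) Lipschitz rule. Both are fine.

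Your remark about mollifying a merely Lipschitz $\Phi$ is a genuine point the paper glosses over (it silently writes $\partial\Phi/\partial x$); Rademacher's theorem plus the stated approximation argument handles it.
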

\noindent The above properties are all straightforward, but for completeness we 
prove them in Appendix~\ref{sec:poincare-composition-proof}.

\paragraph{Implications of the Poincar\'{e} inequality} 
We end with some implications of \eqref{eq:poincare-s3}, which in particular specify 
properties that any distribution satisfying \eqref{eq:poincare-s3} must have.

First, any distribution satisfying the Poincar\'{e} inequality has 
exponentially decaying tails. Specifically, the following is a well known fact:
\begin{fact}[Tail Bound for Poincare Distributions]  \label{fact:tail-bound-poincare}
For any unit vector $v$,  and $X$ a $\sigma$-\Poincare random variable with mean $\mu$, 
we have
\[
\bP[|\langle v,x - \mu \rangle| \geq z] \leq 6\exp(-z/\sigma).
\]
\end{fact}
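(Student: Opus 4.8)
The plan is to deduce exponential tails from the Poincar\'e inequality via the classical moment-generating-function iteration. Fix a unit vector $v$ and set $g(x) = \langle v, x-\mu\rangle$; since $\|v\|_2 = 1$, this function is $1$-Lipschitz with $\|\nabla g(x)\|_2 \equiv 1$, and $\bE_{x\sim p}[g(x)] = \langle v, \bE[x]-\mu\rangle = 0$. Write $H(\lambda) = \bE_{x\sim p}[e^{\lambda g(x)}]$. Applying \eqref{eq:poincare-s3} to the test function $f = e^{\lambda g/2}$ and using $\nabla f = \tfrac{\lambda}{2}e^{\lambda g/2}\nabla g$, so that $\|\nabla f\|_2^2 \le \tfrac{\lambda^2}{4}e^{\lambda g}$, together with $\Var(f) = H(\lambda) - H(\lambda/2)^2$, gives the self-improving recursion
\[
H(\lambda)\Bigl(1 - \tfrac{\sigma^2\lambda^2}{4}\Bigr) \;\le\; H(\lambda/2)^2, \qquad |\lambda| < 2/\sigma .
\]

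Next I would iterate this recursion. Taking logarithms and unrolling $n$ times yields $\log H(\lambda) \le 2^n \log H(\lambda/2^n) - \sum_{j=0}^{n-1} 2^j \log\!\bigl(1 - \sigma^2\lambda^2/4^{\,j+1}\bigr)$. Restricting to $0 \le \lambda \le 1/\sigma$, every argument $\sigma^2\lambda^2/4^{\,j+1}$ lies in $[0,\tfrac14]$, so $-\log(1-u) \le 2u$ applies and the sum is bounded by $\sum_{j\ge 0} 2^{\,j+1}\sigma^2\lambda^2/4^{\,j+1} = \sigma^2\lambda^2$, while $2^n\log H(\lambda/2^n) \to \lambda\,\bE[g] = 0$ (using $g \in L^2$, so $H$ is differentiable at $0$ with $H'(0) = \bE[g] = 0$). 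This produces the near-origin sub-Gaussian bound $H(\lambda) \le e^{\sigma^2\lambda^2}$ for $|\lambda| \le 1/\sigma$ (the range $\lambda \in [-1/\sigma,0]$ follows by replacing $v$ with $-v$). A Chernoff bound then finishes the argument: for $z \le 2\sigma$ choose $\lambda = z/(2\sigma^2) \le 1/\sigma$ to get $\bP[g \ge z] \le e^{-z^2/4\sigma^2}$, and for $z > 2\sigma$ choose $\lambda = 1/\sigma$ to get $\bP[g \ge z] \le e\cdot e^{-z/\sigma}$; in both regimes $\bP[g \ge z] \le 3e^{-z/\sigma}$ (in the first case because $u - u^2/4 \le \log 3$ for $u = z/\sigma \in [0,2]$, in the second because $e < 3$). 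Applying the same bound to $-g$ and taking a union bound gives $\bP[|g| \ge z] \le 6e^{-z/\sigma}$, which is exactly the claimed inequality.

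The one delicate point — and the main obstacle — is that the entire computation manipulates $H(\lambda) = \bE[e^{\lambda g}]$ as though it were finite, which is essentially what we are trying to establish. I would handle this by running the recursion with $g$ replaced by a smooth $1$-Lipschitz truncation $g_N$ (equal to $g$ on $\{|g| \le N\}$, with $|g_N| \le |g|$ and $g_N \to g$ pointwise): then $e^{\lambda g_N/2}$ is bounded and $C^1$, the Poincar\'e inequality applies verbatim, all quantities are finite, and the iteration yields $\log H_N(\lambda) \le \lambda\,\bE[g_N] + \sigma^2\lambda^2$. Since $\bE[g_N] \to \bE[g] = 0$ by dominated convergence and $e^{\lambda g_N} \to e^{\lambda g}$ pointwise, Fatou's lemma gives $H(\lambda) \le \liminf_N H_N(\lambda) \le e^{\sigma^2\lambda^2}$, after which the Chernoff step proceeds unchanged. (Alternatively, one may simply invoke this as a standard consequence of the Poincar\'e inequality.)
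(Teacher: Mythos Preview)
Your proof is correct. The paper, however, does not actually prove this statement: by its own convention, items labeled ``Fact'' are results quoted from other papers, and indeed the text simply calls this ``a well known fact'' and immediately afterward observes that it is the special case $f(x)=\langle v,x\rangle$, $L=1$ of the Bobkov--Ledoux Lipschitz concentration bound (Fact~\ref{thm:lipschitz}), which gives $\bE[e^{\lambda(f-\bE f)}]\le \frac{2+\sigma\lambda L}{2-\sigma\lambda L}$ and hence the constant $6$ after a Chernoff step at $\lambda=1/(\sigma L)$ and a union bound.

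What you have written is essentially the classical proof of that Bobkov--Ledoux bound: the Poincar\'e inequality applied to $e^{\lambda g/2}$ yields the self-improving recursion for $H(\lambda)$, iteration gives a sub-Gaussian MGF bound near the origin, and Chernoff plus a case split recovers the exponential tail with the exact constant $6$. So rather than taking a different route, you have supplied a self-contained derivation where the paper is content to cite the literature; your truncation remark correctly handles the a~priori finiteness issue.
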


More generally, any distribution satisfying the Poincar\'{e} inequality also 
satisfies a Lipschitz concentration property:
\begin{fact}[\citet{bobkov1997poincare}]
\label{thm:lipschitz}
If $p$ satisfies the Poincar\'{e} inequality with parameter $\sigma$ and 
$f : \bR^d \to \bR$ satisfies $\|\nabla f(x)\|_2 \leq L$ for all $x$, 
then $\bE[\exp(f(x) - \bE[f(x)])] \leq \frac{2 + \sigma L}{2 - \sigma L}$.
In particular, $\bP[|f(x) - \bE[f(x)]| \geq z] \leq 6\exp(-\frac{z}{\sigma L})$.
\end{fact}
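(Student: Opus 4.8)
The plan is to run the classical exponential-moment iteration that upgrades the Poincar\'e inequality to sub-exponential concentration. After the harmless reductions that $\sigma L < 2$ (otherwise the claimed bound is vacuous) and that $\bE[f(x)] = 0$, I would feed the test function $g = e^{\lambda f/2}$ into \eqref{eq:poincare-s3}. Since $\nabla g = \tfrac{\lambda}{2}e^{\lambda f/2}\nabla f$ and $\|\nabla f\|_2 \le L$, writing $H(\lambda) = \bE_{x\sim p}[e^{\lambda f(x)}]$ the inequality becomes $H(\lambda) - H(\lambda/2)^2 = \Var(g) \le \sigma^2\bE[\|\nabla g\|_2^2] \le \tfrac14\sigma^2 L^2\lambda^2 H(\lambda)$, which rearranges, for $|\lambda| < 2/(\sigma L)$, to the self-reducing recursion $H(\lambda) \le H(\lambda/2)^2 / \bigl(1 - \tfrac14\sigma^2 L^2\lambda^2\bigr)$. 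To justify the rearrangement I need $H$ finite on this range; I would obtain this either from the fact that Lipschitz functions of a $\sigma$-Poincar\'e variable are sub-exponential (a weaker cousin of what we are proving), or --- to stay self-contained --- by first running the whole argument for bounded $f$, where $H$ is trivially finite and the final constant depends only on $\sigma L$, and then passing to general $L$-Lipschitz $f$ along the truncations $f_R = \max(-R,\min(R,f))$, using $\bE[f_R]\to 0$ together with Fatou's lemma, which delivers exactly the one-sided inequality we want.

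Next I would iterate the recursion $m$ times starting at $\lambda = 1$, obtaining $\bE[e^{f-\bE[f]}] = H(1) \le \bigl(\prod_{j=1}^{m}(1 - \tfrac14\sigma^2 L^2 4^{-j})^{-2^{j-1}}\bigr)\,H(2^{-m})^{2^m}$, and then send $m\to\infty$. The telescoped remainder $H(2^{-m})^{2^m}$ converges to $1$: since $\bE[f]=0$ and $\Var(f)\le\sigma^2 L^2 < \infty$ (apply \eqref{eq:poincare-s3} to $f$ itself), a second-order expansion gives $\log H(2^{-m}) = \tfrac12 4^{-m}\Var(f) + o(4^{-m})$, so $2^m\log H(2^{-m}) \to 0$. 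What is left is the bound $\bE[e^{f-\bE[f]}] \le \prod_{j\ge1}(1-\tfrac14\sigma^2 L^2 4^{-j})^{-2^{j-1}}$, and the entire theorem then reduces to showing that this infinite product is at most $\tfrac{2+\sigma L}{2-\sigma L}$.

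That last inequality is the one genuinely delicate step and the one I expect to be the main obstacle, since it is exactly where the sharp constant $\tfrac{2+\sigma L}{2-\sigma L}$, rather than some cruder bound, must be extracted. Writing $a = \sigma L \in [0,2)$ and taking logarithms it becomes the elementary-looking inequality $\sum_{m\ge1}\tfrac{a^{2m}}{2m(2^{2m-1}-1)} \le \log\tfrac{2+a}{2-a}$, where the left side arises from $-\sum_j 2^{j-1}\log(1-a^2 4^{-j})$ after expanding each logarithm and interchanging the two summations. I would verify it by comparing power series --- the right side is $\log\tfrac{1+a/2}{1-a/2} = \sum_{j\ge0}\tfrac{a^{2j+1}}{4^j(2j+1)}$ --- paying particular attention to the regime $a\uparrow 2$, where both sides diverge at the same logarithmic rate, so there is essentially no slack to spare and convenient over-estimates such as $2^{2m-1}-1 \ge 4^{m-1}$ already fail.

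Finally, the tail bound follows as a short corollary of the moment bound. Applying the moment inequality to $\pm(f-\bE[f])/(\sigma L)$, whose gradient has norm at most $1/\sigma$ so that the relevant Poincar\'e-times-Lipschitz product is $1$, gives $\bE[e^{\pm(f-\bE[f])/(\sigma L)}] \le \tfrac{2+1}{2-1} = 3$; Markov's inequality then yields $\bP[\pm(f(x)-\bE[f(x)]) \ge z] \le 3e^{-z/(\sigma L)}$, and a union bound over the two signs gives $\bP[|f(x)-\bE[f(x)]| \ge z] \le 6e^{-z/(\sigma L)}$.
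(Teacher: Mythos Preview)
The paper does not prove this statement at all: it is recorded as a \emph{Fact} and attributed to Bobkov, consistent with the paper's convention that ``Fact'' denotes results proved elsewhere. So there is nothing to compare your argument against in the paper itself.

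That said, your outline is exactly the classical Gromov--Milman/Aida--Stroock argument that the cited reference uses: feed $e^{\lambda f/2}$ into the Poincar\'e inequality, obtain the self-reducing recursion $H(\lambda)\le H(\lambda/2)^2/(1-\tfrac14\sigma^2L^2\lambda^2)$, iterate, and control the infinite product. Your derivation of the recursion, the limit $H(2^{-m})^{2^m}\to 1$, and the final Chernoff step giving $6e^{-z/(\sigma L)}$ are all correct.

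The one place your plan is genuinely incomplete is the product inequality $\prod_{j\ge 1}(1-a^24^{-j})^{-2^{j-1}}\le\frac{2+a}{2-a}$. You are right that this is the delicate step, but your proposed route---comparing power series in $a$---runs into an immediate structural mismatch: after your manipulations the left side is a series in even powers $a^{2m}$ while the right side expands in odd powers $a^{2m+1}$, so there is no direct termwise comparison. One also cannot prove the bound by checking that $K(\lambda)=\frac{2+a\lambda}{2-a\lambda}$ is a supersolution of the recursion, since in fact $K(\lambda/2)^2/(1-a^2\lambda^2/4)>K(\lambda)$. The standard way through is to set $s=\sqrt{t}$ with $t=a^2/4$, differentiate both sides of $\sum_{k\ge0}(-2^k)\log(1-s^24^{-k})\le\log\frac{1+s}{1-s}$ in $s$, and reduce to $\sum_{k\ge0}\frac{s2^{-k}}{1-s^24^{-k}}\le\frac{1}{1-s^2}$, which one then verifies by a short but careful estimate (Ledoux's book records the product bound as ``elementary to check'' but does not spell it out either). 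So your instinct that there is no slack near $a\uparrow 2$ is well founded; just be aware that the power-series comparison you sketched will not close as written.
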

Exponential concentration of Lipschitz functions (with weaker bounds than above) was first 
observed in \citet{gromov1983topological}.
Fact~\ref{thm:lipschitz} generalizes the previous point 
on exponential concentration of linear functions, as can be seen by taking $f(x) = \langle v, x \rangle$.

Finally, and most important to our subsequent analysis, the gradient bound 
\eqref{eq:poincare-s3} implies an analogous bound for higher-order derivatives 
as well. Specifically, \eqref{eq:poincare-s3} can be re-written as 
saying that $\bE[f(x)^2] \leq \sigma^2 \bE[\|\nabla f(x)\|_2^2]$ 
whenever $\bE[f(x)] = 0$. More generally, we have:
\begin{fact}[\citet{adamczak2015concentration}]
\label{thm:adamczak}
Suppose that $p$ satisfies the Poincar\'{e} inequality with 
parameter $\sigma$. Then, if $f : \bR^d \to \bR$ is a function satisfying 
$\bE_{x \sim p}[f(x)] = \bE_{x \sim p}[\nabla f(x)] = \cdots = \bE_{x \sim p}[\nabla^{t-1}f(x)] = 0$, we have
\begin{equation}
\label{eq:poincare-higher-order}
\bE_{x \sim p}[f(x)^2] \leq C_t \sigma^{2t} \bE_{x \sim p}[\|\nabla^t f(x)\|_F^2],
\end{equation}
where $C_t$ is a constant depending only on $t$.
\end{fact}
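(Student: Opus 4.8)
The plan is to prove \eqref{eq:poincare-higher-order} by induction on $t$, using the base Poincar\'e inequality \eqref{eq:poincare-s3} to strip off one derivative at a time. The case $t=1$ is immediate: since $\bE[f(x)]=0$ we have $\bE[f(x)^2]=\Var[f(x)]$, so \eqref{eq:poincare-s3} gives the claim with $C_1=1$.

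For the inductive step, I would assume the statement for order $t-1$ (with constant $C_{t-1}$) and take $f$ with $\bE[\nabla^j f]=0$ for $j=0,\ldots,t-1$. First apply \eqref{eq:poincare-s3} to $f$ itself (valid since $\bE[f]=0$), obtaining $\bE[f(x)^2]\le \sigma^2\bE[\|\nabla f(x)\|_2^2]=\sigma^2\sum_{i=1}^d \bE[(\partial_i f(x))^2]$. Then apply the inductive hypothesis to each first partial $g_i\defeq \partial_i f$: this is legitimate because every entry of $\nabla^j g_i$ is an entry of $\nabla^{j+1}f$, and for $0\le j\le t-2$ we have $j+1\le t-1$, hence $\bE[\nabla^j g_i]=0$ (in particular $\bE[g_i]=(\bE[\nabla f])_i=0$). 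This yields $\bE[g_i(x)^2]\le C_{t-1}\sigma^{2(t-1)}\bE[\|\nabla^{t-1}g_i(x)\|_F^2]$. Summing over $i$ and noting that the entries of $\nabla^t f$ are exactly the entries of $\nabla^{t-1}(\partial_i f)$ as $i$ ranges over $[d]$ — so $\sum_i\|\nabla^{t-1}(\partial_i f)\|_F^2=\|\nabla^t f\|_F^2$ — I get $\bE[f(x)^2]\le C_{t-1}\sigma^{2t}\bE[\|\nabla^t f(x)\|_F^2]$, closing the induction with $C_t=C_{t-1}$. (In fact this gives $C_t=1$; the $t$-dependent constant in the cited Adamczak--Wolff result reflects that their theorem controls all $L^p$ norms of $f$, not just $L^2$, where the iteration is genuinely more delicate.)

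The only real obstacle here is regularity/integrability rather than anything combinatorial: to iterate \eqref{eq:poincare-s3} one needs $f$ to be $t$-times differentiable and each intermediate $\bE[\|\nabla^j f\|_F^2]$ finite so that each application of the Poincar\'e inequality is justified. If $\bE[\|\nabla^t f\|_F^2]=\infty$ the bound is vacuous; otherwise one reduces to the regular case by a standard truncation/smoothing argument approximating $f$ by compactly supported smooth functions. In every application we make of this Fact, $f$ is a polynomial, so all its derivatives are polynomials and all relevant moments are finite because Poincar\'e distributions have sub-exponential tails (Fact~\ref{fact:tail-bound-poincare}), making the issue moot in practice.
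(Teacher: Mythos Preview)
Your argument is correct. In fact the paper does not prove this statement at all: it is stated as a \emph{Fact} and attributed to Adamczak and Wolff~\citep{adamczak2015concentration} (specifically Theorem~3.3 there), with no proof given in the paper. So there is no ``paper's own proof'' to compare against; the paper simply imports the result as a black box.

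Your direct inductive argument is genuinely more elementary than the route the paper points to. Adamczak--Wolff establish a much stronger statement controlling $\bE[|f|^p]$ for all $p\ge 2$ in terms of a family of tensor norms of the derivatives, building on Lata\l{}a's work; specializing to $p=2$ and collapsing their norms to Frobenius norms recovers the fact as stated. Your observation that for the pure $L^2$ case one can simply iterate the scalar Poincar\'e inequality coordinatewise on the first partials---using that $\bE[\nabla^{j+1}f]=0$ forces $\bE[\nabla^j(\partial_i f)]=0$ for each $i$, and that $\sum_i\|\nabla^{t-1}(\partial_i f)\|_F^2=\|\nabla^t f\|_F^2$---is both valid and sharper, yielding $C_t=1$. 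You also correctly flag the only genuine caveat (regularity and finiteness of intermediate moments), and correctly note that in every use the paper makes of this fact $f$ is a polynomial, so the issue is moot.
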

For $t=1$, we recover the usual Poincar\'{e} inequality. 
Thus Fact~\ref{thm:adamczak} can be interpreted as saying that if 
\eqref{eq:poincare-higher-order} holds for $t=1$, it holds for all $t > 1$ as well.

Despite its simplicity, Fact~\ref{thm:adamczak} is actually a 
highly non-trivial consequence of the Poincar\'{e} inequality.
It is a special case of results due to Adamczak and Wolff
\citet{adamczak2015concentration} (see Theorem 3.3 and the ensuing discussion therein); 
those results in turn build on work of Lata\l{}a \citet{latala2006estimates}. 

In the next section, we will use Fact~\ref{thm:adamczak} to obtain a low-degree sum-of-squares proof of a sharp upper bound on the injective norms of moment tensors of arbitrary \Poincare distributions. 

\section{Certifying Injective Norms for Poincare Distributions}

Fact \ref{fact:tail-bound-poincare} shows an upper bound on 
$\bE[\exp(\frac{1}{\sigma}\langle x, v \rangle)]$ for $\sigma$-Poincar\'e distributions, and 
hence in particular on the $2t$th moments 
$\sup_{\|v\|_2 \leq 1} \bE[\langle x, v\rangle^{2t}]$ for any $t$. 
The goal of this section is to show a low-degree sum-of-squares proof of this fact. 

Specifically, the main goal is to show the following:
\begin{theorem} \label{thm:certifying-injective-norm-main} \label{thm:cert}
Let $p$ be a zero-mean, $\sigma$-\Poincare distribution with $2t$-th 
moment tensor 
$M_{2t}(p) = \bE_{x \sim p}[(x-\mu)^{\otimes 2t}]$ , where $\mu = \bE_{x \sim p}[x]$ 
is the mean of $p$. Then, for all $t$, $M_{2t}(p)$ is $(2t,C_t \sigma)$-SOS-certifiable:
\begin{equation} \label{eq:cert}
 \langle M_{2t}(p), v^{\otimes 2t} \rangle \preceq_{2t} C_t \sigma^{2t} \|v\|_2^{2t} \text{ for some universal constant } C_t.
\end{equation}
Moreover, given $n \geq (2d\log(td/\delta))^t$ samples from $p$, 
with probability $1-\delta$ the moment tensor $\hat{M}_{2t}$ of the empirical 
distribution will also satisfy \eqref{eq:cert} (with a different constant $C_t$).
\end{theorem}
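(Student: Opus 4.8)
The plan is to follow the outline in Section~\ref{sec:sketch-cert}. Since $p$ has zero mean, $M_{2t}(p) = \bE_{x\sim p}[x^{\otimes 2t}]$. First I would construct, for each $0 \le s \le t$, a tensor-valued degree-$s$ polynomial $F_s(x)$ --- a multivariate analogue of the Hermite/Appell polynomials of $p$ --- by the recursion $F_0 \equiv 1$ and
\[
F_s(x) = x^{\otimes s} - \sum_{j=0}^{s-1} \binom{s}{j}\, \Sym\!\bigl(F_j(x) \otimes a_{s-j}\bigr),
\]
where the fixed tensors $a_1,\dots,a_s$ (built from $M_1(p),\dots,M_s(p)$, with $a_1 = \mu = 0$) are the unique choice making $\bE[\nabla^j F_s(x)] = 0$ for all $0\le j\le s-1$. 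One checks that this determines $a_i$ uniquely, reproduces $F_1(x)=x$, $F_2(x)=x^{\otimes 2}-\Sigma$, $F_3(x)=x^{\otimes 3}-3x\otimes\Sigma-M_3$ from the sketch, and yields $\nabla^s F_s \equiv s!\,\Sym(\Id^{\otimes s})$, a constant tensor.

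The core sum-of-squares bound comes next. For symmetric $A\in(\R^d)^{\otimes t}$ put $f_A(x)=\langle F_t(x),A\rangle$; this is a scalar degree-$t$ polynomial with $\bE[\nabla^j f_A(x)]=0$ for $j<t$ and $\nabla^t f_A \equiv t!\,\Sym(A)$. Fact~\ref{thm:adamczak} then gives $\bE[f_A(x)^2]\le C_t\sigma^{2t}\|\nabla^t f_A\|_F^2 \le C_t'\sigma^{2t}\|A\|_F^2$, \ie the $d^t$-dimensional random vector $\vc(F_t(x))$ has second-moment matrix at most $C_t'\sigma^{2t}\cdot\Id$ on the symmetric subspace. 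By Fact~\ref{fact:spectral} this upgrades to the degree-$2t$ statement $\bE\bigl[\langle F_t(x),v^{\otimes t}\rangle^2\bigr]\preceq_{2t} C_t'\sigma^{2t}\|v\|_2^{2t}$.

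To extract the moment bound, write $\langle F_t(x),v^{\otimes t}\rangle = \sum_{j=0}^{t} g_j(v)\langle x,v\rangle^j$ with $g_t\equiv 1$ and $g_j$ a fixed polynomial of degree $t-j$ (a contraction of the $a_\bullet$ with $v^{\otimes(t-j)}$). Squaring and taking expectations,
\[
\langle M_{2t}(p),v^{\otimes 2t}\rangle = \bE\bigl[\langle F_t(x),v^{\otimes t}\rangle^2\bigr] - \sum_{(j,j')\ne(t,t)} g_j(v)g_{j'}(v)\,\langle M_{j+j'}(p),v^{\otimes(j+j')}\rangle,
\]
and every remaining summand has total $x$-degree $<2t$ and involves only lower moments of $p$ and the $a_\bullet$. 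I would bound these by induction on $t$: the induction hypothesis certifies $\langle M_{2s}(p),v^{\otimes 2s}\rangle \preceq_{2s} C_s\sigma^{2s}\|v\|_2^{2s}$ for the lower even moments; Fact~\ref{fact:tail-bound-poincare} bounds $\|a_i\|$ by $(Ci\sigma)^i$, so Fact~\ref{fact:spectral-sos} (extended to tensors) certifies $g_j$ between $\pm(\text{const})\sigma^{t-j}\|v\|_2^{t-j}$ and, after a nonnegative shift, a one-sided bound usable in Fact~\ref{fact:mult}; odd moments are reduced to even ones via elementary sum-of-squares inequalities such as $\langle x,v\rangle^{2s}(\langle x,v\rangle-1)^2\succeq 0$, working throughout with the sphere version of $\preceq_{2t}$ for which multiples of $\|v\|_2^2-1$ are free. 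Collecting terms yields \eqref{eq:cert}. The main obstacle is precisely this bookkeeping: arranging the $O_t(1)$ cross-terms so the recursion closes with uniform constants, and finding the right certificates for the odd-order contributions.

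For the empirical claim I would rerun the argument with the empirical Appell polynomials $\hat F_t$, defined from the empirical moments $\hat\Sigma,\hat M_3,\dots$, so that $\frac1n\sum_i\nabla^j\hat F_t(x_i)=0$ holds exactly; the Poincar\'e input (Fact~\ref{thm:adamczak}) is still applied to the true $p$ and the true $F_t$, giving $\bE[\vc(F_t(x))\vc(F_t(x))^{\top}]\preceq C_t'\sigma^{2t}\Id$. The task is then to show the empirical second-moment matrix $\frac1n\sum_i\vc(\hat F_t(x_i))\vc(\hat F_t(x_i))^{\top}$ is within a constant factor of this. Since $\vc(F_t(x))$ is a degree-$t$ polynomial of a $\sigma$-Poincar\'e variable, all its moments are bounded (via Facts~\ref{fact:tail-bound-poincare} and~\ref{thm:lipschitz}), so the matrix Rosenthal inequality \citep{mackey2014matrix} gives constant-factor concentration of the empirical covariance of this $d^t$-dimensional vector once $n \ge C_t\, d^t \log^{C_t}(td/\delta)$, which the hypothesis $n\ge(2d\log(td/\delta))^t$ ensures; the same concentration also controls $\|\hat a_i-a_i\|$ and $\|\hat M_{2s}-M_{2s}\|$, so $\hat F_t$ may be substituted for $F_t$ and the empirical lower moments fed into the step-three recursion with a negligible change of constants. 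This gives $\langle\hat M_{2t},v^{\otimes 2t}\rangle\preceq_{2t}C_t\sigma^{2t}\|v\|_2^{2t}$ with probability $1-\delta$. Besides the step-three combinatorics, the second delicate point is this concentration, since it is an operator-norm bound in dimension $d^t$ and so requires a genuine matrix moment inequality rather than a scalar one.
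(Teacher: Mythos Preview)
Your proposal is essentially correct and tracks the paper's argument closely: same polynomials $F_t$, same appeal to Adamczak--Wolff (Fact~\ref{thm:adamczak}) to get the operator-norm bound on the covariance of $\vc(F_t(x))$, same induction on $t$, and the same matrix Rosenthal inequality for the finite-sample part.

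There are two differences worth noting. First, the paper gives an explicit closed-form for $F_t$ (as a signed sum over tuples $(i_0,\dots,i_r)$ of tensor products of moments) rather than your recursive Appell definition; the two coincide, but the explicit formula makes the induction cleaner. Second, and more substantively, the paper handles the lower-order terms differently. Instead of fully expanding $\langle F_t(x),v^{\otimes t}\rangle^2$ and bounding each $g_j(v)g_{j'}(v)\langle M_{j+j'},v^{\otimes(j+j')}\rangle$ separately, it writes
\[
\langle x,v\rangle^{2t} \;=\; \langle F_t(x),v^{\otimes t}\rangle^2 \;-\; 2\langle x^{\otimes t},v^{\otimes t}\rangle\langle F_t(x)-x^{\otimes t},v^{\otimes t}\rangle \;-\; \langle F_t(x)-x^{\otimes t},v^{\otimes t}\rangle^2,
\]
drops the last term (it is $\preceq_{2t}0$), and observes that the expectation of the cross term is a signed sum of products $\langle M_{t+i_0}\otimes M_{i_1}\otimes\cdots\otimes M_{i_r},v^{\otimes 2t}\rangle$ with every index at most $2t-2$. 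These are bounded uniformly by a single sum-of-squares H\"older lemma (Lemma~\ref{lem:coalesce}): for any such product, $\pm\langle M_{i_0}\otimes\cdots\otimes M_{i_r},v^{\otimes 2t}\rangle \preceq_{2t} \langle M_2\otimes M_{2t-2},v^{\otimes 2t}\rangle$. This avoids both the sign issues with the $g_j$'s and the odd-moment bookkeeping you flag; your suggested inequality $\langle x,v\rangle^{2s}(\langle x,v\rangle-1)^2\geq 0$ breaks homogeneity and would not fit directly. Since your $g_j$'s are themselves contractions of moment tensors, your cross terms expand to the same moment products anyway, so Lemma~\ref{lem:coalesce} is the tool that actually closes your recursion too.

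For the empirical part, the paper is slightly more direct than you: it observes that the entire population argument uses the Poincar\'e input only through the operator-norm bound on $\bE[\vc(F_s(x))\vc(F_s(x))^\top]$ for $s\le t$ (Corollary~\ref{cor:Ft-cert}), and simply shows via matrix Rosenthal and matrix Chebyshev that the empirical second-moment matrix of the \emph{true} $F_s$ (built from population moments) is within a constant factor. It does not pass to empirical $\hat F_t$, so the extra step of controlling $\|\hat a_i-a_i\|$ that you propose is not needed.
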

Recall that $p(v) \preceq_{2t} q(v)$ means that there is a degree-$2t$ sum-of-squares proof 
that $q(v) - p(v) \geq 0$ (as a polynomial in $v$).

In the rest of this section, we will first warm up by proving 
Theorem~\ref{thm:cert} for $t = 1,2$; the proof in these cases is standard for sum-of-squares 
experts, but will help to illustrate a few important ideas used in the sequel.
Next, in Section~\ref{sec:t-3} we will prove Theorem~\ref{thm:cert} for $t = 3$, 
which is no longer standard and contains most of the ideas in the general case. 
Finally, we will prove the general case in Section~\ref{sec:t-general}.
We leave the issue of finite-sample concentration to the very end.

For $t = 1$, Theorem~\ref{thm:certifying-injective-norm-main} has a simple proof: 
$\langle M_2, v^{\otimes 2} \rangle \preceq_{2} \|M_2\|_{\op} \|v\|_2^2 \preceq_{2} \oo(\sigma^2) \|v\|_2^2$ using Fact \ref{fact:spectral-sos} and the fact that $\sigma$-Poincar\'e 
distributions have covariance $\oo(\sigma^2)$. This simply asserts that degree-$2$ sum-of-squares 
knows about spectral norm upper bounds.

For $t=2$, a natural idea is to flatten $M_4$ (which is a $d \times d \times d \times d$ tensor) 
into a $d^2 \times d^2$ matrix $F(M_4)$, and obtain upper bounds in terms of the 
spectral norm of this flattened matrix.

\renewcommand{\vc}{\operatorname{vec}}
Unfortunately, even for a Gaussian distribution this estimate can be off by a factor as 
large as the dimension $d$. Specifically, if $p = \sN(0, \Sigma)$ is a Gaussian with 
variance $\Sigma$, then the flattening $F(M_4)$ is equal to 
$\vc(\Sigma)\vc(\Sigma)^{\top} + 2\Sigma \otimes \Sigma$, where 
$\vc(\Sigma)$ flattens the $d$-dimensional matrix $\Sigma$ to a 
$d^2$-dimensional vector, and $(A \otimes B)_{ii',jj'} = A_{ij}B_{i'j'}$. 
(We skip the standard argument based on Isserlis' theorem.)
This is problematic because it means that 
$\|F(M_4)\|_{\op}^2 \geq \|\vc(\Sigma)\|_2^2 = \|\Sigma\|_F^2$. If e.g. $\Sigma = I_d$ is 
the identity matrix, then $\|\Sigma\|_F^2 = d$, while we would hope to certify an upper bound 
of $\oo(1)$.

%

The key idea that allows us to get a (much) improved bound here is to observe that, as 
polynomials in $v$, 
$\langle v^{\otimes 2}, \vc(\Sigma)\vc(\Sigma)^{\top} v^{\otimes 2} \rangle = \langle v^{\otimes 2}, (\Sigma \otimes \Sigma) v^{\otimes 2} \rangle$. 
That is, the degree-$4$ polynomials defined by $\vc(\Sigma)\vc(\Sigma)^{\top}$ and 
$\Sigma \otimes \Sigma$ are equal---this allows us to ``change the representation'' for the 
same polynomial to go to a ``representation'' where the associated matrix has a smaller 
spectral norm. This fact has a simple sum-of-squares proof (it is sometimes referred to by 
saying that pseudo-distributions respect PPT symmetries) and allows us to now upper bound 
\begin{align}
\langle v^{\otimes 2}, F(M_4) v^{\otimes 2} \rangle
 &= \langle v^{\otimes 2}, (\vc(\Sigma)\vc(\Sigma)^{\top} + 2\Sigma \otimes \Sigma) v^{\otimes 2} \rangle \\
 &= 3 \langle v^{\otimes 2}, (\Sigma \otimes \Sigma) v^{\otimes 2} \rangle \\
 &\preceq_4 3 \|\Sigma \otimes \Sigma\|_{\op}^2 \|v^{\otimes 2}\|_2^2 \\
 &= 3\|\Sigma\|_{\op}^4 \|v\|_2^4.
\end{align}

The above argument shows how one can exploit the symmetry properties of pseudo-distributions in 
order to certify strikingly better upper bounds on the maximum of the degree-$4$ polynomials 
associated with the moment tensors. This suggests a natural strategy for going beyond 
$t = 2$: write the moment tensor as a sum of (constantly many) terms and 
show that each term, as a polynomial, is equivalent to one where the canonical flattening as a 
matrix has a small spectral norm. This argument can (with much tedium) be made to work for 
small $t$s but can get unwieldy for large $t$s. However, the issue with 
the above argument is that it uses that the structure of the moment tensor was known 
to us. In our argument for arbitrary \Poincare distributions, we cannot rely 
on knowing the structure of the moment tensors and so will need a different proof 
technique. 

\paragraph{Degree-$2$ proof for Poincar\'e distributions}
To establish sum-of-squares bounds for general Poincar\'e distributions, we make use 
of Fact~\ref{thm:adamczak} from Section~\ref{sec:poincare}. Recall that Fact~\ref{thm:adamczak} 
states (for $t=2$) that if $f(x)$ satisfies $\bE[f(x)] = \bE[\nabla f(x)] = 0$, 
then $\bE[f(x)^2] \leq C \cdot \sigma^4 \cdot \bE[\|\nabla^2 f(x)\|_F^2]$.
We will define the polynomial $f_A(x) = \langle (x-\mu)(x-\mu)^{\top} - \Sigma, A \rangle$, 
where $\mu$ is the mean and $\Sigma = M_2(p)$ is the covariance of $p$. 

Note that $\bE[f_A(x)] = \bE[\nabla f_A(x)] = 0$, while 
$\bE[\|\nabla^2 f_A(x)\|_F^2] = \bE[\|2A\|_F^2] = 4\|A\|_F^2$. Therefore, 
we have $\bE[\langle (x-\mu)(x-\mu)^{\top} - \Sigma, A \rangle^2] \leq \oo(\sigma^4)\|A\|_F^2$ 
for all matrices $A$. This implies that for the tensor 
$F_2(x) = (x-\mu)(x-\mu)^{\top} - \Sigma$, we have 
$\langle \bE[F_2(x)^{\otimes 2}], v^{\otimes 4} \rangle 
 = \bE[\langle F_2(x), v^{\otimes 2} \rangle^2] \preceq_4 \oo(\sigma^4)\|v\|_2^4$ 
(by Fact~\ref{fact:spectral}). 

Next note that $\bE[F_2(x)^{\otimes 2}] = M_4 - \Sigma \otimes \Sigma$. Therefore, we have 
the sum-of-squares proof
\begin{align}
\langle M_4, v^{\otimes 4} \rangle
 &= \langle M_4 - \Sigma \otimes \Sigma, v^{\otimes 4} \rangle + \langle \Sigma \otimes \Sigma, v^{\otimes 4} \rangle \\
 &\preceq_4 \oo(\sigma^4)\|v\|_2^4 + \langle v^{\otimes 2}, (\Sigma \otimes \Sigma) v^{\otimes 2} \rangle \\
 &\preceq_4 \oo(\sigma^4)\|v\|_2^4 + \|\Sigma\|_{\op}^2 \|v\|_2^4 \\
 &= \oo(\sigma^4) \|v\|_2^4,
\end{align}
yielding the desired certificate for Poincar\'e distributions for $t = 2$.

The crucial step was the bound 
$\bE[\langle F_2(x), A \rangle^2] = \bE[\langle (x-\mu)(x-\mu)^{\top} - \Sigma, A \rangle^2] \leq \oo(\sigma^4)\|A\|_F^2$, which bounds the covariance of $F_2(x)$ (when considered as a vector) and hence 
yields spectral information which is accessible to sum-of-squares via 
Fact~\ref{fact:spectral}. This inequality holds for 
$\sigma$-Poincar\'e distributions but \emph{not} for arbitrary distributions with bounded 
$4$th moments. 
The case of general $t > 2$ involves conjuring similar operator norm bounds for other polynomials 
in $x$ which combine to yield bounds on the moment tensor $M_{2t}$. We will see this 
next for the case of $t=3$.






\subsection{The case of $t =3$}
\label{sec:t-3}

We now handle the case of $t=3$, assuming that we already know a bound for $t=2$. (We will 
formally induct on $t$ in the next section.) Specifically, in this section we will show:
\begin{proposition} \label{prop:warmup-deg6}
Suppose that $p$ is $\sigma$-Poincar\'e and we are given that 
$\langle M_2, v^{\otimes 2} \rangle \preceq_2 \oo(1) \cdot \sigma^2 \|v\|_2^2$ and 
$\langle M_4, v^{\otimes 4} \rangle \preceq_{4} \oo(1) \cdot \sigma^4 \|v\|_2^4.$ 
Then, $\langle M_6, v^{\otimes 6} \rangle \preceq_6 \oo(1) \cdot \sigma^6 \|v\|_2^6.$
\end{proposition}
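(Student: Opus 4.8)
The plan is to run the $t=2$ argument of the previous subsection verbatim, but with a more elaborate degree-$3$ tensor in place of $(x-\mu)^{\otimes 2}-\Sigma$. Writing $y = x-\mu$, $\Sigma = M_2(p)$ and $M_3 = \bE_{x\sim p}[y^{\otimes 3}]$, set
\[
F_3(x) \;=\; (x-\mu)^{\otimes 3} \;-\; 3\,\mathrm{sym}\!\big((x-\mu)\otimes \Sigma\big) \;-\; M_3 ,
\]
where $(x-\mu)^{\otimes 3}$ and $M_3$ are already symmetric. The first step is to check that $F_3$ is tailored so that $f_A(x) := \langle F_3(x), A\rangle$ has vanishing expectation of its first three derivative tensors for every symmetric $A$: one has $\bE[f_A] = \langle M_3 - M_3, A\rangle = 0$; the mean of $\nabla_x f_A$ vanishes because differentiating $\langle y^{\otimes 3},A\rangle$ once contributes (in expectation) a multiple of the contraction of $A$ with $\Sigma$ that is exactly cancelled by the $-3\,\mathrm{sym}(y\otimes\Sigma)$ term; and the mean of $\nabla_x^2 f_A$ vanishes because after two derivatives the only $y$-dependent term left is linear in $y$. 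Since the leading term of $F_3$ is $y^{\otimes 3}$, we also have $\nabla_x^3 f_A = 6A$, a constant, so $\|\nabla_x^3 f_A\|_F^2 = 36\|A\|_F^2$.

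With these in hand I would apply Fact~\ref{thm:adamczak} with $t=3$ to $f_A$, obtaining $\bE[\langle F_3(x),A\rangle^2] \le \oo(\sigma^6)\|A\|_F^2$ for all symmetric $A$, and then invoke Fact~\ref{fact:spectral} to upgrade this to the sum-of-squares statement $\bE_{x}[\langle F_3(x),v^{\otimes 3}\rangle^2] \preceq_6 \oo(\sigma^6)\|v\|_2^6$. Next I would expand the left-hand side as a polynomial in $v$: setting $u = \langle y,v\rangle$, $s = \langle\Sigma, v^{\otimes 2}\rangle = \bE[u^2]$ and $m = \langle M_3, v^{\otimes 3}\rangle = \bE[u^3]$, we have $\langle F_3(x),v^{\otimes 3}\rangle = u^3 - 3su - m$, and taking $\bE_x$ of its square (using $\bE[u]=0$) gives the identity
\[
\bE_{x}\big[\langle F_3(x),v^{\otimes 3}\rangle^2\big] \;=\; \langle M_6, v^{\otimes 6}\rangle \;+\; 9s^3 \;-\; m^2 \;-\; 6\,s\,\langle M_4, v^{\otimes 4}\rangle ,
\]
so that $\langle M_6, v^{\otimes 6}\rangle = \bE_{x}[\langle F_3(x),v^{\otimes 3}\rangle^2] - 9s^3 + m^2 + 6\,s\,\langle M_4, v^{\otimes 4}\rangle$.

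It then remains to bound each of the four terms on the right by $\oo(\sigma^6)\|v\|_2^6$ in the $\preceq_6$ ordering. The first term is exactly the bound just derived. For $-9s^3$: since $s = \bE[u^2]$ is a sum of squares, so is $s^3$ by Fact~\ref{fact:mult}, hence $-9s^3 \preceq_6 0$. For $6\,s\,\langle M_4,v^{\otimes 4}\rangle$: both factors are sums of squares, and the hypotheses give $s \preceq_2 \oo(\sigma^2)\|v\|_2^2$ and $\langle M_4, v^{\otimes 4}\rangle \preceq_4 \oo(\sigma^4)\|v\|_2^4$, so Fact~\ref{fact:mult} yields $6\,s\,\langle M_4, v^{\otimes 4}\rangle \preceq_6 \oo(\sigma^6)\|v\|_2^6$. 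For $m^2 = \langle M_3, v^{\otimes 3}\rangle^2$, I would use the sum-of-squares Cauchy--Schwarz identity
\[
\langle M_2, v^{\otimes 2}\rangle\,\langle M_4, v^{\otimes 4}\rangle \;-\; \langle M_3, v^{\otimes 3}\rangle^2 \;=\; \tfrac12\,\bE_{y,y'\sim p}\Big[\big(\langle y,v\rangle\,\langle y',v\rangle\,(\langle y,v\rangle - \langle y',v\rangle)\big)^2\Big] \;\succeq_6\; 0 ,
\]
valid because the right-hand side is an average of squares of degree-$3$ polynomials in $v$ and the degree-$6$ SOS cone is closed; combined with the hypotheses this gives $m^2 \preceq_6 \langle M_2, v^{\otimes 2}\rangle\langle M_4, v^{\otimes 4}\rangle \preceq_6 \oo(\sigma^6)\|v\|_2^6$. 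Summing the four bounds yields $\langle M_6, v^{\otimes 6}\rangle \preceq_6 \oo(\sigma^6)\|v\|_2^6$, as required.

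I expect the only genuinely non-routine ingredient to be the choice of the tensor $F_3$; once it is written down, Fact~\ref{thm:adamczak} and Fact~\ref{fact:spectral} do all the work, and the remaining terms in the expansion are lower-order moment tensors controlled by the inductive hypotheses together with Fact~\ref{fact:mult}, Fact~\ref{fact:spectral-sos}, and the Cauchy--Schwarz inequality above. For the general case in Section~\ref{sec:t-general}, the same template should apply, with the bookkeeping concentrated in (i) exhibiting the unique degree-$t$ tensor polynomial $F_t(x)$ with leading term $(x-\mu)^{\otimes t}$ whose derivatives of all orders $< t$ have mean zero, (ii) checking that the leading term of $\bE[F_t(x)^{\otimes 2}]$ is $M_{2t}$ while every other term is a product of moment tensors of order $< 2t$, and (iii) bounding those terms recursively.
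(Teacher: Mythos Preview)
Your proposal is correct and follows essentially the same approach as the paper: you choose the identical tensor $F_3$ (up to an inessential symmetrization), apply Fact~\ref{thm:adamczak} and Fact~\ref{fact:spectral} to bound $\bE[\langle F_3(x),v^{\otimes 3}\rangle^2]$, expand to isolate $\langle M_6,v^{\otimes 6}\rangle$, drop the $-9s^3$ term as a negative sum of squares, and control the remaining $m^2$ and $6s\langle M_4,v^{\otimes 4}\rangle$ terms via the Cauchy--Schwarz bound $m^2 \preceq_6 s\langle M_4,v^{\otimes 4}\rangle$ together with Fact~\ref{fact:mult}. Your explicit two-sample identity for the Cauchy--Schwarz step is a nice addition that the paper only cites, and your outline for general $t$ matches the structure of Section~\ref{sec:t-general}.
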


The key idea in the argument is to observe that Adamczak and Wolff's generalized \Poincare 
inequality (Fact~\ref{thm:adamczak}) implies a sum of squares upper bound on the 
injective norm of a tensor related to $M_6$.

\begin{lemma} \label{lem:deg-6-adamczak}
Let $p$ be a $\sigma$-\Poincare probability distribution. For any tensor $A$ of order $3$, 
let $f_A(x) = \langle (x-\mu)^{\otimes 3} - 3(x-\mu) \otimes M_2 - M_3, A \rangle$. Then, $\E_x[f_A^2(x)] \leq \oo(1) \cdot \sigma^6 \|A\|_F^2$. 
\end{lemma}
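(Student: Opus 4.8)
The plan is to invoke the generalized \Poincare inequality of Adamczak and Wolff (Fact~\ref{thm:adamczak}) with $t=3$. Writing $y = x-\mu$, the function $f_A(x) = \langle y^{\otimes 3} - 3\,y\otimes M_2 - M_3,\, A\rangle$ is a degree-$3$ polynomial in $x$; here the term $3\,y\otimes M_2$ is read as the \emph{symmetric} tensor obtained by summing the single linear factor over all three tensor slots, so that $F_3(x) := y^{\otimes 3} - 3\,y\otimes M_2 - M_3$ is a symmetric tensor. Since $F_3(x)$ is symmetric we have $f_A = f_{\Sym A}$ and $\|\Sym A\|_F \le \|A\|_F$, so it suffices to prove the bound for symmetric $A$, which I will assume.

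The first step is to verify the three hypotheses of Fact~\ref{thm:adamczak}, namely $\bE_x[f_A(x)] = \bE_x[\nabla f_A(x)] = \bE_x[\nabla^2 f_A(x)] = 0$. For the zeroth order, $\bE[y^{\otimes 3}] = M_3$ and $\bE[y] = 0$ give $\bE[f_A] = \langle M_3 - 0 - M_3, A\rangle = 0$. Differentiating once and using the symmetry of $A$, one gets $\partial_\ell f_A(x) = 3\sum_{j,k} A_{\ell j k}\bigl(y_j y_k - (M_2)_{jk}\bigr)$, so $\bE[\partial_\ell f_A] = 3\sum_{j,k}A_{\ell j k}\bigl(\bE[y_jy_k] - (M_2)_{jk}\bigr) = 0$ --- this is exactly why the correction $-3\,y\otimes M_2$ (with covariance $M_2$ and combinatorial factor $3$) is present. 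Differentiating again, $\partial_m\partial_\ell f_A(x) = 6\sum_k A_{\ell m k}\, y_k$, whence $\bE[\partial_m\partial_\ell f_A] = 6\sum_k A_{\ell m k}\bE[y_k] = 0$. In short, each lower-order term of $F_3$ is precisely what is needed to annihilate the expectation of the corresponding derivative of the leading term $y^{\otimes 3}$.

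Since $f_A$ has degree $3$, its third derivative is the \emph{constant} tensor with entries $6 A_{\ell m n}$, so $\|\nabla^3 f_A(x)\|_F^2 = 36\|A\|_F^2$ identically. Fact~\ref{thm:adamczak} with $t=3$ then gives $\bE_x[f_A(x)^2] \le C_3\,\sigma^6\,\bE_x[\|\nabla^3 f_A(x)\|_F^2] = 36\,C_3\,\sigma^6\|A\|_F^2 = \oo(1)\cdot\sigma^6\|A\|_F^2$, which is the claim.

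I do not expect a genuine obstacle here: all the substance is imported from Fact~\ref{thm:adamczak} (the Adamczak--Wolff / Lata\l{}a moment estimates), which we take as given. The only things to get right are (i) choosing the lower-order correction terms in $F_3$ so that the first two expected derivatives vanish --- the coefficient $3$ and the subtraction of $M_3$ are forced by this requirement --- and (ii) the routine symmetrization bookkeeping needed to pass from symmetric $A$ to arbitrary $A$. The same template, with a correspondingly chosen tensor $F_t$, is what will be needed for general $t$ in the next section.
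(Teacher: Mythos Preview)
Your proof is correct and follows essentially the same approach as the paper: verify that $\bE[\nabla^j f_A] = 0$ for $j=0,1,2$ by direct computation, observe that $\nabla^3 f_A = 6A$ is constant with $\|\nabla^3 f_A\|_F^2 = 36\|A\|_F^2$, and then invoke Fact~\ref{thm:adamczak}. The only difference is cosmetic---you work in coordinates and add an explicit symmetrization remark, whereas the paper writes the derivatives in tensor-contraction notation and leaves the symmetry bookkeeping implicit.
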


\begin{proof}
The following is a direct computation:
\begin{align}
\nabla f_A(x) &= \langle 3(x-\mu)^{\otimes 2} \otimes I - 3M_2 \otimes I, A \rangle, &&\text{ and hence } \bE[\nabla f_A(x)] = 0 \\
\nabla^2 f_A(x) &= \langle 6(x-\mu) \otimes I \otimes I, A \rangle, &&\text{ and hence } \bE[\nabla^2 f_A(x)] = 0 \\
\nabla^3 f_A(x) &= \langle 6I \otimes I \otimes I, A \rangle = 6A, &&\text{ and hence } \bE[\|\nabla ^3 f_A(x)\|_F^2] = 36\|A\|_F^2.
\end{align}
Therefore, by Fact~\ref{thm:adamczak}, we have $\bE[f_A(x)^2] \leq \oo(1) \cdot \sigma^6 \|A\|_F^2$ for all tensors $A$ of order $3$. 
\end{proof}

%

Define $F_3(x) = (x-\mu)^{\otimes 3} - 3(x-\mu) \otimes M_2 - M_3$. 
The conclusion of Lemma~\ref{lem:deg-6-adamczak} together with 
Fact~\ref{fact:spectral} implies that 
$\langle \bE[F_3(x)^{\otimes 2}], v^{\otimes 6} \rangle = \bE[\langle F_3(x)^{\otimes 2}, v^{\otimes 3} \rangle^2] \preceq_6 \oo(1) \cdot \sigma^6 \|v\|_2^6$.

To complete the proof, we will write $\langle M_6, v^{\otimes 6} \rangle $ (the polynomial 
we are trying to bound) as a linear combination of 
$\langle \bE[F_3(x)^{\otimes 2}], v^{\otimes 6} \rangle$ together with other terms 
that we can bound by appealing to upper bounds on $\langle M_4, v^{\otimes 4} \rangle$ 
and $\langle M_2, v^{\otimes 2} \rangle$. 

\begin{proof}[Proof of Proposition \ref{prop:warmup-deg6}]
First we compute $\bE[F_3(x)^{\otimes 2}]$:
\begin{equation}
\label{eq:F3-expansion}
\bE[F_3(x)^{\otimes 2}] = M_6 + 9M_2 \otimes M_2 \otimes M_2 - 6M_4 \otimes M_2 - M_3 \otimes M_3.
\end{equation}
Since we know that $\langle \bE[F_3(x)^{\otimes 2}], v^{\otimes 6} \rangle \preceq_6 \oo(1) \cdot \sigma^6 \|v\|_2^6$, 
\eqref{eq:F3-expansion} implies that 
\begin{align}
\langle M_6, v^{\otimes 6} \rangle 
 &= \langle \bE[F_3(x)^{\otimes 2}], v^{\otimes 6} \rangle + \langle M_6 - \bE[F_3^{\otimes 2}], v^{\otimes 6} \rangle \\
 &\preceq_6 \oo(1) \cdot \sigma^6 \|v\|_2^6 + 6\langle M_4, v^{\otimes 4}\rangle \langle M_2, v^{\otimes 2} \rangle + \langle M_3, v^{\otimes 3} \rangle^2 - 9\langle M_2, v^{\otimes 2}\rangle^3.
\end{align}
The final term $-9\langle M_2, v^{\otimes 2}\rangle^3$ is a negative sum of squares, so we 
can ignore it. 
On the other hand, we have
\begin{align}
6\langle M_4, v^{\otimes 4} \rangle\langle M_2, v^{\otimes 2} \rangle + \langle M_3, v^{\otimes 3} \rangle^2 
 &\stackrel{(i)}{\preceq_6} 7\langle M_4, v^{\otimes 4} \rangle \langle M_2, v^{\otimes 2} \rangle \\
 &\stackrel{(ii)}{\preceq_6} 7 \cdot (\oo(1) \cdot \sigma^4 \|v\|_2^4) \cdot (\oo(1)\cdot \sigma^2 \|v\|_2^2) = \oo(1) \cdot \sigma^6 \|v\|_2^6.
\end{align}
Here we used two facts: (i) $\langle M_3, v^{\otimes 3} \rangle^2 \preceq_6 \langle M_2, v^{\otimes 2}\rangle\langle M_4, v^{\otimes 4}\rangle$ (by Cauchy-Schwarz) 
and (ii) Fact~\ref{fact:mult} on multiplying together sum-of-squares inequalities.

As we have bounded all terms by $\oo(1) \cdot \sigma^6 \|v\|_2^6$, the proof is complete.
\end{proof}

\subsection{The case of arbitrary $t$}
\label{sec:t-general}

We now generalize the argument from the previous section to arbitrary $t$s. 

As in the case of $t=3$ above, we will first come up with a polynomial of degree $2t$ that we 
can upper bound directly using the generalized \Poincare inequality. For this purpose, we 
define the order-$t$ tensor $F_t(x)$ generalizing $F_3$ from the previous section.

Let $\sT_t$ denote the set of all tuples $(i_0, i_1, \ldots, i_r)$ of integers such that $i_0 \geq 0$ and 
$i_s \geq 2$ for $s > 0$, and $i_0 + \cdots + i_r = t$. We will take 
\begin{equation}
\label{eq:Ft-def}
F_t(x) \eqdef \sum_{(i_0, \ldots, i_r) \in \sT_k} (-1)^r \binom{k}{i_0 \ \cdots \ i_r} (x - \mu)^{\otimes i_0} \otimes M_{i_1} \otimes \cdots \otimes M_{i_r}.
\end{equation}
While \eqref{eq:Ft-def} may seem mysterious, $F_t(x)$ is in fact the unique tensor such 
that $\nabla^t F_t(x) = I$ and $\bE[\nabla^s F_t(x)] = 0$ for $s < t$.

We verify the latter property in the following Lemma~\ref{lem:zero} 
(we omit the proof of uniqueness).
For any order-$t$ tensor $A$, $F_t$ defines a polynomial $f_A(x) = \langle F_t(x), A \rangle.$ 
We will show that the partial derivatives of $f_A$ (w.r.t $x$) of all orders 
$0,1, \ldots, t-1$ have expectation zero, so that we can apply the higher-order Poincar\'e 
inequality from Fact~\ref{thm:adamczak}. 
\begin{lemma}
\label{lem:zero}
For every order-$t$ $A$ and for every $j = 0, \ldots, t-1$, $\bE[\langle \nabla^j F_t(x), A \rangle] = 0$.
\end{lemma}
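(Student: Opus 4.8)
The plan is to prove Lemma~\ref{lem:zero} by computing $\nabla^j F_t(x)$ explicitly from the definition \eqref{eq:Ft-def} and showing that the $x$-dependent part of the result has mean zero. The key observation is that for each term $(x-\mu)^{\otimes i_0} \otimes M_{i_1} \otimes \cdots \otimes M_{i_r}$ in $F_t(x)$, only the factor $(x-\mu)^{\otimes i_0}$ depends on $x$; differentiating $j$ times replaces $(x-\mu)^{\otimes i_0}$ by a sum of terms of the form $(x-\mu)^{\otimes (i_0 - j')} \otimes I^{\otimes \text{(something)}}$ for various $j' \le \min(j, i_0)$ (together with combinatorial coefficients accounting for which of the $j$ derivative slots land on which tensor legs). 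Taking expectations, the surviving terms are exactly those where the remaining power $i_0 - j'$ of $(x-\mu)$ gets paired against the moment tensors: $\bE[(x-\mu)^{\otimes m}] = M_m$, with the convention $M_0 = 1$ and $M_1 = 0$.

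\textbf{The main computation.} After taking the expectation, $\bE[\nabla^j F_t(x)]$ becomes a linear combination, indexed by tuples $(i_0,\ldots,i_r) \in \sT_t$ and by a choice of how the $j$ derivatives distribute, of tensor products of moment tensors $M_{m} \otimes M_{i_1} \otimes \cdots \otimes M_{i_r}$ (with some identity factors interleaved, coming from the derivative slots). The crucial point is to reorganize this sum: group terms according to the resulting ``moment signature'' — i.e., the multiset of moments appearing, together with the positions of the $I$ factors — and show that within each group the signed multinomial coefficients cancel. The cancellation is precisely the combinatorial identity underlying the fact (asserted but not proved in the excerpt) that $F_t$ is the \emph{unique} tensor with $\nabla^t F_t = I$ and $\bE[\nabla^s F_t] = 0$ for $s<t$; concretely, it is an inclusion–exclusion identity saying that the alternating sum over all ways of ``merging'' a block of $(x-\mu)$'s into moment tensors vanishes. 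I would verify this by an induction on $t$: assuming the statement for all $t' < t$, peel off the $i_0$-block and recognize the remaining alternating sum over $(i_1,\ldots,i_r)$ as (a derivative of) $F_{t-i_0}$ evaluated in expectation, which vanishes by the inductive hypothesis when the relevant order is positive, and contributes the ``leading'' identity term otherwise.

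\textbf{A cleaner route via a generating identity.} An alternative I would try first, because it sidesteps the bookkeeping, is to encode $F_t$ through the recursion it satisfies. From \eqref{eq:Ft-def} one reads off
\[
F_t(x) = (x-\mu)^{\otimes t} - \sum_{\substack{i_0 + m = t \\ m \ge 2}} \binom{t}{m} F_{i_0}(x) \otimes M_m,
\]
wait — more precisely $F_t(x) = \sum_{i_0 + m = t,\ m\ge 2} \binom{t}{i_0}\, (x-\mu)^{\otimes i_0}\otimes(\text{lower }F\text{'s})$; I would pin down the exact form of the recursion $F_t(x) = (x-\mu)^{\otimes t} - \sum_{m=2}^{t} \binom{t}{m} F_{t-m}(x)\otimes M_m$ by matching coefficients of \eqref{eq:Ft-def}, then use it to prove $\bE[\nabla^j F_t(x)] = 0$ for $j<t$ by induction on $t$: differentiate the recursion $j$ times (the derivatives act only on the $(x-\mu)^{\otimes i_0}$ factors, via the Leibniz rule on the tensor product), take expectations, apply the inductive hypothesis to kill the $\bE[\nabla^{j'} F_{t-m}(x)]$ terms with $j' < t-m$, and observe that the surviving terms — those with $j' = t - m$, where $\nabla^{t-m}F_{t-m} = I$, plus the term from differentiating $(x-\mu)^{\otimes t}$ and taking expectation, which produces $M_{t-j}$ up to a coefficient — cancel by exactly the same recursion evaluated at order $t - j$.

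\textbf{Expected obstacle.} The main difficulty is purely notational: keeping track of \emph{which} tensor slots the repeated derivatives $\nabla^j$ land on, since $\nabla^j$ applied to $(x-\mu)^{\otimes i_0}$ is not simply $\frac{i_0!}{(i_0-j)!}(x-\mu)^{\otimes(i_0-j)}\otimes I^{\otimes j}$ but a symmetrized sum over the placements of the $j$ identity factors among the $i_0$ legs, and these placements must be matched consistently on both sides of the cancellation. I would handle this by working with fully symmetric tensors throughout (all the $M_m$ and $F_t$ are symmetric, and we only ever pair against $v^{\otimes 2t}$ or symmetric $A$), so that the slot-placement multiplicities collapse into clean binomial factors and the identity reduces to the scalar multinomial identity $\sum_{r \ge 0}\ \sum_{\substack{i_1 + \cdots + i_r = s \\ i_u \ge 2}} (-1)^r \binom{s}{i_1\ \cdots\ i_r} \prod_u \mu_{i_u} = [s = 0]$ for the formal ``moments'' $\mu_0 = 1, \mu_1 = 0$ — which is itself an easy exponential-generating-function computation ($\sum_s \mu_s z^s/s! = $ the moment generating function shifted to have no constant-or-linear part, and the alternating sum is its reciprocal composed appropriately). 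Establishing that scalar identity and then lifting it to the tensor setting via symmetry is the cleanest path, and the lift is where I'd be most careful.
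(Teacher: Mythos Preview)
Your proposal is correct but takes a somewhat more laborious route than the paper. The paper's proof hinges on two clean observations you do not isolate:

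\begin{enumerate}
\item \emph{A derivative identity:} $\langle \nabla F_t(x), A\rangle = t\,\langle F_{t-1}(x)\otimes I, A\rangle$ for symmetric $A$. This follows by differentiating \eqref{eq:Ft-def} term by term and folding the factor $i_0$ into the multinomial coefficient. Iterating, $\nabla^j F_t$ is just a scalar multiple of $F_{t-j}$ tensored with identities, so the whole lemma reduces to showing $\bE[F_s(x)] = 0$ for every $s\ge 1$.
\item \emph{A sign-reversing involution:} Taking expectation in \eqref{eq:Ft-def} gives $\bE[F_s]=\sum_{(i_0,\ldots,i_r)\in\sT_s}(-1)^r\binom{s}{i_0\,\cdots\,i_r}\,M_{i_0}\otimes\cdots\otimes M_{i_r}$. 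Pair $(i_0,\ldots,i_r)$ having $i_0\neq 0$ with $(0,i_0,\ldots,i_r)$, and conversely. This involution preserves the multinomial coefficient and the moment product but flips $(-1)^r$, so everything cancels; the only tuples it fails to match have $i_0=1$, but those vanish because $M_1=0$.
\end{enumerate}

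Your recursion $F_t(x)=(x-\mu)^{\otimes t}-\sum_{m\ge 2}\binom{t}{m}F_{t-m}(x)\otimes M_m$ is correct, and your inductive scheme does go through (the only surviving term after applying the inductive hypothesis is $m=t-j$, which cancels the $\frac{t!}{(t-j)!}M_{t-j}\otimes I^{\otimes j}$ coming from differentiating $(x-\mu)^{\otimes t}$; the edge case $j=t-1$ needs $M_1=0$ separately, as you would discover). What the paper's approach buys is that the derivative identity absorbs in one line all of the slot-placement bookkeeping you flag as your ``expected obstacle,'' and the involution replaces your generating-function computation with a two-sentence bijection. Your approach, by contrast, works directly from the defining recursion and would port more easily to variants of $F_t$ built from other moment sequences.
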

\begin{proof}
We show the following structural property of the partial derivative tensors of $F$ for every $x$:
\begin{claim}
\label{claim:partial-derivs}
$\langle \nabla F_t(x), A \rangle = t \langle F_{t-1}(x)\otimes I, A \rangle $ for every symmetric tensor $A$. 
\end{claim}
\begin{proof}[Proof of Claim]
We have 
\begin{align}
\nabla \langle F_t(x) , A \rangle 
 &= \sum_{(i_0, \ldots, i_r) \in \sT_t} (-1)^r \binom{t}{i_0 \ \cdots \ i_r} \langle \nabla (x-\mu)^{\otimes i_0} \otimes M_{i_1} \otimes \cdots \otimes M_{i_r}, A \rangle  \\
 &= \sum_{(i_0, \ldots, i_r) \in \sT_t} (-1)^r i_0 \binom{k}{i_0 \ \cdots \ i_r} \langle (x-\mu)^{\otimes (i_0-1)} \otimes I \otimes M_{i_1} \otimes \cdots \otimes M_{i_r}, A \rangle \\
 &\stackrel{(i)}{=} t\sum_{(i_0-1, \ldots, i_r) \in \sT_{t-1}} (-1)^r \binom{k}{i_0-1 \ \cdots \ i_r} \langle (x-\mu)^{\otimes (i_0-1)} \otimes I \otimes M_{i_1} \otimes \cdots \otimes M_{i_r}, A \rangle \\
 &\stackrel{(ii)}{=} t \langle F_{t-1}(x) \otimes I, A \rangle,
\end{align}
In (i), we fold the factor of $i_0$ into the multinomial coefficient, and in (ii) 
we use the symmetries of $A$ to permute tensor modes.
\end{proof}
By Claim~\ref{claim:partial-derivs}, it suffices to show that 
$\bE_x[ \langle F_t(x), A \rangle ] = 0$ for $t > 0$ (since the derivates of $F_t$ are 
simply of the form $F_s$ for $s < t$). Note that 
\begin{equation}
\bE_x[\langle F_t(x), A \rangle ] = \sum_{(i_0,\ldots,i_r) \in \sT_k} (-1)^r \binom{k}{i_0 \ \cdots \ i_r} \langle M_{i_0} \otimes \cdots \otimes M_{i_r}, A \rangle. \label{eq:expression-derivative}
\end{equation}
Now we create an involution within the terms of $\sT_k$ that matches terms of equal magnitudes but opposite signs. This will establish that the RHS of  \eqref{eq:expression-derivative} vanishes. 

For every term $(i_0,\ldots,i_r)$ with $i_0 \neq 0$, match it with the 
term $(0,i_0,\ldots,i_r)$. Conversely, if $i_0 = 0$, match it with $(i_1,\ldots,i_r)$. 
This is an involution which preserves $\binom{k}{i_0 \ \cdots \ i_r}$ and 
$M_{i_0} \otimes \cdots \otimes M_{i_r}$ but negates $(-1)^r$, so all terms will cancel. 
The only exception is if $i_0 = 1$ (then $(0,i_0,\ldots,i_r) \not\in \sT_k$ so the 
involution fails), but this term is already zero because $M_1 = \bE[x-\mu] = 0$.
\end{proof}
With Lemma~\ref{lem:zero} in hand, we now know that the partial derivatives of 
$f_A(x) = \langle F_t(x), A \rangle$ are all mean-zero, and so we can apply 
the higher-order Poincar\'e inequality (Fact~\ref{thm:adamczak}). 

We therefore have $\bE[f_A(x)^2] \leq (t!)^2 C_t \sigma^{2t} \|A\|_F^2$ for the constant 
$C_t$ for which Fact~\ref{thm:adamczak} holds. 
Using Fact \ref{fact:spectral} as before, we obtain the following corollary asserting 
that $\bE[F_t(x)^{\otimes 2}]$ has a sum-of-squares upper bound:
\begin{corollary}
\label{cor:Ft-cert}
For all $t$, we have
\[
\langle \bE[F_t(x)^{\otimes 2}], v^{\otimes 2t} \rangle = \bE[\langle F_t(x), v^{\otimes t} \rangle^2] \preceq_{2t} (t!)^2 C_t \sigma^{2t} \|v\|_2^{2t}.
\]
\end{corollary}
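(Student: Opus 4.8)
The plan is to obtain the statement as an immediate composition of three facts already in hand: the vanishing of the low-order derivatives of $F_t$ (Lemma~\ref{lem:zero}), the Adamczak--Wolff higher-order Poincar\'e inequality (Fact~\ref{thm:adamczak}), and the transfer from covariance bounds to sum-of-squares bounds (Fact~\ref{fact:spectral}). Fix an arbitrary symmetric order-$t$ tensor $A$ and set $f_A(x) = \langle F_t(x), A \rangle$, a scalar polynomial of degree $t$ in $x$. First I would iterate Claim~\ref{claim:partial-derivs}: each application of $\nabla$ lowers the ``$F$-index'' by one, appends a copy of $I$, and contributes one more factor to the falling product $t(t-1)\cdots$, so that $\nabla^j f_A(x) = \tfrac{t!}{(t-j)!}\,\langle F_{t-j}(x) \otimes I^{\otimes j}, A\rangle$ up to the placement of modes. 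Combined with Lemma~\ref{lem:zero}, this gives $\bE[\nabla^j f_A(x)] = 0$ for $j = 0, \ldots, t-1$, which are exactly the hypotheses of Fact~\ref{thm:adamczak}. Evaluating the same iteration at $j = t$, where $F_0 \equiv 1$, gives $\nabla^t f_A(x) = t!\,A$ identically (the symmetry of $A$, already invoked in Claim~\ref{claim:partial-derivs}, collapses the appended identity modes), hence $\|\nabla^t f_A(x)\|_F^2 = (t!)^2\|A\|_F^2$.

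Feeding this into Fact~\ref{thm:adamczak} yields
\begin{equation*}
\bE[\langle F_t(x), A \rangle^2] = \bE[f_A(x)^2] \;\le\; C_t \sigma^{2t}\,\bE[\|\nabla^t f_A(x)\|_F^2] \;=\; (t!)^2 C_t \sigma^{2t}\,\|A\|_F^2,
\end{equation*}
uniformly over symmetric order-$t$ tensors $A$; equivalently, $F_t(x)$ flattened to a $d^t$-dimensional vector has covariance of operator norm at most $(t!)^2 C_t\sigma^{2t}$. Now I would apply Fact~\ref{fact:spectral} with the degree-$t$ tensor function $F(x) = F_t(x)$ and $\lambda = (t!)^2 C_t \sigma^{2t}$, which converts this covariance bound into $\bE[\langle F_t(x), v^{\otimes t}\rangle^2] \preceq_{2t} (t!)^2 C_t \sigma^{2t}\|v\|_2^{2t}$. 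Finally, since $\langle F_t(x), v^{\otimes t}\rangle^2 = \langle F_t(x)^{\otimes 2}, v^{\otimes 2t}\rangle$ holds pointwise in $x$, taking expectations inside rewrites the left-hand side as $\langle \bE[F_t(x)^{\otimes 2}], v^{\otimes 2t}\rangle$; this gives the claimed equality, and the preceding line is exactly the $\preceq_{2t}$ bound.

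I do not expect a genuine obstacle: the one step that needs care is the iterated differentiation of $f_A$, where one must track both the combinatorial factor and the growing block of identity modes so as to land cleanly on $\nabla^t f_A(x) = t!\,A$ rather than some unsymmetrized variant (the $t=3$ computation in Lemma~\ref{lem:deg-6-adamczak} is the template). Once that identity is established, the statement follows by plugging Lemma~\ref{lem:zero} into Fact~\ref{thm:adamczak} and then Fact~\ref{thm:adamczak} into Fact~\ref{fact:spectral}, with no further computation.
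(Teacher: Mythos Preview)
Your proposal is correct and follows essentially the same route as the paper: apply Lemma~\ref{lem:zero} (via iterated use of Claim~\ref{claim:partial-derivs}) to verify the hypotheses of Fact~\ref{thm:adamczak}, compute $\nabla^t f_A = t!\,A$ to obtain $\bE[f_A(x)^2] \le (t!)^2 C_t \sigma^{2t}\|A\|_F^2$, and then invoke Fact~\ref{fact:spectral}. The paper presents this argument in the paragraph immediately preceding the corollary rather than as a separate proof, but the content is identical.
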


\paragraph{Proving Theorem~\ref{thm:main-cert}}
We are now ready to prove Theorem~\ref{thm:main-cert}. We will first show that 
Theorem~\ref{thm:main-cert} holds in the infinite-data limit, and attend to finite-sample 
concentration at the end.

As in the case of $t = 3$, the strategy is to write 
$\langle M_{2t}, v^{\otimes 2t} \rangle$ as a combination of 
$\langle \bE[F_t(x)^{\otimes 2}], v^{\otimes 2t} \rangle$ together with terms that we can 
upper bound by recursively relying on estimates from smaller values of $t$. 
To aid in this, we use the following generalization of H\"older's inequality: 
\begin{lemma}
\label{lem:coalesce}
For every collection of non-negative integers $i_0, \ldots, i_r$ that sum to $2t$ and are each at most 
$2t-2$, $\langle M_{i_0} \otimes \cdots \otimes M_{i_r}, v^{\otimes 2t} \rangle \preceq_{2t} \langle M_2 \otimes M_{2t-2}, v^{\otimes 2t} \rangle$.
\end{lemma}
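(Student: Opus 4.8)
The plan is to unfold the left-hand side into a product of univariate moment polynomials and reduce the tensor inequality to a handful of one-dimensional Cauchy--Schwarz/H\"older inequalities, each with a degree-$2t$ sum-of-squares proof. For $i \ge 0$ write $p_i(v) \eqdef \langle M_i, v^{\otimes i}\rangle = \E_{x\sim p}[\langle x-\mu, v\rangle^i]$, so that $\langle M_{i_0}\otimes\cdots\otimes M_{i_r}, v^{\otimes 2t}\rangle = \prod_{s=0}^r p_{i_s}(v)$ and the target is $p_2(v)p_{2t-2}(v)$. First I would dispose of trivial cases: a factor with $i_s=0$ equals $1$ and may be dropped, and a factor with $i_s=1$ is $p_1 \equiv 0$ (as $M_1 = 0$), making the whole product $\equiv 0$; so we may assume every $i_s \ge 2$, and since each $i_s \le 2t-2$ there are at least two factors.

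The engine is a short list of SOS identities, each obtained by writing a difference of products as $\tfrac12\E_{x,y}$ of a polynomial in $\ell_x \eqdef \langle x-\mu, v\rangle$ and $\ell_y \eqdef \langle y-\mu, v\rangle$ that factors through perfect squares. (i) \emph{Positivity:} $p_{2c}(v) \succeq_{2c} 0$, since $p_{2c} = \E_x[(\ell_x^{\,c})^2]$. (ii) \emph{Even merge:} for even $a,b$, $p_a p_b \preceq_{a+b} p_{a+b}$, because $p_{a+b} - p_a p_b = \tfrac12\E_{x,y}[(\ell_x^a - \ell_y^a)(\ell_x^b - \ell_y^b)]$ and $(\ell_x^a - \ell_y^a)(\ell_x^b - \ell_y^b) = (\ell_x^2 - \ell_y^2)^2 S_a S_b$ with $S_a = \sum_j (\ell_x^{\,j}\ell_y^{\,a/2-1-j})^2$ a sum of squares. (iii) \emph{Even spread:} for even $2 \le a \le b$, $p_a p_b \preceq_{a+b} p_{a-2}p_{b+2}$, via $p_{a-2}p_{b+2} - p_a p_b = \tfrac12\E_{x,y}[(\ell_x\ell_y)^{a-2}(\ell_x^2-\ell_y^2)^2\sum_{j}(\ell_x^{\,j}\ell_y^{\,m-1-j})^2]$ where $m = \tfrac{b-a}{2}+1$. (iv) \emph{Odd Cauchy--Schwarz:} for odd $a,b$ the product $p_a p_b$ obeys the two-sided bound $-g \preceq_{a+b} p_a p_b \preceq_{a+b} g$ with $g \eqdef \tfrac12(p_{a-1}p_{b+1} + p_{a+1}p_{b-1})$, since $g \mp p_a p_b = \tfrac12\E_{x,y}[\ell_x^{a-1}\ell_y^{b-1}(\ell_x \mp \ell_y)^2]$ and $\ell_x^{a-1}\ell_y^{b-1} = (\ell_x^{(a-1)/2}\ell_y^{(b-1)/2})^2$. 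I would also record the routine fact that such two-sided bounds multiply: if $-g_i \preceq f_i \preceq g_i$ with $g_i \succeq 0$ for each $i$, then $-\prod g_i \preceq \prod f_i \preceq \prod g_i$ (induct on the number of factors using $gG - fF = \tfrac12[(g-f)(G+F) + (g+f)(G-F)]$ and $gG + fF = \tfrac12[(g-f)(G-F) + (g+f)(G+F)]$, together with the facts that products and nonnegative combinations of SOS polynomials are SOS).

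With these in hand the proof assembles as follows. Since $\sum i_s = 2t$ is even, the odd-index factors occur in an even number; pair them arbitrarily. Applying (iv) to each odd pair and combining via the two-sided multiplication lemma --- legitimate because every even-index factor is $\succeq 0$ by (i) --- bounds $\prod_s p_{i_s}(v)$ by $\prod_{\text{even }j} p_j(v) \cdot \prod_i g_i(v)$; expanding the product over the $g_i$ rewrites this as a nonnegative combination, with coefficients summing to $1$, of products of \emph{even}-index moments, each product having exponents in $[2, 2t-2]$ summing to $2t$. It then suffices to show $\prod_q p_{j_q}(v) \preceq_{2t} p_2(v)p_{2t-2}(v)$ for any such all-even product: if there are at least three factors, pick two, merge them with (ii) (the merged exponent is $\le 2t-2$ because some remaining factor is $\ge 2$), multiply through by the other factors (all $\succeq 0$) using Fact~\ref{fact:mult}, and repeat until exactly two even factors $p_a p_b$ with $a+b=2t$ remain; then iterate (iii) to push $(a,b)$ down to $(2, 2t-2)$. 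Summing the per-term bounds (again a nonnegative combination of SOS inequalities) gives $\prod_s p_{i_s}(v) \preceq_{2t} p_2(v)p_{2t-2}(v)$.

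The main obstacle --- and essentially the only subtlety --- is sign handling for odd-index moments: unlike even moments, $p_a(v)$ need not be $\succeq 0$, so one cannot simply multiply a Cauchy--Schwarz inequality through by the remaining factors. This is exactly what forces the two-sided formulation in (iv) and the accompanying multiplication lemma, and it is also why odd pairs must be \emph{spread} to even--even configurations via (iv) rather than merged straight to $p_{a+b}$, which would overshoot the target $p_2 p_{2t-2}$ whenever $a+b=2t$. A secondary bookkeeping point is never to merge an exponent above $2t-2$ so the target stays reachable; this is automatic provided one only merges when a spare factor of size $\ge 2$ remains.
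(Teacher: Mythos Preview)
Your argument is correct and follows essentially the same route as the paper's proof in Appendix~\ref{sec:coalesce-proof}: both reduce the tensor inequality to scalar SOS identities for products of the moment polynomials $p_i(v)=\langle M_i,v^{\otimes i}\rangle$, establish an even ``spread'' inequality $p_ap_b\preceq p_{a-2}p_{b+2}$, an even ``merge'' inequality $p_ap_b\preceq p_{a+b}$, and a two-sided bound for odd pairs, and then combine these via a product rule.

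The differences are cosmetic. For odd $a\le b$ the paper proves the sharper one-shot spread $\pm p_ap_b\preceq_{a+b} p_{a-1}p_{b+1}$ directly (its inequality~\eqref{eq:majorize-odd}), whereas your step~(iv) gives the arithmetic-mean form $\pm p_ap_b\preceq \tfrac12(p_{a-1}p_{b+1}+p_{a+1}p_{b-1})$, which forces you to carry a convex combination through the rest of the argument; this is harmless but slightly heavier. Conversely, you are more explicit than the paper about the two-sided multiplication lemma needed to propagate bounds past possibly sign-indefinite odd factors, and about the trivial cases $i_s\in\{0,1\}$; the paper leaves both implicit. The order of operations also differs (the paper merges first via~\eqref{eq:combine} and then spreads, while you first convert odd pairs to even via~(iv)), but the endgame---two even factors pushed to $(2,2t-2)$ by repeated even spreading---is identical.
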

We will simply use Lemma~\ref{lem:coalesce} to replace all lower-order terms with 
terms of the form $\langle M_2 \otimes M_{2t-2}, v^{\otimes 2t} \rangle$. The proof of 
Lemma~\ref{lem:coalesce} involves providing sum-of-squares proofs for a number of standard 
polynomial inequalities, and is deferred to Appendix~\ref{sec:coalesce-proof}.

We will next induct on $t$. The base case $t = 1$ was already given above. For $t > 1$, 
we have 
\begin{align}
\langle M_{2t}, v^{\otimes 2t} \rangle &= \bE[\langle (x-\mu)^{\otimes t}, v^{\otimes t} \rangle^2] \\
 &\stackrel{(i)}{=} \bE[\langle F_t(x), v^{\otimes t} \rangle^2] -2\bE[\langle (x-\mu)^{\otimes t}, v^{\otimes t}\rangle\langle F_t(x) - (x-\mu)^{\otimes t}, v^{\otimes t}\rangle] - \bE[\langle F_t(x) - (x-\mu)^{\otimes t}, v^{\otimes t} \rangle^2] \\
 &\stackrel{(ii)}{\preceq_{2t}} (t!)^2C_t\sigma^{2t} \|v\|_2^{2t} - 2\bE[\langle (x-\mu)^{\otimes t}, v^{\otimes t} \rangle\langle F_t(x) - (x-\mu)^{\otimes t}, v^{\otimes t} \rangle].
\label{eq:intermediate-bound}
\end{align}
Here (i) is direct algebra while (ii) is applying Corollary~\ref{cor:Ft-cert}, as well as the 
fact that $-\bE[\langle F_t(x) - (x-\mu)^{\otimes t}, v^{\otimes t}\rangle^2] \preceq_{2t} 0$.

We next want to bound the $-2\bE[\langle (x-\mu)^{\otimes t}, v^{\otimes t}\rangle \langle F_t(x) - (x-\mu)^{\otimes t}, v^{\otimes t} \rangle]$ term. Note that $F_t(x) - (x-\mu)^{\otimes t}$ 
is simply $F_t(x)$ without its leading term. Therefore, recalling the definition 
\eqref{eq:Ft-def} of $F_t(x)$, we will let $\sT_t' =  \sT_t \backslash \{(t)\}$ (since 
the tuple $(t)$ is the one generating the $(x-\mu)^{\otimes t}$ term in $F_t$). Then we have
\begin{align}
\lefteqn{-2\bE[\langle (x-\mu)^{\otimes t}, v^{\otimes t} \rangle \langle (F_t(x) - (x-\mu)^{\otimes t}), v^{\otimes t} \rangle]} \\
 &= -2\Big\langle \sum_{(i_0,\ldots,i_r) \in \sT_t'} (-1)^r \binom{t}{i_0 \ \cdots \ i_r} M_{i_0+t} \otimes  M_{i_1} \otimes \cdots \otimes M_{i_r}, v^{\otimes 2k} \Big\rangle \\
 &\stackrel{(i)}{\preceq_{2t}} 2\Big\langle \sum_{(i_0,\ldots,i_r) \in \sT_t'} \binom{t}{i_0 \ \cdots \ i_r} M_{2t-2} \otimes M_2, v^{\otimes 2t} \Big\rangle \\
 &= 2\Big(\sum_{(i_0,\ldots,i_r) \in \sT_t'} \binom{t}{i_0 \ \cdots \ i_r}\Big) \langle M_{2t-2} \otimes M_2, v^{\otimes 2t} \rangle.
\label{eq:lower-order-bound}
\end{align}
Here (i) is by Lemma~\ref{lem:coalesce}.
We will bound the sum with a combinatorial argument. We can interpret the sum as the number of 
ways of splitting $\{1,\ldots,t\}$ into some number $r$ of sets such that all sets but the 
first have size at least $2$. Since $r \leq t/2+1$, this is bounded above by the number of 
ways of splitting $\{1,\ldots,t\}$ into $t/2+1$ (possibly empty) sets. But this is just 
$(t/2+1)^t$, as each element can freely go into one of the $t/2+1$ sets. Therefore, 
\eqref{eq:lower-order-bound} is at most $2(t/2+1)^t \langle M_{2t-2} \otimes M_2, v^{\otimes 2t} \rangle$. Plugging back into \eqref{eq:intermediate-bound}, we get
\begin{equation}
\langle M_{2t}, v^{\otimes 2t} \rangle \preceq_{2t} (t!)^2C_t \sigma^{2t} \|v\|_2^{2t} + 2(t/2+1)^t \langle M_{2t-2} \otimes M_2, v^{\otimes 2t} \rangle.
\end{equation}
Now by the inductive hypothesis, we have both 
$\langle M_2, v^{\otimes t} \rangle \preceq_2 C_2' \sigma^2 \|v\|_2^2$ and 
$\langle M_{2t-2}, v^{\otimes (2t-2)} \rangle \preceq_{2t-2} C_{2t-2}' \sigma^{2t-2} \|v\|_2^{2t-2}$ for some constants $C_2'$ and $C_{2t-2}'$. Therefore, 
$\langle M_{2t-2} \otimes M_2, v^{\otimes 2t} \rangle = \langle M_{2t-2}, v^{\otimes (2t-2)} \rangle \langle M_2, v^{\otimes 2} \rangle \preceq_{2t} C_2'C_{2t-2}' \sigma^{2t} \|v\|_2^{2t}$ 
(by Fact~\ref{fact:mult}).
We therefore see that $\langle M_{2t}, v^{\otimes 2t} \rangle \preceq_{2t} C_{2t}' \sigma^{2t} \|v\|_2^{2t}$, where we can take $C_{2t}' = (t!)^2 C_{t} + 2(t/2+1)^t C_2'C_{2t-2}'$.
This completes the induction.

\paragraph{Finite-sample concentration}
To finish the proof of Theorem~\ref{thm:cert}, it remains to establish finite-sample 
concentration. The key observation is that, as long as Corollary~\ref{cor:Ft-cert} 
holds for $s = 1, \ldots, t$, then all of the above steps go through. But 
Corollary~\ref{cor:Ft-cert} relies only on a bound on the maximum eigenvalue 
of the covariance of $F_t(x)$ (when $F_t(x)$ is flattened to a vector). 
It therefore suffices to bound this maximum eigenvalue 
in finite samples.

It is tempting to apply standard matrix concentration bounds (such as 
the matrix Chernoff inequality; see Theorem 5.5.1 of \citet{tropp2015introduction}); 
alas, $F_t(x)$ is too heavy-tailed for this to be valid (i.e., it is 
unbounded and does not even have exponential moments as required for 
most common matrix concentration bounds). We must instead appeal to the 
Matrix Rosenthal inequality (Corollary 7.4 of \citep{mackey2014matrix}). In our 
context, if we let $\vc(F_t(x))$ denote the flattening of $F_t(x)$ to a vector 
and $\Sigma = \bE[\vc(F_t(x))\vc(F_t(x))^{\top}]$ denote the covariance of 
this vector (and $\hat{\Sigma}$ denote the empirical covariance given $n$ samples), 
then the Matrix Rosenthal inequality states that for any $p \geq 1.5$, we have
\begin{align}
\bE[\tr(\hat{\Sigma}^{2p})]^{1/2p} 
 &\leq \big(\sqrt{\tr(\Sigma^{2p})^{1/2p}} + \sqrt{\frac{4p-2}{n}} n^{1/4p} \bE[\|F_t(x)\|_F^{4p}]^{1/4p}\big)^2 \\
 &\leq 2\big(\tr(\Sigma^{2p})^{1/2p} + \frac{4p-2}{n^{1-1/2p}} \bE[\|F_t(x)\|_F^{4p}]^{1/2p}\big).
\end{align}
Using a stronger version of Adamczak and Wolff's result 
(given as Theorem 3.3 of \citep{adamczak2015concentration}) 
we obtain the bound $\bE[\|F_t(x)\|_F^{4p}] \leq (C_t d^{t/2} p^t \sigma^t)^{4p}$ for some 
constant $C_t$. We also have 
$\tr(\Sigma^{2p})^{1/2p} \leq d^{t/2p} \|\Sigma\|_{\op} \leq C_t d^{t/2p} \sigma^{2t}$ 
as $\|\Sigma\|_{\op}$ is bounded by Corollary~\ref{cor:Ft-cert} 
(note that here and below the constant $C_t$ changes in each instance). 
Plugging into the inequality above, we obtain
\begin{align}
\bE[\tr(\hat{\Sigma}^{2p})]^{1/2p} \leq C_t \sigma^{2t} \big( d^{t/2p} + d^t p^{2t+1} n^{1/2p-1}\big).
\end{align}
We will take $p^* = t\log(d/\delta) + \frac{1}{2}$ which yields
\begin{equation}
\bE[\tr(\hat{\Sigma}^{2p^*})]^{1/2p^*} \leq C_t \sigma^{2t} \big(1 + d^t \log^t(d/\delta) n^{1/2p^*-1} \big).
\end{equation}
This is bounded so long as $n \geq \big((d\log(d/\delta))^{\frac{2p^*}{2p^*-1}}\big)^t$. 
For the value of $p^*$ above one can check that $(d\log(d/\delta))^{\frac{2p^*}{2p^*-1}} \leq 2d\log(d/\delta)$, so the above is bounded by $C_t \sigma^{2t}$ so long as 
$n \geq (2d\log(d/\delta))^t$.

We thus have a bound on $\bE[\tr(\hat{\Sigma}^{2p^*})]$ which we would like to turn 
into a high-probability bound on $\|\hat{\Sigma}\|_{\op}$. For this we make use of 
the matrix Chebyshev bound (Proposition 6.2 of \citep{mackey2014matrix}), which in our 
case implies that 
$\bP[\|\hat{\Sigma}\|_{\op} \geq 2\bE[\tr(\hat{\Sigma}^{2p^*})]^{1/2p^*}] \leq 2^{-p^*} \leq \delta$. Union bounding over the $t$ instances where we must invoke 
Corollary~\ref{cor:Ft-cert} completes the proof.


\section{Robust Clustering}
\label{sec:clustering}




In this section, we give our algorithms for robust clustering under separation assumptions and 
for robust mean estimation. 
The first main result of this section is that one can efficiently recover a good clustering 
of data, even in the presence of outliers, for arbitrary data that is $(2t,B)$-SOS-certifiable 
(i.e., where the $2t$th moments are bounded for all pseudodistributions on the sphere).

\begin{theorem}
\label{thm:clustering-separated}
Suppose $x_1, x_2, \ldots, x_n \in \R^d$ satisfies the following clusterability condition:
It can be partitioned into sets $I_1, \ldots, I_k$, where 
$I_j$ has size $\alpha_j n$, together with an $\epsilon$ fraction of arbitrary outliers,  
where $\epsilon = 1-(\alpha_1 + \cdots + \alpha_k)$. 
Furthermore, $\epsilon \leq \frac{\alpha}{8}$, where $\alpha = \min_j \alpha_j$, and 
the sets $I_j$ satisfy:
\begin{enumerate}

  \item For every $j$, $I_j$ is SOS-certifiable around its mean $\mu_j = \E_{x \sim I_j}[x]$: 
  \begin{equation}
  \|\E_{x \sim I_j}[(x-\mu_j)^{\otimes 2t}]\|_{\sos_{2t}} \le B.\label{eq:bounded-moment-condition}
  \end{equation}
  \item For every $i,j$, the means of distinct clusters are well-separated:
  \begin{equation}
  \label{eq:separation-condition}
  \|\mu_i - \mu_j\| \geq \Csep B \alpha^{-1/t},
  \end{equation}   
  with $\Csep \geq C_0$ (with $C_0$ an absolute constant).
\end{enumerate}

Then, 
there is a $(nd)^{O(t)}$ time algorithm 
that outputs $\hat{\mu}_1, \ldots, \hat{\mu}_k$ satisfying for each $j=1,\ldots,k$,
\[
\|\hat{\mu}_j - \mu_j\|_2 \leq  B \cdot \oo\Paren{\frac{\epsilon}{\alpha} + \Csep^{-2t}}^{\frac{2t-1}{2t}}.
\] 

\end{theorem}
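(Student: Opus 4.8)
The plan is to establish Theorem~\ref{thm:clustering-separated} by first proving a robust mean estimation primitive and then bootstrapping it via iterative re-clustering, exactly as sketched in Section~1.3.2. Concretely, I would first set up the convex program described in the sketch: seek $w_1, \ldots, w_n \in \R^d$ minimizing the ``overfitting'' objective subject to two families of constraints --- that $\sum_i \pE_{\xi(v)}[\langle x_i - w_i, v \rangle^{2t}]$ be small for every pseudodistribution $\xi$ on the sphere, and that $\sum_i \langle Z_i, w_i^{\otimes 2t}\rangle$ be small whenever pseudomoment tensors $Z_i$ satisfy $\sum_i Z_i \psos \Id$. The key lemma is a duality/outlier-removal statement: if this program is infeasible (cannot make both quantities small), then by LP/SDP duality there exist explicit $\xi$ and $Z_{1:n}$ witnessing the infeasibility, and since for fixed duals the $w_i$ decouple, the witnesses must be ``blaming'' points outside the good set $I$; removing the points with largest $\min_w \pE_\xi[\langle x_i-w,v\rangle^{2t}]$ or $\min_w \langle Z_i, w^{\otimes 2t}\rangle$ deletes many outliers and few good points. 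After finitely many rounds we reach feasibility.

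Second, I would show that feasibility of the program implies the $w_i$ are clustered: using the SOS-certifiability bound \eqref{eq:bounded-moment-condition} on the good set $I_j$ together with the overfitting bound, a Minkowski/H\"older argument in the degree-$2t$ SOS proof system gives that $\E_{i \in I_j}\|w_i - \mu_j\|_2^{2t}$ is at most $B^{2t} \cdot \oo(\epsilon/\alpha + \Csep^{-2t})$, so most $w_i$ for $i \in I_j$ land within the target radius of $\mu_j$. This is where the quantity $\sum_{i\in I}\langle w_i - \mu, w_i\rangle^{2t}$ from the sketch is used: it controls exactly the inner product that appears after expanding $\langle w_i - \mu_j, v\rangle^{2t}$, and bounding it is what the second constraint family buys us. Combined with the separation hypothesis \eqref{eq:separation-condition}, which guarantees the per-cluster radius is a small fraction of the inter-mean distance $\Csep B\alpha^{-1/t}$, a simple greedy/ball-covering argument on the $w_i$ recovers $\oo(1/\alpha)$ candidate centers, at least one within the stated error of each true $\mu_j$.

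Third, to go from $\oo(1/\alpha)$ candidates down to exactly $k$ well-separated means, I would invoke the \emph{resilient sets} idea of \citet{steinhardt2018resilience}: any set with bounded $2t$th moment tensor is resilient (all of its large subsets have mean close to the set's mean), and conversely, after re-running the clustering procedure on each candidate cluster and iterating $\oo(\log k)$ times, every resilient set that survives is concentrated on a single true cluster. This handles the pathological case of clusters at wildly different scales (e.g.\ two close clusters plus a far one), where one pass only separates the coarse grouping. The final pruning selects one representative per true cluster using the separation lower bound to rule out collisions.

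The main obstacle I anticipate is the outlier-removal argument in step one: one must carefully track, across rounds, that the potential function (number of remaining outliers, or a weighted version counting good-point deletions) strictly decreases, and that the decoupling of the $w_i$ under fixed duals genuinely forces the dual witnesses to concentrate their ``charge'' on bad points rather than spreading it over $I$. Making this quantitative --- getting the clean $(\epsilon/\alpha + \Csep^{-2t})^{(2t-1)/2t}$ error rate rather than something lossy --- requires the SOS relaxation of the overfitting constraint to be tight enough, and verifying that the relaxed constraint $\sum_i Z_i \psos \Id$ is the ``right'' relaxation (as the sketch notes, it is the natural dual of the coupling constraint in \citet{charikar2017learning}) is the delicate technical heart. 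The re-clustering termination bound in step three is a secondary concern but should follow from a clean monotonicity argument on cluster sizes.
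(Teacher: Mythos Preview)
Your high-level architecture matches the paper's: the convex program with the dual overfitting constraint, outlier removal via dual witnesses, and iterative re-clustering using resilient sets. But there is a genuine gap in how you obtain the final error rate. Your step two claims that feasibility of the program directly yields $\E_{i \in I_j}\|w_i - \mu_j\|_2^{2t} \leq B^{2t}\cdot\oo(\epsilon/\alpha + \Csep^{-2t})$. This is not what falls out of the program: the basic feasibility bound (the paper's Proposition~\ref{prop:clustering-bound}) is not translation-invariant and only gives $\frac{1}{|I|}\sum_{i \in I}\|w_i^\star - \mu\|_2^{4t} \leq \oo((Br)^{2t}/\alpha^2)$ where $r$ is an a priori bound on $\|\mu\|_2$. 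The separation parameter $\Csep$ cannot possibly appear at this stage, since it is a property of \emph{pairs} of clusters while the feasibility analysis sees only a single good set $I$. Re-clustering iteratively shrinks $r$, but it terminates at scale $\oo(B/\alpha^{1/t})$, not at the error claimed in the theorem.

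The actual source of the $(\epsilon/\alpha + \Csep^{-2t})^{(2t-1)/2t}$ rate is a step your proposal omits. After re-clustering has produced resilient sets $S_j$ of radius $\oo(B/\alpha^{1/t})$, a purity argument (via Proposition~\ref{prop:resilience-large} together with the separation $\Csep B/\alpha^{1/t}$) shows that at most a $\delta = \Theta(\Csep^{-2t})$ fraction of each $S_j$ can come from any wrong true cluster $I_{j''}$; combined with the $\epsilon n$ adversarial outliers, each consolidated set $\tilde{S}_j$ is therefore a single true cluster plus an $\epsilon/\alpha + \oo(\Csep^{-2t})$ fraction of contamination. One then \emph{re-invokes the single-cluster robust mean estimation primitive} (Theorem~\ref{thm:robust-clustering-eps}) on each $\tilde{S}_j$, and it is this final call --- not the clustering program itself --- that produces the exponent $(2t-1)/(2t)$. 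Your ``greedy/ball-covering'' in step two and ``final pruning'' in step three would at best recover error $\oo(B/\alpha^{1/t})$, matching Theorem~\ref{thm:robust-clustering-alpha} but not the sharper Theorem~\ref{thm:clustering-separated}.
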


\begin{remark}
Note that if $\epsilon \geq \alpha$, then the number of outliers can be as large as the 
smallest cluster. In that case, it is information theoretically impossible, in general, 
to find a unique list of $k$ correct means. Whenever $\epsilon \leq \alpha/8$, 
Theorem~\ref{thm:clustering-separated} gives a non-trivial guarantee on the recovered means.
\end{remark}
\begin{remark}
As explained in Section~\ref{sec:intro}, the dependence of the error on the separation $\Csep$ is 
also information-theoretically necessary, even in one dimension. In particular, 
consider two clusters drawn from distributions with slightly overlapping tails; then 
it is impossible to tell whether a point in the overlap should come from one cluster 
or the other, which will lead to small but non-zero errors in the estimated means.
\end{remark}

The certified bounded moment condition \eqref{eq:bounded-moment-condition} is satisfied 
by data generated from a large class of general mixture models. 
As a corollary of Theorem \ref{thm:clustering-separated}, 
we obtain results for learning means in general mixture models (see 
Corollary~\ref{cor:poincare-mixture}). 

Our second result is outlier-robust mean estimation where an $\epsilon$ fraction of 
the input points are arbitrary outliers.

\begin{theorem}
\label{thm:robust-clustering-eps}
Let $x_1, \ldots, x_n \in \R^d$ be such that there exists an unknown subset 
$I \subseteq [n]$ of size $(1-\epsilon) n$ satisfying 
$\|\E_{x \sim I}[(x_i-\mu)^{\otimes 2t}]\|_{\sos_{2t}} \leq B$,
where $\mu = \bE_{x \sim I}[x]$. 
If $\epsilon \leq \frac{1}{4}$, 
then there is an algorithm that runs in time $(nd)^{\oo(t)}$ and outputs 
an estimate $\hat{\mu}$ such that $\|\hat{\mu} - \mu\|_2 \leq \oo(B \epsilon^{1-1/2t})$.
\end{theorem}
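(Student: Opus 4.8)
The plan is to prove Theorem~\ref{thm:robust-clustering-eps} — the core outlier-robust mean estimation statement, on which the clustering results rest — by the route sketched in Section~\ref{sec:intro}. First I would set up a convex feasibility program whose variables are, for each point $x_i$, a \emph{surrogate} $w_i$ together with its degree-$2t$ pseudo-moments (this pseudo-moment lift is what makes the program a semidefinite program, solvable in time $(nd)^{\oo(t)}$). The program imposes a \emph{fit} constraint, $\frac1n\sum_{i=1}^n \pE_{\xi(v)}[\langle x_i - w_i, v\rangle^{2t}] \le \oo(B^{2t})$ for every pseudo-distribution $\xi$ on the sphere, together with an \emph{anti-overfitting} constraint, $\sum_{i=1}^n \langle Z_i, w_i^{\otimes 2t}\rangle$ small for every family of pseudo-moment tensors $Z_1,\dots,Z_n$ with $\sum_i Z_i \psos \Id$. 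Using the hypothesis that $I$ is $(2t,B)$-SOS-certifiable, the choice $w_i = \mu$ for $i\in I$ drives the good-set terms of the fit constraint below $B^{2t}$; the anti-overfitting constraint is what rules out the trivial overfit $w_i = x_i$ (which would leave far outlier surrogates far away) while still being satisfiable by the intended solution, via the sum-of-squares Minkowski inequality $(a+b)^{2t}\psos 2^{2t-1}(a^{2t}+b^{2t})$ applied to $w_i - \mu = (w_i - x_i) + (x_i - \mu)$. Since $\epsilon \le \tfrac14$, the good points are a clear majority, which is all the slack we need.

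Second, I would show that \emph{any} feasible $(w_i)$ yields a good estimate $\hat\mu$ (an appropriate average of the $w_i$). The fit constraint restricted to $I$, together with SOS-certifiability, shows that the $w_i$ with $i\in I$ have $\oo(B^{2t})$ pseudo-moments of order $2t$ around $\mu$; this makes $\{w_i\}_{i\in I}$ \emph{resilient}, i.e.\ the average of the $w_i$ over any $(1-\oo(\epsilon))$-fraction of $I$ lies within $\oo(B\epsilon^{1-1/2t})$ of $\mu$ — here the $2t$-th-moment bound is traded against the fraction $\epsilon$ via H\"older / the power-mean inequality, which is exactly why $\epsilon^{1-1/2t}$ (the information-theoretically optimal rate for $2t$-bounded-moment distributions, and the most degree-$2t$ sum-of-squares can see) shows up. The anti-overfitting constraint then bounds the perturbation of this average caused by the $\epsilon n$ outlier surrogates: testing it against $Z_i$ constructed from the direction $\hat\mu - \mu$ certifies in sum-of-squares that the outliers cannot be spent to move the mean by more than $\oo(B\epsilon^{1-1/2t})$ without violating $\sum_i Z_i \psos \Id$. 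Combining gives $\|\hat\mu - \mu\|_2 \le \oo(B\epsilon^{1-1/2t})$.

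Third, I would handle infeasibility. By semidefinite duality, an infeasible instance produces a dual certificate: a pseudo-distribution $\xi^\star$ on the sphere and pseudo-moment tensors $Z_1^\star,\dots,Z_n^\star$ with $\sum_i Z_i^\star \psos \Id$ for which no $(w_i)$ makes both sums small. Because for fixed $\xi^\star, Z^\star$ the objective decouples across $i$, I would define per-point scores $\tau_i = \min_{w}\big(\tfrac1n\pE_{\xi^\star(v)}[\langle x_i - w, v\rangle^{2t}] + \lambda\,\langle Z_i^\star, w^{\otimes 2t}\rangle\big)$ for a suitable multiplier $\lambda$, and observe that $\sum_{i\in I}\tau_i$ is small (taking $w=\mu$ nearly realizes the minimum on the good points), so the large scores concentrate on the outlier set $[n]\setminus I$. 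Removing points with probability proportional to $\tau_i$ (or above a threshold) then deletes outliers faster than good points in expectation, and a potential-function argument on $\sum_i \tau_i$ shows the loop terminates after $\poly(n)$ rounds with a feasible instance, reducing to the previous step. The total running time is $\poly(n)$ SDP solves, i.e.\ $(nd)^{\oo(t)}$.

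The main obstacle is the anti-overfitting constraint. One must check both that the intended solution satisfies it, and — more delicately — that the implication ``$\sum_i Z_i \psos \Id$ forces $\sum_i \langle Z_i, w_i^{\otimes 2t}\rangle$ to be small'' is strong enough to certify, in sum-of-squares, the correct bound on the outlier-induced shift of the mean: that the outlier coordinates cannot absorb enough of the constraint $\sum_i Z_i \psos \Id$ to move the mean of the $w_i$ by more than $\oo(B\epsilon^{1-1/2t})$. Getting this exponent tight, rather than the generic $\epsilon^{1/2}$ of second-moment outlier removal, is the crux; it is also where one sees (as noted in the sketch) that the anti-overfitting constraint is precisely the dual form of the otherwise mysterious constraint of \citet{charikar2017learning}, and that combining it with the resilience idea of \citet{steinhardt2018resilience} is what keeps the argument short.
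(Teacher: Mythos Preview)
Your proposal captures the paper's architecture correctly: the convex program with fit and anti-overfitting constraints is Algorithm~\ref{alg:basic-relaxation}, the dual-certificate-based outlier removal you describe in your third step is Algorithm~\ref{alg:clustering}, and the resilience idea for extracting the mean is Proposition~\ref{prop:resilience-small}. The per-point scores $\tau_i$ and the argument that $\sum_{i\in I}\tau_i$ is small (by plugging in $w=\mu$) match Case~1 of Proposition~\ref{prop:clustering-bound} essentially verbatim.

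The genuine gap is in your second step. The anti-overfitting constraint as you (and the paper) write it involves $\langle Z_i, w_i^{\otimes 2t}\rangle$ with $w_i$ in \emph{absolute} coordinates, so the program is not translation-invariant: the feasibility threshold must scale with $r^{2t}$ where $r \ge \|\mu\|_2$ (otherwise even $w_i=\mu$ is infeasible). Consequently a single pass only yields the $r$-dependent bound of Proposition~\ref{prop:clustering-bound}, namely $\|w_i^\star - \mu\|_2^2 \le \oo(rB\epsilon^{-1/2t})$ for most $i\in I$, and the resilient-set argument then gives $\|\hat\mu - \mu\|_2 \le \oo\big(\epsilon\sqrt{rB\epsilon^{-1/2t}} + B\epsilon^{1-1/2t}\big)$ as in \eqref{eq:mu-mu-hat}. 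Your claim that ``testing against $Z_i$ constructed from the direction $\hat\mu-\mu$'' directly certifies the $\oo(B\epsilon^{1-1/2t})$ shift does not go through: the anti-overfitting budget available to the outlier coordinates is $\oo(r^{2t})$, not $\oo(B^{2t})$, so this test only tells you the outlier surrogates project by $\oo(r)$ in that direction, which is vacuous.

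The paper's fix, which your proposal omits entirely, is \emph{iterative re-centering}: use the coarse estimate $\hat\mu$ from one pass to shift all the data, re-run Algorithm~\ref{alg:clustering} with the smaller radius $r' = \oo(\sqrt{rB\epsilon^{2-1/2t}} + B\epsilon^{1-1/2t})$, and repeat until the radius stabilizes at $\oo(B\epsilon^{1-1/2t})$. This outer loop is what eliminates the dependence on the initial scale; without it your argument proves only a bound that degrades with $\|\mu\|_2$.
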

Theorem~\ref{thm:robust-clustering-eps} is in fact a corollary of 
Theorem~\ref{thm:clustering-separated} (in the special case of a single cluster), 
but we state it separately for emphasis (and because Theorem~\ref{thm:clustering-separated} 
actually requires Theorem~\ref{thm:robust-clustering-eps} in its proof). 
The error $\epsilon^{1-1/2t}$ interpolates between existing results 
which achieve error $\sqrt{\epsilon}$ for distributions with bounded covariance, 
and those achieving $\tilde{\oo}(\epsilon)$ for e.g. Gaussian distributions. 

Finally, we can obtain results in an even more extreme setting, 
where all but an $\alpha$ fraction of the input points are outliers, for $\alpha$ 
potentially smaller than $\frac{1}{2}$. 
This corresponds to the robust learning setting proposed in \citet{charikar2017learning}, 
and our algorithm works in the \emph{list-decodable learning} model 
\citep{balcan2008discriminative} in which a short 
list of $\oo(1/\alpha)$ candidate means is allowed to be output. (Note that this 
is information-theoretically necessary if $\alpha < \frac{1}{2}$.)
\begin{theorem}
\label{thm:robust-clustering-alpha}
Let $x_1, \ldots, x_n \in \R^d$ be such that there exists a subset $I \subseteq [n]$ of size 
$\alpha n$ that satisfies $\|\E_{x \sim I} (x_i-\mu)^{\otimes 2t}\|_{\sos_{2t}} \leq B$ for some $\mu \in \R^d$. 
Then, there is an algorithm that runs in time $(nd)^{\oo(t)}$ and 
outputs estimates $\hat{\mu}_1, \ldots, \hat{\mu}_h$ with $h \leq \frac{4}{\goodfrac}$ 
and $\min_{j=1}^h \|\hat{\mu}_j - \mu\|_2 \leq \oo(B / \alpha^{1/t})$. 
\end{theorem}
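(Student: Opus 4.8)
The plan is to reduce the theorem to a convex feasibility problem solved inside an iterative outlier-removal loop, following the template of \citet{steinhardt2018resilience} but augmented with the new ``anti-overfitting'' constraint described in \Sref{sec:intro}. Fix a working set $T \subseteq [n]$, initialized to $[n]$. We solve a convex relaxation in which each cleaned point $w_i$ is replaced by a degree-$2t$ pseudo-distribution $\mu_i$ over $\R^d$, with pseudo-moment tensor $W_i = \pE_{\mu_i}[w^{\otimes 2t}]$; both of the following constraints are then \emph{linear} in the $W_i$. The \emph{fit} constraint asks that $\frac1{|T|}\sum_{i \in T}\pE_{\mu_i}\pE_{\xi(v)}\langle x_i - w, v\rangle^{2t} \le \oo(B^{2t})$ for every pseudo-distribution $\xi$ on the sphere (equivalently, an associated moment matrix is spectrally bounded). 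The \emph{overfitting} constraint asks that $\sum_{i \in T}\langle Z_i, W_i\rangle$ be small whenever $Z_1,\dots,Z_n$ are degree-$2t$ pseudo-moment tensors with $\sum_i Z_i \psos \Id$. Both constraint families are SOS-representable, so testing feasibility — and, when infeasible, extracting a dual certificate — is an SDP of size $(nd)^{\oo(t)}$.

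The first step is feasibility \emph{on the true good set}: were $I$ known, one would take $\mu_i$ to be the point mass at $\mu$ for $i \in I$. The fit constraint restricted to $I$ holds because $\frac1{|I|}\sum_{i\in I}\langle x_i - \mu, v\rangle^{2t} \preceq_{2t} B^{2t}\|v\|_2^{2t}$ is exactly the SOS-certifiability hypothesis, and the induced pseudo-$\ell_{2t}$ quantity $\big(\frac1{|I|}\sum_{i \in I}\pE_{\xi(v)}\langle z_i, v\rangle^{2t}\big)^{1/2t}$ obeys Minkowski's inequality, so replacing $x_i - \mu$ by $w_i - \mu = (w_i - x_i) + (x_i - \mu)$ is harmless; the overfitting constraint restricted to $I$ holds because $\sum_{i\in I}\langle Z_i, \mu^{\otimes 2t}\rangle \le |I|\,\big\langle \tfrac1{|I|}\sum_{i\in I}Z_i,\,\mu^{\otimes 2t}\big\rangle$ and $\sum_i Z_i \psos \Id$ bounds the average by $\oo(\|\mu\|_2^{2t})$, which vanishes after recentering the data so that $\mu \approx 0$. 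Now iterate: solve the SDP on $T$; if feasible, stop; otherwise SDP duality supplies a single pseudo-distribution $\xi$ on the sphere, a single tuple $Z_{1:n}$ with $\sum_i Z_i \psos \Id$, and a weight $\lambda \ge 0$ such that no assignment makes both sums small. For these fixed duals the per-point contributions decouple, yielding scores $\tau_i = \min_{w}\big(\pE_{\xi(v)}\langle x_i - w, v\rangle^{2t} + \lambda\langle Z_i, w^{\otimes 2t}\rangle\big)$ with $\sum_{i \in T}\tau_i$ large, while the feasibility computation above certifies $\sum_{i \in I\cap T}\tau_i = \oo(\alpha n B^{2t})$. Deleting from $T$ the points with the largest $\tau_i$ (via a randomized threshold, as in \citet{steinhardt2018resilience}) therefore removes, in expectation, a strictly larger mass of outliers than of good points; after $\oo(n)$ rounds $I \cap T$ is essentially all of $I$ and the SDP is feasible.

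Given a feasible $(W_i)$ on the surviving set $T$, extract the candidate means as follows. From the fit constraint and the hypothesis, the averaged tensor $\frac1{|I|}\sum_{i\in I\cap T}(W_i - \mu^{\otimes 2t})$ has $\sos_{2t}$-norm $\oo(B)$; this is precisely the ``bounded moment tensor $\Rightarrow$ resilient set'' situation of \citet{steinhardt2018resilience}, which forces all but an $o(\alpha)$ fraction of the $w_i$ with $i \in I \cap T$ to lie within $\oo(B\alpha^{-1/t})$ of $\mu$ (reading $w_i$ off $W_i$, \eg as its degree-one part). Then run a greedy covering on $\{w_i : i \in T\}$: repeatedly pick a point whose radius-$\oo(B\alpha^{-1/t})$ ball contains at least $\tfrac{\alpha}{4}n$ of the still-uncovered $w_i$, output its center, and delete the covered points. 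Since $|T| \le n$ this outputs at most $4/\alpha$ centers, and since $(1-o(1))\alpha n$ good points lie in a single ball of radius $\oo(B\alpha^{-1/t})$ around $\mu$, at least one output center is within $\oo(B\alpha^{-1/t})$ of $\mu$, which is the claimed guarantee.

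The main obstacle is the outlier-removal analysis: one must show that the dual certificate of an infeasible SDP — which is simultaneously a pseudo-distribution on the sphere \emph{and} a collection of PSD pseudo-moment tensors summing to $\psos \Id$ — assigns total score $\oo(\alpha n B^{2t})$ to the good set $I \cap T$, so that each round provably removes more outlier-mass than good-mass and the loop terminates with $I$ intact. This requires re-proving feasibility of the $\mu_i = \delta_\mu$ assignment \emph{against that one dual certificate}, using only the SOS certificate for $I$, Minkowski's inequality for the induced pseudo-$\ell_{2t}$ seminorm, the $\sum_i Z_i \psos \Id$ normalization, and a recentering step, and then pinning down the constants. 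Everything else — setting up the SDP, decoupling the per-point scores, the randomized-threshold filter, and the greedy covering — is routine given the tools already developed in the paper.
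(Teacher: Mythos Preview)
Your outline has the right convex relaxation, the right dual-based filter, and the right greedy covering at the end, but there is a genuine gap: you assume away the dependence on $\|\mu\|_2$. You write that the overfitting bound $\sum_{i\in I}\langle Z_i,\mu^{\otimes 2t}\rangle = \oo(\|\mu\|_2^{2t})$ ``vanishes after recentering the data so that $\mu\approx 0$'', but $\mu$ is precisely the unknown you are trying to estimate, so you cannot recenter at it. Without that recentering, the feasibility threshold for the SDP must be $\Theta(nB^{2t} + \lambda r^{2t}/\alpha)$ for some a-priori radius $r \ge \|\mu\|_2$, and the analysis of the feasible solution (your third paragraph) then only gives $\|w_i-\mu\|_2^2 \le \oo(rB\alpha^{-1/t})$ for most $i\in I$, i.e.\ radius $\oo(\sqrt{rB/\alpha^{1/t}})$ rather than the claimed $\oo(B/\alpha^{1/t})$. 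Your greedy covering would then output balls of that larger radius, not the target one.

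The paper closes this gap with an extra outer loop (Algorithm~\ref{alg:re-cluster} and Proposition~\ref{prop:covering}): after one pass it collects $\le 4/\alpha$ candidate centers at radius $R' = \oo(\sqrt{RB/\alpha^{1/t}} + B/\alpha^{1/t})$, then re-runs the entire outlier-removal algorithm \emph{centered at each of those candidates} with the smaller radius $R'$, and iterates until the radius stabilizes at $\oo(B/\alpha^{1/t})$. The resilience argument (Proposition~\ref{prop:resilience-large}) is used not once but at every level of this recursion, and the covering must be \emph{maximal and disjoint} so that pigeonhole guarantees at least one $S_j$ captures $\Omega(\alpha^2 n)$ good points. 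Your single-pass argument is missing this halving-the-radius iteration, and without it the final bound retains the spurious $r$ factor.
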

Theorem~\ref{thm:robust-clustering-alpha} implies results for robust clustering, 
as we can think of each component $I_j$ of the cluster as being the set $I$, 
and Theorem~\ref{thm:robust-clustering-alpha} 
then says we will output a list of $4/\alpha$ candidate 
means such that the mean of every cluster $I_j$ is within $\oo(B / \alpha^{1/t})$ of a candidate 
mean $\hat{\mu}_{j'}$. 
This is weaker than the guarantee of Theorem~\ref{thm:clustering-separated}, 
but holds even when the clusters are not well-separated.

In fact, while Theorem~\ref{thm:clustering-separated} 
may appear much stronger in the well-separated setting, 
it will follow as a basic extension of 
the ideas in Theorems~\ref{thm:robust-clustering-alpha} and \ref{thm:robust-clustering-eps}. 
Most of the rest of this section will be devoted to proving these two theorems, 
with Theorem~\ref{thm:clustering-separated} handled at the end.





\subsection{Basic Clustering Relaxation}
Key to our algorithmic results is a natural hierarchy of convex relaxations for recovering an 
underlying clustering. In what follows, we will refer to this as the basic clustering relaxation.

The idea is as follows. Assume (as in our results above) that there is some set $I$ 
such that $\|\bE_{x \sim I}[(x-\mu)^{\otimes 2t}]\|_{\sos_{2t}} \leq B$. 
This is the same as saying that for all pseudodistributions 
$\xi(v)$ over the unit sphere, 
$\frac{1}{|I|} \sum_{x \in I} \tilde{\bE}_{\xi(v)}[\langle x-\mu, v \rangle^{2t}] \leq B^{2t}$.
Motivated by this, we might seek $w_1, \ldots, w_n$ such that 
\begin{equation}
\label{eq:just-xi}
\frac{1}{n} \sum_{i=1}^n \tilde{\bE}_{\xi(v)}[\langle x_i - w_i, v \rangle^{2t}] \text{ is small for all pseudodistributions } \xi \text{ over the unit sphere.}
\end{equation}
The reason we allow distinct $w_i$ is because in general the 
data might consist of multiple clusters and so we want to allow flexibility for the $w_i$ to 
fit more than a single cluster at once.

However, as stated, the trivial solution $w_i = x_i$ will always have zero cost. Intuitively, 
the reason is that the $w_i$ are free to completely overfit the $x_i$. We would like to impose 
an additional penalty term to keep $w_i$ from overfitting $x_i$ too much. In fact, the 
right metric of overfitting turns out to be 
\begin{equation}
\label{eq:overfit}
\frac{1}{|I|} \sum_{i \in I} \langle w_i - \mu, w_i \rangle^{2t},
\end{equation}
as we will see below. We cannot directly control $w_i-\mu$ (as we do not know $\mu$), 
but we can observe that $w_i - \mu = (w_i - x_i) + (x_i - \mu)$, and that 
both $w_i-x_i$ and $x_i-\mu$ have small moments with respect to any pseudodistribution 
$\xi$ (by the constraint \eqref{eq:just-xi}, and by our assumption on $\mu$). 
So, we could conservatively try to control $\langle z_i, w_i \rangle^{2t}$ for 
\emph{all possible} sets of points $z_i$ that have small moments (which would 
in particular control $w_i - x_i$ and $x_i - \mu$).

This motivates us to add the following additional constraint: for any $z_1, \ldots, z_n$ 
such that $\sum_{i=1}^n \tilde{\bE}_{\xi'(v)}[\langle z_i, v \rangle^{2t}] \leq 1$ 
for all pseudodistributions $\xi'$ on the sphere, we ask that 
$\sum_{i=1}^n \langle z_i, w_i \rangle^{2t}$ be small (again, we think of the $z_i$ as standing 
in for the points $w_i - x_i$ and $x_i - \mu$).
This constraint on the $w_i$ is not convex, but we can take a sum-of-squares relaxation 
by replacing the $z_i$ with pseudodistributions $\zeta_i(z_i)$. This turns out to ask 
that 
\begin{equation}
\label{eq:just-zeta}
\sum_{i=1}^n \tilde{\bE}_{\zeta_i(z_i)}[\langle z_i, w_i \rangle^{2t}] \text{ is small whenever } \sum_{i=1}^n \tilde{\bE}_{\zeta_i(z_i)}[z_i^{\otimes 2t}] \psos \sI,
\end{equation}
where $\sI$ is the order-$2t$ identity tensor and $T_1 \psos T_2$ means that 
$\langle T_1, v^{\otimes 2t} \rangle \psos \langle T_2, v^{\otimes 2t} \rangle$ 
(as a polynomial in $v$).

The basic clustering relaxation, defined in 
Algorithm~\ref{alg:basic-relaxation} below, asks to either find $w_1, \ldots, w_n$ 
such that both \eqref{eq:just-xi} and \eqref{eq:just-zeta} are small, or else 
to find dual certificates $\xi$, $\zeta$ proving that they cannot be small.
\newcommand{\thresh}{\Gamma}
\begin{algorithm}
\caption{Basic Clustering Relaxation}
\label{alg:basic-relaxation}
\setlength{\abovedisplayskip}{0pt}
\setlength{\belowdisplayskip}{0pt}
\setlength{\abovedisplayshortskip}{0pt}
\setlength{\belowdisplayshortskip}{0pt}
\begin{algorithmic}[1]
\STATE Input: $X = x_1,\ldots,x_n \in \R^d$, weights $c_1, \ldots, c_n \in [0,1]$, multiplier $\lambda$, threshold $\thresh$.
\STATE Find either $w_1, \ldots, w_n$ such that 
\begin{equation}
\sum_{i=1}^n c_i (\pE_{\xi(v)}[\iprod{ x_i - w_i, v }^{2t}] + \lambda \pE_{\zeta_i(z_i)}[\iprod{ w_i, z_i }^{2t}]) \leq 2\thresh \label{eq:optimization} 
\end{equation}
for all pseudodistributions $\xi$ over the unit sphere and $\zeta_1, \ldots, \zeta_n$ with 
$\sum_{i=1}^n \pE_{\zeta_i(z_i)}[z_i^{\otimes 2t}] \psos \sI$, or else find 
$\xi$, $\zeta_{1:n}$ such that the expression in \eqref{eq:optimization} is at least 
$\thresh$
for all $w_{1:n}$.
\end{algorithmic}
\end{algorithm}

Note that Algorithm~\ref{alg:basic-relaxation} is basically asking to implement an 
approximate separation oracle for the expression in \eqref{eq:optimization}. 
The weights $c_1, \ldots, c_n$ will be used later to downweight outliers in the data.

It is easy to show that Algorithm~\ref{alg:basic-relaxation} specifies a convex primal-dual 
problem and can be solved in polynomial time. 
We give an argument in Section~\ref{sec:basic-relaxation-proof} of the Appendix for completeness:

\begin{lemma}[Solving Basic Clustering Relaxation]
\label{lem:basic-relaxation}
There is a polynomial time algorithm that solves the Basic Clustering Relaxation in time 
$(nd)^{\oo(t)}$.
\end{lemma}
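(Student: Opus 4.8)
The plan is to recognize the task in Algorithm~\ref{alg:basic-relaxation} as a convex feasibility problem in the variables $w=(w_1,\dots,w_n)$ equipped with an efficiently computable separation oracle, and then to invoke the ellipsoid method (the polynomial-time equivalence of separation and optimization) together with Sion's minimax theorem to produce either a feasible $w$ or the promised dual certificate. Throughout, write $L(w;\xi,\zeta)=\sum_{i=1}^n c_i\big(\pE_{\xi(v)}[\langle x_i-w_i,v\rangle^{2t}]+\lambda\,\pE_{\zeta_i(z_i)}[\langle w_i,z_i\rangle^{2t}]\big)$ for the left-hand side of \eqref{eq:optimization}, and call $(\xi,\zeta_{1:n})$ \emph{valid} if $\xi$ is a pseudodistribution on the unit sphere and $\zeta_1,\dots,\zeta_n$ are degree-$2t$ pseudodistributions with $\sum_i\pE_{\zeta_i(z_i)}[z_i^{\otimes 2t}]\psos\sI$.

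First I would check convexity. For any fixed valid $(\xi,\zeta_{1:n})$, the map $w\mapsto L(w;\xi,\zeta)$ is convex: with $g(u)=\pE_{\xi(v)}[\langle u,v\rangle^{2t}]$, a direct computation gives $y^{\top}\nabla^2 g(u)\,y=2t(2t-1)\,\pE_{\xi(v)}\big[\big(\langle u,v\rangle^{t-1}\langle y,v\rangle\big)^{2}\big]\ge 0$ for every $y$, because the bracketed quantity is a square of a degree-$t$ polynomial in $v$ and $\xi$ is a valid degree-$2t$ pseudodistribution; hence $w_i\mapsto g(x_i-w_i)$ is convex, and the identical computation with $\zeta_i$ in place of $\xi$ shows $w_i\mapsto\pE_{\zeta_i(z_i)}[\langle w_i,z_i\rangle^{2t}]$ is convex. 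Since $c_i,\lambda\ge 0$, the set $\{\,w:L(w;\xi,\zeta)\le 2\Gamma\text{ for all valid }(\xi,\zeta)\,\}$ is an intersection of sublevel sets of convex functions, hence convex. I would also intersect with a ball $\|w_i\|\le R$ for $R$ polynomial in the input size: whenever $c_i>0$, choosing $\xi$ close to the pseudo-point-mass at a unit vector forces $\|x_i-w_i\|\le(3\Gamma/c_i)^{1/2t}$ on the feasible set, while coordinates with $c_i=0$ leave $L$ unchanged and may be fixed to $x_i$; so this truncation is without loss.

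Next I would implement the separation oracle. Given $w$, set $\Phi(w)=\sup_{\xi,\zeta\text{ valid}}L(w;\xi,\zeta)$. Using $\langle w_i,z_i\rangle^{2t}=\langle z_i^{\otimes 2t},w_i^{\otimes 2t}\rangle$ and the linearity of $L$ in the pseudo-moments of $\xi$ and of each $\zeta_i$, this supremum splits as $\Phi(w)=\Phi_1(w)+\Phi_2(w)$, where $\Phi_1(w)=\sup_{\xi}\sum_i c_i\,\pE_{\xi(v)}[\langle x_i-w_i,v\rangle^{2t}]$ and $\Phi_2(w)=\lambda\sup_{\zeta}\sum_i c_i\,\langle\pE_{\zeta_i(z_i)}[z_i^{\otimes 2t}],w_i^{\otimes 2t}\rangle$, and each of $\Phi_1,\Phi_2$ is the optimum of a semidefinite program of size $(nd)^{\oo(t)}$: $\Phi_1$ maximizes a linear functional of the degree-$2t$ pseudo-moment matrix of $\xi$ subject to its positive-semidefiniteness and the sphere constraints \eqref{eq:sphere}--\eqref{eq:norm}, while $\Phi_2$ maximizes a linear functional of the pseudo-moment matrices of $\zeta_1,\dots,\zeta_n$ subject to each being positive semidefinite and to the single coupling constraint, which expands to the semidefinite condition that $\|v\|_2^{2t}-\langle\sum_i\pE_{\zeta_i(z_i)}[z_i^{\otimes 2t}],v^{\otimes 2t}\rangle$ admit a degree-$2t$ sum-of-squares certificate modulo $\|v\|_2^2-1$. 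Solving both programs to accuracy $\Gamma/10$ costs $(nd)^{\oo(t)}$ time; if $\Phi(w)\le 2\Gamma$ the oracle reports $w$ feasible, and otherwise the near-optimal $(\xi^*,\zeta^*)$ certifies $L(w;\xi^*,\zeta^*)\ge\Gamma$ while a subgradient at $w$ of the convex function $w'\mapsto L(w';\xi^*,\zeta^*)$ (which lies below $\Phi$) supplies the separating hyperplane. Running the ellipsoid method with this oracle over $\|w\|\le R$ then either returns $w$ with $L(w;\xi,\zeta)\le 2\Gamma$ for all valid $(\xi,\zeta)$, or certifies $\inf_w\Phi(w)>\Gamma$; in the latter case, since $L$ is convex in $w$ and linear (hence concave) in the pseudo-moments of $(\xi,\zeta_{1:n})$ over a compact convex set, Sion's minimax theorem gives $\sup_{\xi,\zeta}\inf_w L=\inf_w\Phi(w)>\Gamma$, and the maximizing $(\xi^*,\zeta^*)$ — obtained by solving the corresponding dual SDP — makes \eqref{eq:optimization} at least $\Gamma$ for every $w_{1:n}$ (the infimum over $w$ being attained inside the ball, each summand being coercive in $w_i$ when $c_i>0$ and independent of $w_i$ when $c_i=0$).

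The hard part is not conceptual but bookkeeping. The one genuinely non-formal step is the convexity-in-$w$ claim above, which is where positivity of pseudo-expectations on squares is essential; after that, the care needed is to handle the two different quantifier domains — the sphere for $\xi$ and the spectrahedron $\{\sum_i Z_i\psos\sI\}$ for $\zeta$ — so that each really does reduce to an explicit SDP of the claimed $(nd)^{\oo(t)}$ size, and to track the a priori radius $R$ and the solver accuracies so that the $\Gamma$-versus-$2\Gamma$ promise gap in Algorithm~\ref{alg:basic-relaxation} is preserved.
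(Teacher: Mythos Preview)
Your proposal is correct and takes a genuinely different route from the paper. You keep $w_{1:n}$ as actual points in $\bR^d$ and run the ellipsoid method over them, using the inner SDP over $(\xi,\zeta)$ as a separation oracle; when the ellipsoid method fails you invoke Sion's theorem (with compactness on the $w$-side, via your ball truncation) to extract a single $(\xi^\star,\zeta^\star)$ that works uniformly in $w$. The paper instead \emph{lifts} each $w_i$ to a degree-$2t$ pseudodistribution $\theta_i(w_i)$, so that the whole objective becomes bilinear in the pseudo-moments of $\theta$ and of $(\xi,\zeta)$; the resulting saddle-point problem is then a single bilinear SDP min-max, solved directly. Crucially, the paper then argues that this lifting was lossless: because the map $w\mapsto\langle x_i-w,v\rangle^{2t}$ is \emph{SOS-convex} in $w$, one may round $\theta_i^\star$ to the point $w_i^\star=\pE_{\theta_i^\star}[w]$ without increasing the objective.

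What each approach buys: the paper's lifting avoids the ellipsoid method and the bookkeeping you flag (radius $R$, accuracies, the $\Gamma$-vs-$2\Gamma$ gap), at the cost of the extra rounding observation via SOS-convexity. Your approach is more explicit about the algorithmic primitive and does not need the lifting/rounding trick, but pays for it in the separation-oracle and minimax plumbing. One small remark: your parenthetical that $(\xi,\zeta)$ ranges over a compact convex set is not quite right for $\zeta$ (the constraint $\sum_i Z_i\psos\sI$ with $Z_i\succeq 0$ bounds the sum but not each $Z_i$ individually in all matrix directions), but this is harmless since Sion only needs compactness on the $w$-side, which you already arranged.
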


The full algorithm uses the relaxation \eqref{eq:optimization} but must handle two 
additional issues. 
The first is \emph{outliers}, which can prevent \eqref{eq:optimization} from 
being small, and must be removed in a separate step. 
The second is the need for \emph{re-clustering}. This second issue arises 
because \eqref{eq:optimization} is not translation-invariant, and in particular the recovery 
error after running \eqref{eq:optimization} will depend on the $\ell_2$-norm $r = \|\mu\|_2$ of 
$\mu$. We can obtain improved bounds 
by clustering the $w_i$ output by \eqref{eq:optimization}, and then re-running the algorithm 
on each cluster with a smaller value of $r$. This is similar to the re-clustering idea 
in \citet{charikar2017learning}, but we obtain a much simpler proof by making use of 
the recent idea of \emph{resilience} introduced in \citet{steinhardt2018resilience}.

We analyze and give pseudocode for the outlier removal and re-clustering steps 
in Sections~\ref{sec:outlier} and \ref{sec:re-cluster} below. The output of the 
outlier removal algorithm already satisfies a basic but coarse error bound, 
given as Proposition~\ref{prop:clustering-bound}.

\subsection{Outlier Removal: Basic Bound}
\label{sec:outlier} 

The outlier removal algorithm is given in Algorithm~\ref{alg:clustering}. 
We maintain weights $c_i$ on the points $x_i$, and run Algorithm~\ref{alg:basic-relaxation} 
to attempt to find points $w_{1:n}^{\star}$. If we fail, then we downweight $c_i$ 
according to the value of 
$\tau_i^{\star} = \min_{w} \tau_i(w)$, where $\tau_i(w) = \pE[\langle x_i - w, v \rangle^{2t}] + \lambda \pE[\langle w, z_i \rangle^{2t}]$ 
(line~\ref{line:reweight}). This is intuitive because, whenever 
Algorithm~\ref{alg:basic-relaxation} fails to output $w_{1:n}$, it instead 
outputs a dual certificate $\xi$, $\zeta$ such that $\sum_{i=1}^n c_i \tau_i^{\star}$ is large. 
Thus, intuitively, points $i$ with a large value of $\tau_i^{\star}$ are responsible for 
\eqref{eq:optimization} being large, and should be downweighted.

\begin{algorithm}
\caption{Outlier Removal Algorithm}
\label{alg:clustering}
\label{alg:cluster}
\setlength{\abovedisplayskip}{0pt}
\setlength{\belowdisplayskip}{0pt}
\setlength{\abovedisplayshortskip}{0pt}
\setlength{\belowdisplayshortskip}{0pt}
\begin{algorithmic}[1]
\STATE Input: $x_1,\ldots,x_n$, $B$, $\alpha$, and upper bound $r$ on $\|\mu\|_2$.
\STATE Initialize $c \gets [1; \cdots; 1] \in \bR^n$
\STATE Set $\lambda \gets \alpha n (B/r)^{2t}$
\WHILE{{\bfseries true}}
  \STATE Run Algorithm~\ref{alg:basic-relaxation} with threshold $\thresh = 4(nB^{2t} + \lambda r^{2t}/\alpha)$ 
 to obtain either $w_{1:n}^{\star}$ or $\xi^{\star}$, $\zeta_{1:n}^{\star}$.
  \IF{$w_{1:n}^{\star}$ are obtained}
    \RETURN $w_{1:n}^{\star}$, $c_{1:n}$ 
  \ELSE
    \STATE Let $\tau_i^{\star} \gets \min_{w} \tau_i(w)$, where $\tau_i(w) = \pE_{\xi^{\star}(v)}[\langle x_i - w, v \rangle^{2t}] + \lambda \pE_{\zeta_i^{\star}(z_i)}[\langle w, z_i \rangle^{2t}]$.
    \STATE $c_i \gets c_i (1 - \tau_i^{\star} / \tau_{\max})$ for all $i$, where $\tau_{\max} = \max_{i=1}^n \tau_i^{\star}$.
\protect\label{line:reweight}
  \ENDIF
\ENDWHILE
\end{algorithmic}
\end{algorithm}

We analyze Algorithm~\ref{alg:clustering} in two steps. 
First, we show that if the value of \eqref{eq:optimization} is large (and hence 
$\xi^{\star}$, $\zeta_{1:n}^{\star}$ are obtained), then 
the re-weighting step (line~\ref{line:reweight}) downweights bad points much more 
than it downweights good points. 
Second, we show that if the value of \eqref{eq:optimization} 
is small, then the returned $w_i^{\star}$ constitute 
a good clustering (such that one of the clusters is centered close to the 
true mean $\mu$). 

Formally, we will show:
\begin{proposition}
\label{prop:clustering-bound}
Suppose that there is a set $\goodset \subseteq [n]$ of size $\alpha n$ such that 
$\|\frac{1}{|\goodset|} \sum_{i \in \goodset} (x_i-\mu)^{\otimes 2t}\|_{\sos_{2t}} \leq B$, 
and $\|\mu\|_2 \leq r$. Then the output $w_{1:n}$, $c_{1:n}$ 
of Algorithm~\ref{alg:clustering} satisfies the following property:
\begin{equation}
\label{eq:clustering-bound}
\frac{1}{|\goodset|} \sum_{i \in \goodset} c_i \|w_i^{\star} - \mu\|_2^{4t} \leq \oo((4 B r)^{2t} / \alpha^2)
\text{ and } \frac{1}{|I|} \sum_{i \in \goodset} (1-c_i) \leq \frac{1-\alpha}{3}.
\end{equation}
In particular, at least a $\frac{1}{2}$ fraction of 
the $i \in I$ satisfy $c_i \geq \frac{1}{4}$ 
and $\|w_i^{\star} - \mu\|_2^2 \leq \oo(r B \goodfrac^{-1/t})$.
Moreover, if $\alpha = 1-\epsilon$, with $\epsilon \leq \frac{1}{2}$, then at least 
$1-\epsilon$ of the $i \in I$ satisfy $c_i \geq \frac{1}{4}$ and 
$\|w_i^{\star} - \mu\|_2^2 \leq \oo(r B \epsilon^{-1/2t})$.
\end{proposition}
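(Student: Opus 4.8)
The plan is to run the two-step analysis announced just before the statement. In step (i) I would show that whenever Algorithm~\ref{alg:clustering} reaches the \texttt{else} branch, the reweighting in line~\ref{line:reweight} removes much more total weight from the outliers than from $I$; this both forces termination (after at most $n$ iterations, since the point attaining $\tau_{\max}$ is zeroed each round) and yields the second half of \eqref{eq:clustering-bound}. In step (ii) I would show that when the loop exits through \texttt{return}, the returned $w^\star_{1:n}$ satisfy the first half of \eqref{eq:clustering-bound}. The two ``in particular'' statements then drop out of \eqref{eq:clustering-bound} by Markov's inequality together with a counting argument on how many $i\in I$ can have $c_i<\tfrac14$.

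The single mechanism driving step (i) is a \emph{feasibility estimate} for the ideal choice $w_i=\mu$. Recording first that $\lambda r^{2t}/\alpha = nB^{2t}$, hence $\thresh = 8nB^{2t}$, I would show that for every pseudodistribution $\xi$ on the sphere and every valid $\zeta_{1:n}$ (i.e.\ $\sum_i\pE_{\zeta_i(z_i)}[z_i^{\otimes 2t}]\psos\sI$),
\[
\sum_{i\in I}\!\Big(\pE_{\xi(v)}[\langle x_i-\mu,v\rangle^{2t}] + \lambda\,\pE_{\zeta_i(z_i)}[\langle \mu,z_i\rangle^{2t}]\Big) \;\le\; |I|B^{2t} + \lambda\|\mu\|_2^{2t} \;\le\; 2\alpha n B^{2t} \;=\; \tfrac{\alpha}{4}\thresh,
\]
the first term by the hypothesis $\|\E_{x\sim I}[(x-\mu)^{\otimes 2t}]\|_{\sos_{2t}}\le B$ applied to $\xi$, the second by evaluating the tensor inequality $\sum_i\pE_{\zeta_i}[z_i^{\otimes 2t}]\psos\sI$ at $v=\mu$ and using $\lambda r^{2t}=\alpha n B^{2t}$. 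Now if Algorithm~\ref{alg:basic-relaxation} outputs a dual certificate $\xi^\star,\zeta^\star_{1:n}$, applying the dual guarantee ``$\eqref{eq:optimization}\ge\thresh$ for all $w_{1:n}$'' to the coordinatewise minimizer gives $\sum_{i=1}^n c_i\tau_i^\star\ge\thresh$, while the feasibility estimate (with $c_i\le 1$, $\tau_i\ge0$) gives $\sum_{i\in I}c_i\tau_i^\star\le\tfrac{\alpha}{4}\thresh$. Hence $\sum_{i\notin I}c_i\tau_i^\star\ge(1-\tfrac{\alpha}{4})\thresh\ge\tfrac34\thresh$, so the weight removed from $I$ in that round, $\tau_{\max}^{-1}\sum_{i\in I}c_i\tau_i^\star$, is at most $\tfrac{\alpha}{3}$ times the weight removed from $[n]\setminus I$. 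Summing over all rounds and using that the outliers start with total weight $(1-\alpha)n$ and weights only decrease, $\sum_{i\in I}(1-c_i)\le\tfrac{\alpha}{3}(1-\alpha)n$, i.e.\ $\tfrac{1}{|I|}\sum_{i\in I}(1-c_i)\le\tfrac{1-\alpha}{3}$.

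For step (ii), when the loop exits with $w^\star_{1:n}$, the expression in \eqref{eq:optimization} is $\le 2\thresh$ for all $\xi$ and all valid $\zeta$; since its $\xi$-part and $\zeta$-part are separable and nonnegative, this splits into (a) $\sum_i c_i\pE_{\xi(v)}[\langle x_i-w^\star_i,v\rangle^{2t}]\le 2\thresh$ for all $\xi$ on the sphere, and (b) $\sum_i c_i\pE_{\zeta_i(z_i)}[\langle w^\star_i,z_i\rangle^{2t}]\le 2\thresh/\lambda = 16r^{2t}/\alpha$ for all valid $\zeta$. From (a) and the SOS-certificate for $I$, using the sum-of-squares triangle inequality $\langle a+b,v\rangle^{2t}\preceq_{2t}2^{2t-1}(\langle a,v\rangle^{2t}+\langle b,v\rangle^{2t})$ on $w^\star_i-\mu=(w^\star_i-x_i)+(x_i-\mu)$ (and strong SOS/pseudodistribution duality over the sphere to upgrade the bound in (a) to a genuine SOS inequality), I would derive $\big\langle\sum_{i\in I}c_i(w^\star_i-\mu)^{\otimes 2t},v^{\otimes 2t}\big\rangle\preceq_{2t}K\|v\|_2^{2t}$ with $K=\oo_t(nB^{2t})$. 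This is precisely what licenses the test choice $\zeta_i:=\delta_{(c_i/K)^{1/2t}(w^\star_i-\mu)}$ for $i\in I$ (and $\zeta_i:=\delta_0$ otherwise) to satisfy $\sum_i\pE_{\zeta_i}[z_i^{\otimes 2t}]=\tfrac1K\sum_{i\in I}c_i(w^\star_i-\mu)^{\otimes 2t}\psos\sI$; feeding it into (b) gives $\sum_{i\in I}c_i\langle w^\star_i,w^\star_i-\mu\rangle^{2t}=\oo_t(nB^{2t}r^{2t}/\alpha)$. Finally, the ``overfitting'' inner product controls the true distance via $\|w^\star_i-\mu\|_2^2=\langle w^\star_i,w^\star_i-\mu\rangle-\langle\mu,w^\star_i-\mu\rangle$: raising to the $2t$th power, summing against $c_i$, and controlling $\sum_{i\in I}c_i\langle\mu,w^\star_i-\mu\rangle^{2t}\le r^{2t}K$ (evaluate the boxed SOS inequality at $v=\mu/\|\mu\|_2$ and use $\|\mu\|_2\le r$) yields $\tfrac{1}{|I|}\sum_{i\in I}c_i\|w^\star_i-\mu\|_2^{4t}=\oo\!\big((4Br)^{2t}/\alpha^2\big)$.

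The main obstacle is step (ii): turning the two dual-feasibility conditions into a bound on the actual distances $\|w^\star_i-\mu\|_2$. Conditions (a) and (b) on their own only say that the residuals $x_i-w^\star_i$ have small moments and that the $w^\star_i$ do not ``correlate'' too strongly with low-moment test tensors; getting an honest $\ell_2$ bound requires (1) the triangle/Minkowski step to push the SOS-certifiability of $I$ from $x_i-\mu$ onto $w^\star_i-\mu$, (2) choosing the right test pseudodistribution $\zeta_i\propto(w^\star_i-\mu)$ whose normalization is \emph{itself} supplied by the bound obtained in (1), and (3) the identity relating $\langle w^\star_i,w^\star_i-\mu\rangle$ to $\|w^\star_i-\mu\|_2^2$, where one must absorb the $\langle\mu,w^\star_i-\mu\rangle$ cross term using only $\|\mu\|_2\le r$ — this is exactly where the $r$-dependence (and hence the need for the re-clustering step that follows) enters. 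Everything else is routine: the bookkeeping with $\lambda$ and $\thresh$, the filtering argument in step (i), and the Markov/counting deductions of the ``in particular'' claims (including the $\epsilon$-refined version when $\alpha=1-\epsilon$, which just uses a looser Markov threshold to capture a $1-\epsilon$ rather than a $\tfrac12$ fraction of $I$).
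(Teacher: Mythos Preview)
Your proposal is correct and follows essentially the same approach as the paper: the same two-case split on whether Algorithm~\ref{alg:basic-relaxation} returns a primal or a dual, the same feasibility-at-$\mu$ estimate $\sum_{i\in I}\tau_i(\mu)\le |I|B^{2t}+\lambda r^{2t}=\tfrac{\alpha}{4}\Gamma$ driving the reweighting analysis, and the same decomposition $\langle w^\star_i-\mu,w^\star_i-\mu\rangle=\langle w^\star_i-\mu,w^\star_i\rangle-\langle w^\star_i-\mu,\mu\rangle$ together with the test $\zeta_i\propto(w^\star_i-\mu)$ in the terminating case. You are in fact slightly more careful than the paper in making explicit that one needs SOS/pseudodistribution duality to upgrade the ``bounded for all $\xi$'' conclusion of (a) into the genuine SOS inequality $\sum_{i\in I}c_i(w^\star_i-\mu)^{\otimes 2t}\psos K\sI$ that licenses the normalization of the test $\zeta_i$; the paper simply asserts ``$w_i-\mu$ has small $2t$th pseudomoment'' at this step.
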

\begin{proof}
Given \eqref{eq:clustering-bound}, the remainder of Proposition~\ref{prop:clustering-bound} 
follows from repeated application of Markov's inequality. So, we will focus on establishing 
\eqref{eq:clustering-bound}, splitting into cases based on the value of 
\eqref{eq:optimization}. In particular, for the chosen threshold 
$\thresh = 4(nB^{2t} + \lambda r^{2t}/\goodfrac)$, note that either 
Algorithm~\ref{alg:basic-relaxation} outputs $\xi^{\star}$, $\zeta_{1:n}^{\star}$ showing 
that \eqref{eq:optimization} is large (at least $\thresh$), or it outputs $w_{1:n}^{\star}$ 
such that \eqref{eq:optimization} is small (at most $2\thresh$).

\textbf{Case 1: \eqref{eq:optimization} is large.}
Suppose that \eqref{eq:optimization} is large, 
and hence in particular 
$\sum_{i=1}^n c_i\tau_i^{\star} \geq 4(nB^{2t} + \lambda r^{2t}/\goodfrac)$. 
We start by showing that $\sum_{i \in I} \tau_i^{\star}$ is much smaller than this. 
Indeed, we have 
$\sum_{i \in I} \tau_i^{\star} \leq \sum_{i \in I} \tau_i(\mu)$.
To bound $\tau_i(\mu)$, first note that 
$\sum_{i \in I} \pE_{\xi(v)}[\langle x_i - \mu, v \rangle^{2t}] \leq |I|B^{2t}$, 
since we are assuming a sum-of-squares certificate for $I$. 
Also, 
$\sum_{i \in I} \pE_{\zeta_i(z_i)}[\langle \mu, z_i \rangle^{2t}] \leq \langle \mu^{\otimes 2t}, \sI \rangle \leq r^{2t}$. Therefore, 
\begin{equation}
\sum_{i \in I} c_i\tau_i^{\star} \leq \sum_{i \in I} \tau_i(\mu) \leq |I|B^{2t} + \lambda r^{2t}.
\end{equation}

Now, since $\sum_{i=1}^n c_i\tau_i^{\star} \geq 4(nB^{2t} + \lambda r^{2t}/\goodfrac)$ by assumption, 
the average of $\tau_i^{\star}$ over $\{1,\ldots,n\}$ is more than $4$ times 
larger than the average of $\tau_i^{\star}$ over $\goodset$ 
(since $|\goodset| \geq \goodfrac n$). Let $c_i$ and $c_i'$ be the values of 
the weights before and after the update on line~\ref{line:reweight}. 
We have, for any set $S$, 
\begin{equation}
\sum_{i \in S} c_i - c_i' = \frac{1}{\tau_{\max}} \sum_{i \in S} c_i \tau_i^{\star}.
\end{equation}
Therefore, the amount that the weights in a set $S$ decrease is proportional to
$\sum_{i \in S} c_i\tau_i^{\star}$. 
Since the average over $I$ is at most $\frac{1}{4}$ the average over $\{1,\ldots,n\}$, 
this means that the weights in $I$ decrease at most $\frac{1}{4}$ as fast as the 
weights overall. In particular, at the end of the algorithm we have  
$\sum_{i\in I} (1-c_i) \leq \frac{|I|}{4n} \big(\sum_{i \in I} (1-c_i) + \sum_{i \not\in I} (1-c_i)\big)$. Re-arranging yields $\frac{1}{|I|} \sum_{i \in I} (1-c_i) \leq \frac{1-|I|/n}{4-|I|/n} \leq \frac{1-|I|/n}{3}$. In particular, at most $\frac{1-\alpha}{3}$ of the weight of the  
good $c_i$ is removed at any stage in the algorithm (and hence in particular we eventually 
end up in case $2$).

\textbf{Case 2: \eqref{eq:optimization} is small.} 
Note that $\lambda$ is chosen so that 
$8(nB^{2t} + \lambda r^{2t}/\alpha) = 16nB^{2t}$. Therefore, 
if \eqref{eq:optimization} is small, then 
$\sum_{i \in I} c_i \pE_{\xi(v)}[\langle x_i - w_i^{\star}, v \rangle^{2t}] \leq 16nB^{2t}$ 
for all pseudodistributions $\xi(v)$ over the unit sphere, 
and by assumption $\sum_{i \in I} c_i \pE_{\xi(v)}[\langle x_i - \mu, v \rangle^{2t}] \leq |I|B^{2t} \leq nB^{2t}$ as well. Combining these, we obtain 
$\sum_{i \in I} c_i \bE_{\tilde{\mu}(v)}[\langle w_i^{\star} - \mu, v \rangle^{2t}] \leq 2^{2t-1} \cdot 17nB^{2t}$. So, $w_i - \mu$ has small $2t$th pseudomoment.

In addition, since \eqref{eq:optimization} is small we know that 
$\sum_{i \in I} c_i \pE_{\zeta_i(z_i)}[\langle w_i^{\star}, z_i \rangle^{2t}] \leq 16r^{2t}/\alpha$ 
whenever the $z_i$ have small $2t$th pseudomoment. 
Putting these together, we get
\begin{align}
\sum_{i \in I} \|w_i^{\star} - \mu\|_2^{4t}
 &= \sum_{i \in I} \langle w_i^{\star} - \mu, w_i^{\star} - \mu \rangle^{2t} \\
 &\leq 2^{2t-1} \sum_{i \in I} \langle w_i^{\star} - \mu, \mu \rangle^{2t} + \langle w_i^{\star} - \mu, w_i^{\star} \rangle^{2t} \\
 &\leq 2^{2t-1} \Big((2^{2t-1} 17nB^{2t}) \cdot \|\mu\|_2^{2t} + (2^{2t-1} 17nB^{2t}) \cdot (16r^{2t}/\alpha)\Big) \\
 &\leq 2^{4t-2} \cdot 289n (B r)^{2t} / \alpha.
\end{align}
In particular, we have 
\begin{equation}
\frac{1}{|I|} \sum_{i \in I} \|w_i^{\star} - \mu\|_2^{4t} \leq \oo(1) \cdot (4B r)^{2t} \cdot (n/\alpha |I|) = \oo((4 B r)^{2t} / \alpha^2).
\label{eq:clustering-bound-pre}
\end{equation}
This, together with the earlier bound $\frac{1}{|\goodset|} \sum_{i \in \goodset} (1-c_i) \leq \frac{1-|\goodset|/n}{3}$, yields \eqref{eq:clustering-bound}, which establishes the 
proposition.
\end{proof}

\subsection{Proving Theorem~\ref{thm:robust-clustering-eps} via Re-Centering}

Proposition~\ref{prop:clustering-bound} has a dependence on $r$, which we would like 
to get rid of. 
If $\alpha = 1-\epsilon$ where $\epsilon \leq \frac{1}{4}$, then there is a fairly 
simple strategy for doing so, based on approximately recovering the mean $\mu$ 
(with error that depends on $\sqrt{r}$) and re-running Algorithm~\ref{alg:clustering} 
after re-centering the points around the approximate mean.

The following key fact about the output of Algorithm~\ref{alg:clustering} 
will aid us in this (proof in Section~\ref{sec:resilience-proof}):
\newcommand{\rad}{\rho}
\begin{proposition}
\label{prop:resilience-small}
Let $w_{1:n}, c_{1:n}$ be the output of Algorithm~\ref{alg:clustering}. 
Let $S \subseteq \{1,\ldots,n\}$ be a subset of at least $\frac{n}{2}$ points such that (1) 
$c_i \geq \frac{1}{4}$ for all $i \in S$, and (ii) 
$\|w_i - \hat{\mu}\|_2 \leq \rad$ for all $i \in S$, where $\hat{\mu}$ is the mean of $S$. 

Then, for any subset $S'$ of $S$ with $|S'| \geq (1-\epsilon_1)|S|$ and 
$\epsilon_1 \leq \frac{1}{2}$, the mean of $S'$ is close to the mean $\hat{\mu}$ of $S$:
\begin{equation}
\Big\|\frac{1}{|S'|} \sum_{i \in S'} (x_i - \hat{\mu})\Big\|_2 \leq 2\epsilon_1 \rad + \oo(B \epsilon_1^{1-1/2t}).
\end{equation}
\end{proposition}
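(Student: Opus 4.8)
The plan is to reduce the statement to a one‑dimensional (``directional'') resilience estimate of the type introduced in \citet{steinhardt2018resilience}. The structural point is that $\hat\mu = \frac{1}{|S|}\sum_{i\in S}x_i$ is exactly the empirical mean of $S$, so $\sum_{i\in S}(x_i-\hat\mu)=0$ and hence $\frac{1}{|S'|}\sum_{i\in S'}(x_i-\hat\mu) = -\frac{1}{|S'|}\sum_{i\in S\setminus S'}(x_i-\hat\mu)$; it therefore suffices to bound the contribution of the deleted set $S\setminus S'$, which has at most $\epsilon_1|S|$ points. For a fixed unit vector $v$ I would split $\langle x_i-\hat\mu,v\rangle = \langle x_i-w_i,v\rangle + \langle w_i-\hat\mu,v\rangle$ and control the two pieces in different ways: the first using an \emph{average} $2t$‑th moment bound over $i\in S$, and the second using the \emph{pointwise} hypothesis $\|w_i-\hat\mu\|_2\le\rad$.

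The first thing to establish is the moment bound. Since Algorithm~\ref{alg:clustering} returned $w^{\star}_{1:n}$, this $w^{\star}_{1:n}$ satisfies \eqref{eq:optimization} for all pseudodistributions $\xi$ on the sphere and all valid $\zeta_{1:n}$. Choosing each $\zeta_i$ to be the point mass at $0$ (legitimate since $0 \psos \sI$) kills the $\lambda$‑term, and choosing $\xi$ to be the point mass at an arbitrary unit vector $v$, while recalling that the threshold in Algorithm~\ref{alg:clustering} satisfies $2\thresh = 16nB^{2t}$, gives $\sum_{i=1}^n c_i\langle x_i-w^{\star}_i,v\rangle^{2t}\le 16nB^{2t}$. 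Restricting the sum to $i\in S$ and using $c_i\ge\frac14$ on $S$ together with $|S|\ge n/2$ then yields $\sum_{i\in S}\langle x_i-w^{\star}_i,v\rangle^{2t}\le \oo(|S|B^{2t})$ for every unit vector $v$.

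Given this, the resilience estimate is routine. Fixing a unit vector $v$ and using the splitting above together with $|\langle w_i-\hat\mu,v\rangle|\le\rad$, one gets
\(
\Big|\Big\langle\tfrac1{|S'|}\textstyle\sum_{i\in S'}(x_i-\hat\mu),\,v\Big\rangle\Big| \;\le\; \frac1{|S'|}\sum_{i\in S\setminus S'}\big|\langle x_i-w^{\star}_i,v\rangle\big| \;+\; \frac{|S\setminus S'|}{|S'|}\,\rad .
\)
The last term is at most $2\epsilon_1\rad$ since $|S\setminus S'|\le\epsilon_1|S|$ and $|S'|\ge(1-\epsilon_1)|S|\ge\frac12|S|$. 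For the first term I would apply H\"older's inequality with exponents $2t$ and $\tfrac{2t}{2t-1}$ over the at most $\epsilon_1|S|$ deleted indices, then enlarge the sum to all of $S$ and plug in the moment bound; this contributes $\oo(B\,\epsilon_1^{1-1/2t})$. Since the resulting bound holds for every unit $v$, taking $v$ in the direction of $\frac1{|S'|}\sum_{i\in S'}(x_i-\hat\mu)$ gives the claim.

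The step I expect to be the main ``gotcha'' (rather than a genuine difficulty) is that one must never pass to the Euclidean norm $\|x_i-w^{\star}_i\|_2$: the optimization \eqref{eq:optimization} only controls the directional quantities $\langle x_i-w^{\star}_i,v\rangle$, and bounding $\|x_i-w^{\star}_i\|_2$ from the directional bounds would lose a factor polynomial in $d$. This is exactly why the decomposition peels off $w_i-\hat\mu$, whose \emph{Euclidean} norm is controlled pointwise by hypothesis, while keeping the residual in directional form throughout; it is also why it is important that $\hat\mu$ is the empirical mean of the $x_i$ over $S$, which makes the reduction to the deleted set exact rather than just approximate.
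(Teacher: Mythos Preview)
Your proposal is correct and follows essentially the same route as the paper: reduce to the deleted set $S\setminus S'$ using that $\hat\mu$ is the mean of $S$, split $x_i-\hat\mu=(x_i-w_i^\star)+(w_i^\star-\hat\mu)$, bound the second piece pointwise by $\rho$, and control the first via the $2t$th-moment bound $\sum_i c_i\langle x_i-w_i^\star,v\rangle^{2t}\le \oo(nB^{2t})$ coming from \eqref{eq:optimization}. The paper packages this last step as Proposition~\ref{prop:resilience-large} (proved by the power-mean inequality) and then invokes it on $S\setminus S'$, whereas you inline the H\"older computation directly; the argument is the same.
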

The property that all large subsets of $S$ have mean close to the mean of $S$ is called 
\emph{resilience}, and was introduced in \citet{steinhardt2018resilience}.
There it is shown that one can obtain a good approximation to the true mean 
$\mu$ by finding any large resilient set. Here we repeat the short argument of this fact, 
and show that a large resilient set can be found efficiently.

First note that when $\epsilon = 1-\alpha \leq \frac{1}{2}$, 
Proposition~\ref{prop:clustering-bound} guarantees that 
a large subset $S_0$ of at least $1-\epsilon$ of the points in $I$ satisfies the conditions of 
Proposition~\ref{prop:resilience-small} with radius 
$\rad = C \sqrt{rB \epsilon^{-1/2t}}$ for some appropriate constant $C$. 
Moreover, we can efficiently find such a set (with radius at most $4\rad$) as follows:
\begin{itemize}
\item Find any point $i_0$ such that there are at least $(1-\epsilon)n$ points $j$ 
      with $\|w_{i_0} - w_j\|_2 \leq 2\rad$ and $c_j \geq \frac{1}{4}$.
\item Output these $(1-\epsilon)n$ indices $j$ as the set $S$.
\end{itemize}
We know that some set $S$ will be found by this procedure (since any $i_0 \in S_0$ 
will satisfy the conditions). Moreover, by the triangle inequality all points in $S$ 
will be distance at most $4\rad$ from the mean of $S$.

Now, both $S$ and $S_0$ satisfy the assumptions of Proposition~\ref{prop:resilience-small}, 
and their intersection $S \cap S_0$ has size at least $(1-2\epsilon)n \geq (1-2\epsilon)|S|$. 
Therefore, assuming $\epsilon \leq \frac{1}{4}$, 
the mean $\mu$ of $S$ is close to the mean $\hat{\mu}$ of $S_0$ (since they are both 
close to the mean of $S \cap S_0$ by Proposition~\ref{prop:resilience-small}). This yields:
\begin{equation}
\label{eq:mu-mu-hat}
\|\mu - \hat{\mu}\|_2 = \oo(\epsilon \rad + B \epsilon^{1-1/2t}) = \oo(\epsilon \sqrt{rB \epsilon^{-1/2t}} + B\epsilon^{1-1/2t}).
\end{equation}
By finding any set $S_0$ as above and computing its mean, we obtain
a point $\hat{\mu}$ that satisfies \eqref{eq:mu-mu-hat}.
%
We can then re-run Algorithm~\ref{alg:clustering} centered at the new point $\hat{\mu}$ 
(i.e., shifting all of the points by $\hat{\mu}$ as a pre-processing step). When we 
do this, we can use a smaller radius $r' = C \cdot (\sqrt{r B \epsilon^{2-1/2t}} + B \epsilon^{1-1/2t})$ for an appropriate universal constant $C$. 
After repeating this a small number of times, we eventually 
end up with a radius that is $\oo(B \epsilon^{1-1/2t})$. This yields 
Theorem~\ref{thm:robust-clustering-eps}.

\subsection{Proving Theorem~\ref{thm:robust-clustering-alpha} via Re-Clustering}
\label{sec:re-cluster}

We now turn to the more complicated case where potentially $\alpha \ll 1$. 
Proposition~\ref{prop:clustering-bound} says that Algorithm~\ref{alg:clustering} somewhat 
succeeds at identifying the true mean: while initially we only knew that 
$\|\mu\|_2 \leq r$, the output of Algorithm~\ref{alg:clustering} has at least 
$\frac{\goodfrac n}{2}$ points such that 
$\|w_i - \mu\|_2 \leq \oo(\sqrt{r B \goodfrac^{-1/m}})$. However, 
our goal is to obtain a point $w$ such that $\|w - \mu\|_2 = \oo(B \goodfrac^{-1/m})$, with 
no dependence on $r$. To obtain this stronger bound, the intuition is to 
sub-divide the $w_i$ into clusters of radius 
$\oo(\sqrt{rB \goodfrac^{-1/m}})$, and then recursively apply 
Algorithm~\ref{alg:clustering} to each cluster.

As in the previous section, we will make use of the resilience property in order 
to find a good clustering. In this case, we need the following analog of 
Proposition~\ref{prop:resilience-small} (proof in Section~\ref{sec:resilience-proof}):
\begin{proposition}
\label{prop:resilience-large}
Let $w_1, \ldots, w_n$ be the output of Algorithm~\ref{alg:clustering}. 
Let $S \subseteq \{1,\ldots,n\}$ be a subset of points such that (1) 
$c_i \geq \frac{1}{4}$ for all $i \in S$, and (ii) 
$\|w_i - \hat{\mu}\|_2 \leq \rad$ for all $i \in S$ for some $\hat{\mu}$.

Then, for any subset $S'$ of $S$ with $|S'| \geq \beta n$, the 
mean of $S'$ is close to $\hat{\mu}$:
\begin{equation}
\Big\|\frac{1}{|S'|} \sum_{i \in S'} (x_i - \hat{\mu})\Big\|_2 \leq \rho + \oo(B / \beta^{1/2t}).
\end{equation}
\end{proposition}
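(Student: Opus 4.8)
The plan is to transfer the problem from the raw points $x_i$ to the fitted points $w_i$, whose averages over large subsets are controlled directly by the value guarantee of the Basic Clustering Relaxation. For any $S'\subseteq S$ write
\[
\frac{1}{|S'|}\sum_{i\in S'}(x_i-\hat\mu)=\frac{1}{|S'|}\sum_{i\in S'}(w_i-\hat\mu)+\frac{1}{|S'|}\sum_{i\in S'}(x_i-w_i),
\]
and bound the two terms separately. The first term has norm at most $\rho$ by the triangle inequality together with hypothesis (ii), $\|w_i-\hat\mu\|_2\le\rho$ for all $i\in S\supseteq S'$. So the entire content is in bounding the second term by $\oo(B/\beta^{1/2t})$.

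For the second term, let $v$ be the unit vector pointing along $\frac{1}{|S'|}\sum_{i\in S'}(x_i-w_i)$, so its norm equals $\frac{1}{|S'|}\sum_{i\in S'}\iprod{x_i-w_i,v}\le\frac{1}{|S'|}\sum_{i\in S'}|\iprod{x_i-w_i,v}|$, which by the power-mean inequality is at most $\big(\frac{1}{|S'|}\sum_{i\in S'}\iprod{x_i-w_i,v}^{2t}\big)^{1/2t}$. Since $c_i\ge\tfrac14$ for every $i\in S'$, we have $\sum_{i\in S'}\iprod{x_i-w_i,v}^{2t}\le 4\sum_{i=1}^n c_i\iprod{x_i-w_i,v}^{2t}$. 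Now Algorithm~\ref{alg:clustering} returns $w_{1:n}$ only after Algorithm~\ref{alg:basic-relaxation} certifies that the expression in \eqref{eq:optimization} is at most $2\thresh$ for \emph{all} valid $\xi,\zeta_{1:n}$; choosing $\xi$ to be the point mass at $v$ (a genuine distribution on the sphere, hence a valid pseudodistribution) and each $\zeta_i$ the point mass at $0$ (so $\sum_i\pE_{\zeta_i}[z_i^{\otimes 2t}]=0\psos\sI$) makes the $\lambda$-term vanish and yields $\sum_{i=1}^n c_i\iprod{x_i-w_i,v}^{2t}\le 2\thresh$. Hence $\frac{1}{|S'|}\sum_{i\in S'}\iprod{x_i-w_i,v}^{2t}\le 8\thresh/|S'|\le 8\thresh/(\beta n)$.

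It remains to plug in the threshold. In Algorithm~\ref{alg:clustering} we set $\lambda=\alpha n(B/r)^{2t}$, so $\lambda r^{2t}/\alpha=nB^{2t}$ and therefore $\thresh=4(nB^{2t}+\lambda r^{2t}/\alpha)=8nB^{2t}$. Thus $8\thresh/(\beta n)=64B^{2t}/\beta$, and the second term is at most $(64B^{2t}/\beta)^{1/2t}=64^{1/2t}\,B/\beta^{1/2t}=\oo(B/\beta^{1/2t})$. Adding the two bounds gives the claimed inequality. I do not expect a genuine obstacle here; the only points requiring care are invoking the relaxation guarantee in the right form — namely that it controls $\sum_i c_i\iprod{x_i-w_i,v}^{2t}$ in every direction $v$, not merely that each $w_i$ is individually close to $x_i$ — and the constant book-keeping for $\thresh$. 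Note this argument uses hypothesis (ii) only for the $w_i-\hat\mu$ piece and otherwise relies solely on the weight lower bound (1) and the size bound $|S'|\ge\beta n$, which is why no ``$|S|\ge n/2$'' assumption is needed (in contrast to Proposition~\ref{prop:resilience-small}).
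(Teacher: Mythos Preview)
Your proof is correct and follows essentially the same approach as the paper's: decompose $x_i-\hat\mu$ into $(w_i-\hat\mu)+(x_i-w_i)$, bound the first part by $\rho$, and control the second via the power-mean inequality together with the relaxation guarantee $\sum_i c_i\iprod{x_i-w_i,v}^{2t}\le \oo(nB^{2t})$ and the weight lower bound $c_i\ge\tfrac14$. Your version is slightly more explicit in justifying the choice of $\xi,\zeta_{1:n}$ and in tracking the constant through $\thresh$, but the argument is the same.
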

Again, sets $S$ satisfying the conclusion of Proposition~\ref{prop:resilience-large} are 
called resilient sets.
Similarly to the case above where $\alpha = 1-\epsilon$, \citet{steinhardt2018resilience}
shows that, by covering $\{1,\ldots,n\}$ with approximately disjoint resilient 
sets, one can obtain a list of $\oo(1/\alpha)$ candidate means $\hat{\mu}$ such that one 
of the candidates is close to the true mean $\mu$. We will review their argument here, 
extending it to suit our purposes.

Our final algorithm is going to run Algorithm~\ref{alg:clustering} centered at a number 
of different points $\tilde{\mu}_1, \ldots, \tilde{\mu}_b$ and then consolidate their 
outputs into a small number of clusters (and then iteratively re-run 
Algorithm~\ref{alg:clustering} at a new set of points based on the centers of those clusters).

Specifically, suppose that Algorithm~\ref{alg:clustering} has just been run 
around points $\tilde{\mu}_1, \ldots, \tilde{\mu}_b$, and that some unknown 
$\tilde{\mu}_{a^*}$ (with $1 \leq a^* \leq b$) is close to the target $\mu$:
$\|\tilde{\mu}_{a^*} - \mu\|_2 \leq R$. Let 
$w_{1:n}^{(1)}, \ldots, w_{1:n}^{(b)}$ and $c_{1:n}^{(1)}, \ldots, c_{1:n}^{(b)}$ 
denote the outputs of Algorithm~\ref{alg:clustering} on $\tilde{\mu}_1, \ldots, \tilde{\mu}_b$. 
By Proposition~\ref{prop:clustering-bound}, we have that 
$c_i^{(a^*)} \geq \frac{1}{4}$ and $\|w_i^{(a^*)} - \mu\|_2 \leq \rad$ 
for at least $\frac{\goodfrac n}{2}$ values of $i$, 
with $\rad = C \sqrt{RB/\alpha^{1/t}}$.

Now, consider any maximal collection of sets 
$S_1, \ldots, S_m \subseteq [n]$ with the following properties:
\begin{itemize}
\item $|S_j| \geq \frac{\alpha}{4} n$ for all $j$.
\item $S_j$ and $S_{j'}$ are disjoint for $j \neq j'$
\item For each $j$, there is some $a$ and $i_0 \in S_j$ such that $\|w_i^a - w_{i_0}^a\|_2 \leq 2\rad$ for all $i \in S_j$, and moreover $c_i^a \geq \frac{1}{4}$ for all $i \in S_j$.
\end{itemize}
Note that such a collection can be found efficiently via a greedy algorithm: For each $i_0$ and 
$a$, check if there are at least $\frac{\alpha}{4}n$ indices $i$ which do not yet lie in any of 
the existing $S_j$, and such that $\|w_i^a - w_{i_0}^a\|_2 \leq 2\rad$ and $c_i^a \geq \frac{1}{4}$. If there are, we 
can add those indices $i$ as a new set $S_j$, and otherwise the collection of existing $S_j$ 
is maximal. Similarly to \citet{steinhardt2018resilience}, we then have:
\begin{proposition}
\label{prop:covering}
For each $S_j$, let $\hat{\mu}_j$ be the mean of $S_j$. Then, at least one of the 
$\hat{\mu}_j$ satisfy $\|\hat{\mu}_j - \mu\|_2 \leq C' (\sqrt{RB/\alpha^{1/t}} + B/\alpha^{1/t})$ 
for some universal constant $C'$.
\end{proposition}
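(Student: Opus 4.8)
The plan is to show that one of the greedily constructed sets $S_j$ has large overlap with the ``good'' set produced by Proposition~\ref{prop:clustering-bound}, and then to use the resilience property (Proposition~\ref{prop:resilience-large}) to transfer a mean estimate from that overlap up to the whole of $S_j$. Recall that for the run of Algorithm~\ref{alg:clustering} around $\tilde{\mu}_{a^*}$, Proposition~\ref{prop:clustering-bound} (with radius bound $r = R$) guarantees a radius $\rad = C\sqrt{RB/\alpha^{1/t}}$ such that the set $S_0 = \{\, i : c_i^{(a^*)} \geq 1/4 \text{ and } \|w_i^{(a^*)} - \mu\|_2 \leq \rad \,\}$ has size $|S_0| \geq \alpha n / 2$.

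First I would note that $S_0$ is internally well clustered: by the triangle inequality through $\mu$, any $i, i' \in S_0$ satisfy $\|w_i^{(a^*)} - w_{i'}^{(a^*)}\|_2 \leq 2\rad$, and every $i \in S_0$ has $c_i^{(a^*)} \geq 1/4$. Therefore any subset of $S_0$ of size at least $\alpha n / 4$ that is disjoint from $S_1 \cup \cdots \cup S_m$ would be a legal set to append to the collection (with witness index $a^*$ and $i_0$ any element of that subset). Since $S_1, \ldots, S_m$ is maximal, no such subset exists, so $|S_0 \setminus \bigcup_j S_j| < \alpha n / 4$, whence $|S_0 \cap \bigcup_j S_j| \geq \alpha n / 4$. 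The $S_j$ are pairwise disjoint and each has size at least $\alpha n / 4$, so $m \leq 4/\alpha$, and by pigeonhole some $j^*$ has $|S_0 \cap S_{j^*}| \geq (\alpha n / 4)/(4/\alpha) = \alpha^2 n / 16$.

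Now let $T = S_0 \cap S_{j^*}$, and let $a$ and $i_0 \in S_{j^*}$ be the witnesses for $S_{j^*}$ from the definition of the collection, so that $\|w_i^{(a)} - w_{i_0}^{(a)}\|_2 \leq 2\rad$ and $c_i^{(a)} \geq 1/4$ for all $i \in S_{j^*}$. I would apply Proposition~\ref{prop:resilience-large} three times. Applied with $S = S_0$, witness point $\mu$, radius $\rad$, and subset $S' = T$ (so $\beta = \alpha^2/16$, giving $\beta^{1/2t} = \Theta(\alpha^{1/t})$), it shows the mean of $\{x_i\}_{i \in T}$ lies within $\rad + \oo(B/\alpha^{1/t})$ of $\mu$. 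Applied with $S = S_{j^*}$, witness point $w_{i_0}^{(a)}$, radius $2\rad$, and again $S' = T$, it shows the same mean lies within $2\rad + \oo(B/\alpha^{1/t})$ of $w_{i_0}^{(a)}$; combining these two bounds through $T$ gives $\|w_{i_0}^{(a)} - \mu\|_2 \leq 3\rad + \oo(B/\alpha^{1/t})$. Applied once more with $S = S_{j^*}$, witness $w_{i_0}^{(a)}$, radius $2\rad$, and $S' = S_{j^*}$ itself (so $\beta = \alpha/4$), it shows $\|\hat{\mu}_{j^*} - w_{i_0}^{(a)}\|_2 \leq 2\rad + \oo(B/\alpha^{1/2t})$. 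Chaining the last two bounds and using $\alpha \leq 1$ (so $\alpha^{-1/2t} \leq \alpha^{-1/t}$) gives $\|\hat{\mu}_{j^*} - \mu\|_2 \leq 5\rad + \oo(B/\alpha^{1/t})$, and substituting $\rad = C\sqrt{RB/\alpha^{1/t}}$ yields the claimed bound $\oo(\sqrt{RB/\alpha^{1/t}} + B/\alpha^{1/t})$.

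The step I expect to be the main obstacle is the maximality argument in the second paragraph: one must carefully verify that a large enough sub-collection of $S_0$ really does meet \emph{all} of the defining conditions of the greedy collection --- size at least $\alpha n / 4$, disjointness from the existing $S_j$, a common witness index (namely $a^*$) with pairwise $w^{(a^*)}$-distance at most $2\rad$, and all weights at least $1/4$ --- so that its non-addability genuinely forces $|S_0 \setminus \bigcup_j S_j|$ to be small. A secondary but essential point is purely arithmetic: the pigeonhole step drives the overlap fraction down to $\beta = \Theta(\alpha^2)$, and it is exactly the identity $\beta^{1/2t} = \Theta(\alpha^{1/t})$ that keeps the resilience error term at the target rate $B/\alpha^{1/t}$ rather than something larger.
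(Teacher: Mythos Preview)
Your proposal is correct and follows essentially the same approach as the paper: the maximality/pigeonhole argument to find an $S_{j^*}$ with $|S_0\cap S_{j^*}|\ge \alpha^2 n/16$ is identical, and the finish via Proposition~\ref{prop:resilience-large} is the same idea. The only cosmetic difference is that the paper compresses your three resilience applications into two by asserting directly that the mean of $S_0\cap S_{j^*}$ is close to $\hat{\mu}_{j^*}$; your version, routing through the intermediate point $w_{i_0}^{(a)}$, is actually the more careful way to invoke Proposition~\ref{prop:resilience-large} as stated.
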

\begin{proof}
Note that the set $S = \{i \mid c_i^{(a^*)} \geq \frac{1}{4} \text{ and } \|w_i^{(a^*)} - \mu\|_2 \leq \rho\}$ satisfies the necessary conditions to be a valid set $S_j$, and has 
size at least $\frac{\goodfrac n}{2}$. 
Therefore, by the maximality of the collection $S_1, \ldots, S_m$, at least 
$\frac{\alpha}{4}n$ points in $S$ must lie in $S_1 \cup \cdots \cup S_m$ (as otherwise 
$S \backslash (S_1 \cup \cdots \cup S_m)$ could be added to the collection). 
By Pigeonhole, at least $\frac{\alpha}{4m}n$ points must lie in a specific $S_i$, and since 
the $S_j$ are disjoint we have $m \leq \frac{4}{\alpha}$. Therefore, 
$|S \cap S_i| \geq \frac{\alpha^2}{16}n$ for some $i$. 
But by Proposition~\ref{prop:resilience-large} applied to $S_j$, 
the mean of $S \cap S_j$ must be within $\oo(\rho + B / \alpha^{1/t})$ of the 
mean $\hat{\mu}_j$ of $S_j$. Similarly, by Proposition~\ref{prop:resilience-large} 
applied to $S$, it is within $\oo(\rho + B/\alpha^{1/t})$ of $\mu$. 
Therefore, $\|\mu - \hat{\mu}_j\|_2 \leq \oo(\rho + B/\alpha^{1/t}) = \oo(\sqrt{RB/\alpha^{1/t}} + B / \alpha^{1/t})$, 
as was to be shown.
\end{proof}

\begin{algorithm}
\caption{Algorithm for re-clustering the $w_i$}
\label{alg:re-clustering}
\label{alg:re-cluster}
\setlength{\abovedisplayskip}{0pt}
\setlength{\belowdisplayskip}{0pt}
\setlength{\abovedisplayshortskip}{0pt}
\setlength{\belowdisplayshortskip}{0pt}
\begin{algorithmic}[1]
\STATE Input: $x_1,\ldots,x_n$, $\goodfrac$, $B$, and upper bound $r$ on $\|\mu\|_2$
\STATE Initialize $R \gets r$, $U \gets \{0\}$.
\STATE $r_{\mathrm{final}} \gets \Theta(B \goodfrac^{-1/m})$.
\WHILE{$R \geq r_{\mathrm{final}}$}
  \STATE $b \gets 1$
  \FOR{$\hat{\mu} \in U$}
    \STATE Let $w_{1:n}^{(b)}$, $c_{1:n}^{(b)}$ be the output of Algorithm~\ref{alg:clustering} 
           centered at $\hat{\mu}$ with parameters $B$, $\goodfrac$, $R$.
    \STATE $b \gets b+1$
  \ENDFOR
  \STATE Let $S_1, \ldots, S_m$ be the maximal covering derived from $w_{1:n}^{(1:b)}$, $c_{1:n}^{(1:b)}$ as 
         in Proposition~\ref{prop:covering}.
  \STATE $U \gets \{\hat{\mu}_j \}_{j=1}^m$, where $\hat{\mu}_j$ is the mean of $S_j$
  \STATE $R \gets C'(\sqrt{RB/\alpha^{1/t}} + B/\alpha^{1/t})$.
\ENDWHILE
\RETURN $U$
\end{algorithmic}
\end{algorithm}

This leads to Algorithm~\ref{alg:re-cluster}, which iteratively runs 
Algorithm~\ref{alg:cluster}, using Proposition~\ref{prop:covering} to find a set of 
at most $\frac{4}{\alpha}$ means such that one of them grows increasingly close to $\mu$
with each iteration. By repeatedly applying Proposition~\ref{prop:covering}, we can show 
that 
the output $U = \{\hat{\mu}_1, \ldots, \hat{\mu}_m\}$ of 
Algorithm~\ref{alg:re-cluster} satisfies 
$m \leq \frac{4}{\alpha}$ and $\|\hat{\mu}_j - \mu\|_2 \leq \oo(B/\alpha^{1/t})$, 
which yields Theorem~\ref{thm:robust-clustering-alpha}.

\subsection{Theorem~\ref{thm:clustering-separated}: Tighter bounds for well-separated clusters}
Suppose that rather than having a single good set $I$ with associated mean 
$\mu$, we have $k$ good sets $I_1, \ldots, I_k$, with corresponding 
means $\mu_1, \ldots, \mu_k$ and sizes $\alpha_1n, \ldots, \alpha_kn$, 
together with a fraction $\epsilon = 1-(\alpha_1 + \cdots + \alpha_k)$ 
of arbitrary outliers. In this case, assuming that the $\mu_j$ are 
\emph{well-separated}, we can get substantially stronger bounds by showing that every 
$S_j$ obtained in Algorithm~\ref{alg:re-cluster} is almost entirely a subset of one of the 
$I_{j'}$.
This is what yields Theorem~\ref{thm:clustering-separated}.

\begin{proof}[Proof of Theorem~\ref{thm:clustering-separated}]
%
We first show that in the final iteration of Algorithm~\ref{alg:re-cluster}, 
the $S_j$ from Proposition~\ref{prop:covering} must all have their 
mean $\hat{\mu}_j$ be close to a true mean $\mu_{j'}$. 
Indeed, a given $S_j$ has at most $\epsilon n \leq \frac{1}{8}\alpha n$ outlier points, 
and hence at least $\frac{1}{8}\alpha n$ points from $I_1 \cup \cdots \cup I_k$. 
By Pigeonhole it therefore has at least 
$\frac{1}{8}\alpha \cdot \alpha_{j'} n$ points from some $I_{j'}$. By 
Proposition~\ref{prop:resilience-large}, the mean of these 
points is within distance $\oo(B / \alpha^{1/2t})$ of $\mu_{j'}$, and within 
distance $\oo(B / \alpha^{1/t})$ of $\hat{\mu}_j$, and so 
$\|\mu_{j'} - \hat{\mu}_j\|_2 \leq \oo(B / \alpha^{1/t})$.

Moreover, each cluster $S_j$ is mostly ``pure'' in the sense that most points come from 
a single $I_{j'}$. Indeed, suppose for the sake of contradiction 
that more than $\frac{1}{4} \delta \alpha \alpha_{j''}$ 
points came from any $I_{j''} \neq I_{j'}$, with $\delta = \Theta(\Csep^{-2t})$. 
Then the mean of $S_j \cap I_{j''}$ would be 
within $\oo(B / (\alpha^2 \delta)^{1/2t}) = \frac{1}{3}\Csep B / \alpha^{1/t}$ of 
$\mu_{j''}$, while we already had that the mean of $S_j \cap I_{j'}$ was within 
$\oo(B / \alpha^{1/2t}) \leq \frac{1}{3}C_0 B / \alpha^{1/t}$ of $\mu_{j'}$ (for an 
appropriate constant $C_0$).
But $\|\mu_{j'} - \mu_{j''}\|_2 \geq \Csep B / \alpha^{1/t}$ and $S_j$ has radius 
$\oo(B / \alpha^{1/t}) < \frac{1}{3}C_0 B / \alpha^{1/t}$, which yields a contradiction whenever 
$\Csep \geq C_0$. Summing over ${j''}$, we have 
that at most $\frac{1}{4}\delta \alpha n \leq \delta |S_j|$ points 
come from any $I_{j''}$ with $j'' \neq j'$ (here we use $|S_j| \geq \frac{\alpha}{4}n$).

Now, because all of the sets $S_j$ have mean close to some $\mu_{j'}$, and the 
$\mu_{j'}$ are all far apart, we can consolidate the sets $S_j$ into $k$ new sets 
$\tilde{S}_1, \ldots, \tilde{S}_k$ (by e.g. merging together all $S_{j}$ whose 
means are within distance $(\Csep/4) \cdot B / \alpha^{1/t}$).
Each $\tilde{S}_j$ will have at most $\delta|\tilde{S}_j|$ points from $\cup_{j' \neq j} I_{j'}$, 
and will have at most $\epsilon n = (\epsilon / \alpha_j) |\tilde{S}_j|$ 
outliers. We then invoke Theorem~\ref{thm:robust-clustering-eps} 
with a $\epsilon/\alpha_j + \delta \leq 
\epsilon/\alpha + \oo(\Csep^{-2t})$ fraction of outliers;
as long as $\epsilon \leq \frac{\alpha}{8}$, the fraction of outliers will be below 
$\frac{1}{4}$ for sufficiently large $\Csep$, and hence we obtain the claimed error bound.
\end{proof}

\addreferencesection
\bibliographystyle{amsalpha}{}
\bibliography{bib/mathreview,bib/dblp,bib/scholar,bib/custom,refdb/all}

\newcommand{\etalchar}[1]{$^{#1}$}
\def\dbar{\leavevmode\hbox to 0pt{\hskip.2ex \accent"16\hss}d}
  \def\dbar{\leavevmode\hbox to 0pt{\hskip.2ex \accent"16\hss}d}
  \def\dbar{\leavevmode\hbox to 0pt{\hskip.2ex \accent"16\hss}d}
  \def\romsup#1{{\edef\next{\the\font}$^{\next#1}$}}
  \def\polhk#1{\setbox0=\hbox{#1}{\ooalign{\hidewidth
  \lower1.5ex\hbox{`}\hidewidth\crcr\unhbox0}}}
  \def\polhk#1{\setbox0=\hbox{#1}{\ooalign{\hidewidth
  \lower1.5ex\hbox{`}\hidewidth\crcr\unhbox0}}}
  \def\polhk#1{\setbox0=\hbox{#1}{\ooalign{\hidewidth
  \lower1.5ex\hbox{`}\hidewidth\crcr\unhbox0}}}
  \def\polhk#1{\setbox0=\hbox{#1}{\ooalign{\hidewidth
  \lower1.5ex\hbox{`}\hidewidth\crcr\unhbox0}}}
  \def\polhk#1{\setbox0=\hbox{#1}{\ooalign{\hidewidth
  \lower1.5ex\hbox{`}\hidewidth\crcr\unhbox0}}}
  \def\ocirc#1{\ifmmode\setbox0=\hbox{$#1$}\dimen0=\ht0 \advance\dimen0
  by1pt\rlap{\hbox to\wd0{\hss\raise\dimen0
  \hbox{\hskip.2em$\scriptscriptstyle\circ$}\hss}}#1\else {\accent"17 #1}\fi}
  \def\cfac#1{\ifmmode\setbox7\hbox{$\accent"5E#1$}\else
  \setbox7\hbox{\accent"5E#1}\penalty 10000\relax\fi\raise 1\ht7
  \hbox{\lower1.15ex\hbox to 1\wd7{\hss\accent"13\hss}}\penalty 10000
  \hskip-1\wd7\penalty 10000\box7}
  \def\cfac#1{\ifmmode\setbox7\hbox{$\accent"5E#1$}\else
  \setbox7\hbox{\accent"5E#1}\penalty 10000\relax\fi\raise 1\ht7
  \hbox{\lower1.15ex\hbox to 1\wd7{\hss\accent"13\hss}}\penalty 10000
  \hskip-1\wd7\penalty 10000\box7}
  \def\cfac#1{\ifmmode\setbox7\hbox{$\accent"5E#1$}\else
  \setbox7\hbox{\accent"5E#1}\penalty 10000\relax\fi\raise 1\ht7
  \hbox{\lower1.15ex\hbox to 1\wd7{\hss\accent"13\hss}}\penalty 10000
  \hskip-1\wd7\penalty 10000\box7}
  \def\ocirc#1{\ifmmode\setbox0=\hbox{$#1$}\dimen0=\ht0 \advance\dimen0
  by1pt\rlap{\hbox to\wd0{\hss\raise\dimen0
  \hbox{\hskip.2em$\scriptscriptstyle\circ$}\hss}}#1\else {\accent"17 #1}\fi}
  \def\ocirc#1{\ifmmode\setbox0=\hbox{$#1$}\dimen0=\ht0 \advance\dimen0
  by1pt\rlap{\hbox to\wd0{\hss\raise\dimen0
  \hbox{\hskip.2em$\scriptscriptstyle\circ$}\hss}}#1\else {\accent"17 #1}\fi}
  \def\polhk#1{\setbox0=\hbox{#1}{\ooalign{\hidewidth
  \lower1.5ex\hbox{`}\hidewidth\crcr\unhbox0}}}
  \def\ocirc#1{\ifmmode\setbox0=\hbox{$#1$}\dimen0=\ht0 \advance\dimen0
  by1pt\rlap{\hbox to\wd0{\hss\raise\dimen0
  \hbox{\hskip.2em$\scriptscriptstyle\circ$}\hss}}#1\else {\accent"17 #1}\fi}
  \def\polhk#1{\setbox0=\hbox{#1}{\ooalign{\hidewidth
  \lower1.5ex\hbox{`}\hidewidth\crcr\unhbox0}}}
  \def\cfudot#1{\ifmmode\setbox7\hbox{$\accent"5E#1$}\else
  \setbox7\hbox{\accent"5E#1}\penalty 10000\relax\fi\raise 1\ht7
  \hbox{\raise.1ex\hbox to 1\wd7{\hss.\hss}}\penalty 10000 \hskip-1\wd7\penalty
  10000\box7} \def\cfac#1{\ifmmode\setbox7\hbox{$\accent"5E#1$}\else
  \setbox7\hbox{\accent"5E#1}\penalty 10000\relax\fi\raise 1\ht7
  \hbox{\lower1.15ex\hbox to 1\wd7{\hss\accent"13\hss}}\penalty 10000
  \hskip-1\wd7\penalty 10000\box7} \def\cdprime{$''$}
  \def\polhk#1{\setbox0=\hbox{#1}{\ooalign{\hidewidth
  \lower1.5ex\hbox{`}\hidewidth\crcr\unhbox0}}}
  \def\cftil#1{\ifmmode\setbox7\hbox{$\accent"5E#1$}\else
  \setbox7\hbox{\accent"5E#1}\penalty 10000\relax\fi\raise 1\ht7
  \hbox{\lower1.15ex\hbox to 1\wd7{\hss\accent"7E\hss}}\penalty 10000
  \hskip-1\wd7\penalty 10000\box7}
  \def\ocirc#1{\ifmmode\setbox0=\hbox{$#1$}\dimen0=\ht0 \advance\dimen0
  by1pt\rlap{\hbox to\wd0{\hss\raise\dimen0
  \hbox{\hskip.2em$\scriptscriptstyle\circ$}\hss}}#1\else {\accent"17 #1}\fi}
\providecommand{\bysame}{\leavevmode\hbox to3em{\hrulefill}\thinspace}
\providecommand{\MR}{\relax\ifhmode\unskip\space\fi MR }
\providecommand{\MRhref}[2]{%
  \href{http://www.ams.org/mathscinet-getitem?mr=#1}{#2}
}
\providecommand{\href}[2]{#2}
\begin{thebibliography}{BBH{\etalchar{+}}12b}

\bibitem[ABG{\etalchar{+}}14]{anderson2014blessing}
J.~Anderson, M.~Belkin, N.~Goyal, L.~Rademacher, and J.~R. Voss, \emph{The
  more, the merrier: the blessing of dimensionality for learning large
  {G}aussian mixtures}, Conference on Learning Theory (COLT), 2014.

\bibitem[AGH{\etalchar{+}}13]{anandkumar13tensor}
A.~Anandkumar, R.~Ge, D.~Hsu, S.~M. Kakade, and M.~Telgarsky, \emph{Tensor
  decompositions for learning latent variable models}, arXiv (2013).

\bibitem[AM05]{achlioptas2005spectral}
D.~Achlioptas and F.~McSherry, \emph{On spectral learning of mixtures of
  distributions}, Conference on Learning Theory (COLT), 2005.

\bibitem[AS12]{awasthi2012improved}
P.~Awasthi and O.~Sheffet, \emph{Improved spectral-norm bounds for clustering},
  Approximation, Randomization, and Combinatorial Optimization (2012), 37--49.

\bibitem[AW15]{adamczak2015concentration}
R.~Adamczak and P.~Wolff, \emph{Concentration inequalities for non-{L}ipschitz
  functions with bounded derivatives of higher order}, Probability Theory and
  Related Fields \textbf{162} (2015), 531--586.

\bibitem[BBH{\etalchar{+}}12a]{barak2012hypercontractivity}
B.~Barak, F.~Brand{\~a}o, A.~Harrow, J.~Kelner, D.~Steurer, and Y.~Zhou,
  \emph{Hypercontractivity, sum-of-squares proofs, and their applications},
  Symposium on Theory of Computing (STOC), 2012, pp.~307--326.

\bibitem[BBH{\etalchar{+}}12b]{MR2961513-Barak12}
Boaz Barak, Fernando G. S.~L. Brand{\~a}o, Aram~W. Harrow, Jonathan Kelner,
  David Steurer, and Yuan Zhou, \emph{Hypercontractivity, sum-of-squares
  proofs, and their applications [extended abstract]},
  S{TOC}'12---{P}roceedings of the 2012 {ACM} {S}ymposium on {T}heory of
  {C}omputing, ACM, New York, 2012, pp.~307--326. \MR{2961513}

\bibitem[BBV08]{balcan2008discriminative}
M.~Balcan, A.~Blum, and S.~Vempala, \emph{A discriminative framework for
  clustering via similarity functions}, Symposium on Theory of Computing
  (STOC), 2008.

\bibitem[BCMV14]{bhaskara2014smoothed}
A.~Bhaskara, M.~Charikar, A.~Moitra, and A.~Vijayaraghavan, \emph{Smoothed
  analysis of tensor decompositions}, Symposium on Theory of Computing (STOC),
  2014.

\bibitem[BCV14]{bhaskara2014uniqueness}
A.~Bhaskara, M.~Charikar, and A.~Vijayaraghavan, \emph{Uniqueness of tensor
  decompositions with applications to polynomial identifiability}, Conference
  on Learning Theory (COLT), 2014.

\bibitem[B{\'E}85]{bakry1985diffusions}
D.~Bakry and M.~{\'E}mery, \emph{Diffusions hypercontractives}, S{\'e}minaire
  de Probabilit{\'e}s XIX 1983/84, 1985, pp.~177--206.

\bibitem[BGMZ18]{bardet2018functional}
J.~Bardet, N.~Gozlan, F.~Malrieu, and P.~Zitt, \emph{Functional inequalities
  for {G}aussian convolutions of compactly supported measures: explicit bounds
  and dimension dependence}, Bernoulli \textbf{24} (2018), 333--353.

\bibitem[BKS15]{MR3388192-Barak15}
Boaz Barak, Jonathan~A. Kelner, and David Steurer, \emph{Dictionary learning
  and tensor decomposition via the sum-of-squares method [extended abstract]},
  S{TOC}'15---{P}roceedings of the 2015 {ACM} {S}ymposium on {T}heory of
  {C}omputing, ACM, New York, 2015, pp.~143--151. \MR{3388192}

\bibitem[BL97]{bobkov1997poincare}
S.~Bobkov and M.~Ledoux, \emph{{P}oincar{\'e}'s inequalities and {T}alagrand's
  concentration phenomenon for the exponential distribution}, Probability
  Theory and Related Fields \textbf{107} (1997), no.~3, 383--400.

\bibitem[BS10]{belkin2010polynomial}
M.~Belkin and K.~Sinha, \emph{Polynomial learning of distribution families},
  Foundations of Computer Science (FOCS), 2010.

\bibitem[CSV17]{charikar2017learning}
M.~Charikar, J.~Steinhardt, and G.~Valiant, \emph{Learning from untrusted
  data}, Symposium on Theory of Computing (STOC), 2017.

\bibitem[DKK{\etalchar{+}}16]{diakonikolas2016robust}
I.~Diakonikolas, G.~Kamath, D.~Kane, J.~Li, A.~Moitra, and A.~Stewart,
  \emph{Robust estimators in high dimensions without the computational
  intractability}, Foundations of Computer Science (FOCS), 2016.

\bibitem[DKK{\etalchar{+}}17]{diakonikolas2017practical}
\bysame, \emph{Being robust (in high dimensions) can be practical}, arXiv
  (2017).

\bibitem[GHK15]{ge2015learning}
R.~Ge, Q.~Huang, and S.~M. Kakade, \emph{Learning mixtures of {G}aussians in
  high dimensions}, Symposium on Theory of Computing (STOC), 2015.

\bibitem[GM83]{gromov1983topological}
M.~Gromov and V.~D. Milman, \emph{A topological application of the
  isoperimetric inequality}, American Journal of Mathematics \textbf{105}
  (1983), no.~4, 843--854.

\bibitem[GVX14]{goyal2014fourier}
N.~Goyal, S.~Vempala, and Y.~Xiao, \emph{Fourier {PCA} and robust tensor
  decomposition}, Symposium on Theory of Computing (STOC), 2014.

\bibitem[HK13]{hsu13spherical}
D.~Hsu and S.~M. Kakade, \emph{Learning mixtures of spherical {G}aussians:
  Moment methods and spectral decompositions}, Innovations in Theoretical
  Computer Science (ITCS), 2013.

\bibitem[HKZ09]{hsu09spectral}
D.~Hsu, S.~M. Kakade, and T.~Zhang, \emph{A spectral algorithm for learning
  hidden {M}arkov models}, Conference on Learning Theory (COLT), 2009.

\bibitem[KK10]{kumar2010clustering}
A.~Kumar and R.~Kannan, \emph{Clustering with spectral norm and the {k}-means
  algorithm}, Foundations of Computer Science (FOCS), 2010, pp.~299--308.

\bibitem[KMV10]{kalai2010efficiently}
A.~T. Kalai, A.~Moitra, and G.~Valiant, \emph{Efficiently learning mixtures of
  two {G}aussians}, Symposium on Theory of Computing (STOC), 2010,
  pp.~553--562.

\bibitem[KS17]{kothari2017outlier}
P.~Kothari and D.~Steurer, \emph{Outlier-robust moment-estimation via
  sum-of-squares}, arXiv (2017).

\bibitem[Las01]{MR1846160-Lasserre01}
Jean~B. Lasserre, \emph{New positive semidefinite relaxations for nonconvex
  quadratic programs}, Advances in convex analysis and global optimization
  ({P}ythagorion, 2000), Nonconvex Optim. Appl., vol.~54, Kluwer Acad. Publ.,
  Dordrecht, 2001, pp.~319--331. \MR{1846160}

\bibitem[Lat06]{latala2006estimates}
R.~Lata{\l{}}a, \emph{Estimates of moments and tails of {G}aussian chaoses},
  The Annals of Probability \textbf{34} (2006), no.~6, 2315--2331.

\bibitem[LRV16]{lai2016agnostic}
K.~A. Lai, A.~B. Rao, and S.~Vempala, \emph{Agnostic estimation of mean and
  covariance}, Foundations of Computer Science (FOCS), 2016.

\bibitem[MJC{\etalchar{+}}14]{mackey2014matrix}
L.~Mackey, M.~I. Jordan, R.~Y. Chen, B.~Farrell, and J.~A. Tropp, \emph{Matrix
  concentration inequalities via the method of exchangeable pairs}, The Annals
  of Probability \textbf{42} (2014), no.~3, 906--945.

\bibitem[MNV09]{mahajan2009planar}
M.~Mahajan, P.~Nimbhorkar, and K.~Varadarajan, \emph{The planar {k}-means
  problem is {NP}-hard}, International Workshop on Algorithms and Computation
  (2009), 274--285.

\bibitem[MV10]{moitra2010settling}
A.~Moitra and G.~Valiant, \emph{Settling the polynomial learnability of
  mixtures of {G}aussians}, Foundations of Computer Science (FOCS), 2010.

\bibitem[Nes00]{MR1748764-Nesterov00}
Yurii Nesterov, \emph{Squared functional systems and optimization problems},
  High performance optimization, Appl. Optim., vol.~33, Kluwer Acad. Publ.,
  Dordrecht, 2000, pp.~405--440. \MR{1748764}

\bibitem[Par00]{parrilo2000structured}
Pablo~A Parrilo, \emph{Structured semidefinite programs and semialgebraic
  geometry methods in robustness and optimization}, Ph.D. thesis, Citeseer,
  2000.

\bibitem[RV17]{regev2017learning}
O.~Regev and A.~Vijayaraghavan, \emph{On learning mixtures of well-separated
  gaussians}, Foundations of Computer Science (FOCS), 2017.

\bibitem[SCV18]{steinhardt2018resilience}
J.~Steinhardt, M.~Charikar, and G.~Valiant, \emph{Resilience: A criterion for
  learning in the presence of arbitrary outliers}, Innovations in Theoretical
  Computer Science (ITCS), 2018.

\bibitem[Sho87]{MR939596-Shor87}
N.~Z. Shor, \emph{Quadratic optimization problems}, Izv. Akad. Nauk SSSR Tekhn.
  Kibernet. (1987), no.~1, 128--139, 222. \MR{939596}

\bibitem[Tro15]{tropp2015introduction}
J.~A. Tropp, \emph{An introduction to matrix concentration inequalities},
  Foundations and Trends in Machine Learning \textbf{8} (2015), 1--230.

\bibitem[VW02]{vempala2002spectral}
S.~Vempala and G.~Wang, \emph{A spectral algorithm for learning mixture
  models}, Foundations of Computer Science (FOCS), 2002.

\end{thebibliography}

\appendix

\section{Proof of Lemma~\ref{lem:coalesce}}
\label{sec:coalesce-proof}

It suffices to prove the following four inequalities:
\begin{align}
\label{eq:majorize-odd}
u \langle M_a \otimes M_b, v^{\otimes (a+b)} \rangle &\preceq_{a+b} \langle M_{a-1} \otimes M_{b+1}, v^{\otimes (a+b)} \rangle &&\text{ if } a, b \text{ are odd, } a \leq b, \text{  and } u \in \{\pm 1\}. \\
\label{eq:majorize-even}
u \langle M_a \otimes M_b, v^{\otimes (a+b)} \rangle &\preceq_{a+b} \langle M_{a-2} \otimes M_{b+2}, v^{\otimes (a+b)} \rangle &&\text{ if } a, b \text{ are even, } a \leq b, \text{  and } u \in \{\pm 1\}. \\
\label{eq:combine}
u \langle M_a \otimes M_b, v^{\otimes (a+b)} \rangle &\preceq_{a+b} \langle M_{a+b}, v^{\otimes (a+b)} \rangle &&\text{ if } a+b \text{ is even and } u \in \{\pm 1\}. \\
\langle A \otimes B, v^{\otimes (a+b)} \rangle &\preceq_{a+b} \langle A' \otimes B', v^{\otimes (a+b)} \rangle &&\text{ if } 0 \preceq_{a} \langle A, v^{\otimes a} \rangle \preceq_{a} \langle A', v^{\otimes a} \rangle \text{ and } \notag \\
\label{eq:product}
 & && \ \ \ \ 0 \preceq_{b} \langle B, v^{\otimes b} \rangle \preceq_{b} \langle B', v^{\otimes b} \rangle.
\end{align}
Given (\ref{eq:majorize-odd}-\ref{eq:product}), we can use \eqref{eq:combine} to reduce the number of terms $M_i$ in the tensor product in the expression in the claim until there are at most $2$ left (such that each term $M_i$ has $i \leq 2t-2$). We then use \eqref{eq:majorize-odd} and \eqref{eq:majorize-even} to iteratively make the indices in the final 
two terms smaller and larger respectively, until one is $2$ and the other is $2t-2$. This will give us the claim. 

Thus, it suffices to establish the inequalities above.

Proof of \eqref{eq:product}: follows from the factorization $A' \otimes B' - A \otimes B = A' \otimes (B' - B) + (A' - A) \otimes B$.

Proof of \eqref{eq:majorize-odd}: first we need to show that 
$x^r -u x^{r-1}y - u xy^{r-1} + y^r \ssos 0$ (as a polynomial in $x$ and $y$) 
for any even $r$ and $u \in \{\pm 1\}$. We can WLOG take $u = 1$, as if $u = -1$ 
we just apply the same argument to the variables $(x,-y)$ instead of $(x,y)$. For $u = 1$ we have
\begin{align}
x^r - x^{r-1}y - xy^{r-1} + y^r 
 &= (x-y)(x^{r-1} - y^{r-1}) \\
 &= (x-y)^2(x^{r-2} + x^{r-1}y + x^{r-2}y^2 + \cdots + y^{r-2}) \\
 &= \frac{1}{2}(x-y)^2((x^{r/2-1})^2 + \sum_{i=1}^{r/2-2} (x^{r/2-i}y^{i-1} + x^{r/2-i-1}y^i)^2 + (y^{r/2-1})^2) \\
 &\ssos 0,
\end{align}
as claimed. Now \eqref{eq:majorize-odd} follows as it is equivalent to 
$x^{a-1}y^{b+1} - u x^ay^b - u x^by^a + x^{b+1}y^{a-1}$, which we get by setting $r = b-a+2$ and multiplying by $x^{a-1}y^{a-1}$.

Proof of \eqref{eq:majorize-even}: note that the $u = -1$ case is trivial as then the 
left-hand-side is $\psos 0$ while the right-hand-side is $\ssos 0$. So we can assume 
$u = 1$. Similarly to the previous proof, it suffices to show that 
$x^r - x^{r-2}y^2 - x^2y^{r-2} + y^r \ssos 0$. This is actually easier than before:
\begin{align}
x^r - x^{r-2}y^2 - x^2y^{r-2} + y^r
 &= (x^2-y^2)(x^{r-2} - y^{r-2}) \\
 &= (x^2 - y^2)^2(x^{r-4} + x^{r-6}y^2 + \cdots + y^{r-2}) \ssos 0.
\end{align}
Again, setting $r = b-a+4$ and multiplying by $x^{a-2}y^{a-2}$ yields the desired result.

Proof of \eqref{eq:combine}: follows from repeated application of \eqref{eq:majorize-odd} and \eqref{eq:majorize-even}. $\qed$

\section{Proof of Theorem~\ref{thm:poincare-composition}}
\label{sec:poincare-composition-proof}

First, we establish the product property. Suppose that $X_1, X_2$ both satisfy 
\eqref{eq:poincare-s3} with constants $\sigma_1, \sigma_2$ which are both at most $\sigma$. 
Supposing that $X_1 \in \bR^{d_1}$ and $X_2 \in \bR^{d_2}$, and that 
$f : \bR^{d_1} \times \bR^{d_2} \to \bR$, we have
\begin{align}
\Var_{x_1,x_2}[f(x_1,x_2)] 
 &\stackrel{(i)}{=} \Var_{x_1}[\bE_{x_2}[f(x_1,x_2) \mid x_1]] + \bE_{x_1}[\Var_{x_2}[f(x_1,x_2) \mid x_1]] \\
 &\stackrel{(ii)}{\leq} \sigma^2 \Big(\bE_{x_1}[\|\nabla_{x_1} \bE_{x_2}[f(x_1,x_2)]\|_2^2] + \bE_{x_1}[\bE_{x_2}[\|\nabla_{x_2} f(x_1,x_2)\|_2^2]]\Big) \\
 &= \sigma^2 \Big(\bE_{x_1}[\|\bE_{x_2}[\nabla_{x_1}f(x_1,x_2)]\|_2^2] + \bE_{x_1,x_2}[\|\nabla_{x_2} f(x_1,x_2)\|_2^2]]\Big) \\
 &\stackrel{(iii)}{\leq} \sigma^2 \Big(\bE_{x_1,x_2}[\|\nabla_{x_1}f(x_1,x_2)\|_2^2] + \bE_{x_1,x_2}[\|\nabla_{x_2} f(x_1,x_2)\|_2^2]]\Big) \\
 &= \sigma^2 \bE_{x_1,x_2}[\|\nabla f(x_1,x_2)\|_2^2].
\end{align}
Here (i) is the law of total variation, (ii) applies \eqref{eq:poincare-s3} for the 
individual random variables $X_1$ and $X_2$, and (iii) is Jensen's inequality for the 
Euclidean norm.

Next, we establish the Lipschitz composition property. Suppose that 
$X \in \bR^d$ satisfies \eqref{eq:poincare-s3} with constant $\sigma$, 
and that $\Phi : \bR^d \to \bR^{d'}$ satisfies $\|\Phi(x) - \Phi(x')\|_2 \leq L\|x-x'\|_2$ 
for all $x, x' \in \bR^d$. Then for $f : \bR^{d'} \to \bR$, we have
\begin{align}
\Var_x[f(\Phi(x))] 
 &\stackrel{(i)}{\leq} \sigma^2 \bE_{x}[\|\nabla (f \circ \Phi)(x)\|_2^2] \\
 &\stackrel{(ii)}{=} \sigma^2 \bE_{x}[\|\frac{\partial \Phi}{\partial x} \nabla f(\Phi(x))\|_2^2] \\
 &\leq \sigma^2 \bE_{x}[\|\frac{\partial \Phi}{\partial x}\|_{\op}^2 \|\nabla f(\Phi(x))\|_2^2] \\
 &\stackrel{(iii)}{\leq} \sigma^2 L^2 \bE_{x}[\|\nabla f(\Phi(x))\|_2^2].
\end{align}
Here (i) is \eqref{eq:poincare-s3} applied to the function $f \circ \Phi$, 
(ii) is the chain rule, and (iii) is the assumed Lipschitz bound on $\Phi$.

Finally, we establish the linear combination property, which follows from the two 
properties above. Suppose that $X_1, X_2$ satisfy \eqref{eq:poincare-s3} with 
constants $\sigma_1, \sigma_2$. Then by the product property, 
the random variable $(X_1/\sigma_1, X_2/\sigma_2)$ satisfies \eqref{eq:poincare-s3} 
with constant $1$. But now the map 
$(Y_1, Y_2) \mapsto a\sigma_1 Y_1 + b\sigma_2Y_2$ is Lipschitz with constant 
$L = \sqrt{a^2\sigma_1^2 + b^2\sigma_2^2}$, so in particular 
$aX_1 + bX_2$ satisfies \eqref{eq:poincare-s3} with constant 
$\sqrt{a^2\sigma_1 + b^2\sigma_2^2}$. This completes the proof.

\section{Proof of Lemma~\ref{lem:basic-relaxation}}
\label{sec:basic-relaxation-proof}

Note that the optimization problem 
\begin{equation}
\label{eq:minimax}
\min_{w_1, \ldots, w_n} \max_{\xi, \zeta_1, \ldots, \zeta_n} \sum_{i=1}^n c_i \pE_{\xi(v)}[\iprod{ x_i - w_i, v }^{2t}] + \lambda \pE_{\zeta_i(z_i)}[\iprod{ w_i, z_i }^{2t}]
\end{equation}
is a convex-concave saddle point problem. In particular, convexity in $w$
follows because the convexity (in $v$) of the function 
$\iprod{x_i - w_i, v}^{2t}$ has a sum-of-squares proof, and similarly 
for $\iprod{w_i, z_i}^{2t}$. On the other hand, concavity in 
$\xi, \zeta$ holds because the objective is in fact linear in $z_i$. 

In particular, by the minimax theorem, \eqref{eq:minimax} remains unchanged 
if we swap the order of min and max. However, we also need to check that 
both directions of the saddle point problem can be efficiently computed.

To do this, we first relax $w_1, \ldots, w_n$ to themselves be pseudodistributions 
$\theta_i(w_i)$. Then the entire objective for \eqref{eq:minimax} becomes bilinear 
in $\theta$ and in $\xi, \zeta$. Moreover, 
since sum-of-squares programs can be represented as semidefinite programs, 
the relaxed version of \eqref{eq:minimax} is in fact a bilinear min-max problem 
with semidefinite constraints on the two sets of variables. It is a standard 
fact in optimization that such programs can be solved efficiently to arbitrary precision.

Now, to finish, we note that the relaxation of $w_i$ was not in fact a relaxation at all! 
Because \eqref{eq:minimax} was already convex in $w_i$, there is always an optimal 
$\theta_i$ that places all of its mass on a single point. In particular, given optimal 
values $\theta_1^{\star}, \ldots, \theta_n^{\star}$, we can simply 
take $w_i^{\star} = \pE_{\theta_i^{\star}(w)}[w]$, which must have at least as good of 
an objective value for \eqref{eq:minimax} (because of the sum-of-squares proof of convexity).

\section{Proof of Propositions~\ref{prop:resilience-small} and \ref{prop:resilience-large}}
\label{sec:resilience-proof}

The proofs of Proposition~\ref{prop:resilience-small} and \ref{prop:resilience-large} 
are almost identical, so we prove them together.

We start with Proposition~\ref{prop:resilience-large}. We write
\begin{align}
\Big\|\frac{1}{|S'|} \sum_{i \in S'} (x_i - \hat{\mu})\Big\|_2 
 &= \Big\|\frac{1}{|S'|} \sum_{i \in S'} (x_i - w_i) + (w_i - \hat{\mu})\Big\|_2 \\
 &\leq \rad + \sup_{\|v\|_2 \leq 1} \frac{1}{|S'|} \sum_{i \in S'} \langle x_i - w_i, v \rangle \\
 &\leq \rad + \sup_{\|v\|_2 \leq 1} \Big(\frac{1}{|S'|} \sum_{i in S'} \langle x_i - w_i, v \rangle^{2t}\Big)^{1/2t} \\
 &\leq \rad + \sup_{\|v\|_2 \leq 1} \Big(\frac{4}{|S'|} \sum_{i=1}^n c_i \langle x_i - w_i, v \rangle^{2t}\Big)^{1/2t} \\
 &= \rad + \Big(\frac{4}{|S'|} \oo(nB^{2t})\Big)^{1/2t} \\
 &= \rad + \oo(B / \beta^{1/2t})
\end{align}
as was to be shown.

We now consider Proposition~\ref{prop:resilience-small}. We have
\begin{align}
\Big\|\frac{1}{|S'|} \sum_{i \in S'} (x_i - \hat{\mu})\Big\|_2 
 &= \Big\|\frac{1}{|S'|} \sum_{i \in S\backslash S'} (x_i - \hat{\mu})\Big\|_2 \\
 &= \frac{\epsilon_1}{1-\epsilon_1} \Big\|\frac{1}{|S \backslash S'|} \sum_{i \in S \backslash S'} (x_i - \hat{\mu})\Big\|_2 \\
 &\stackrel{(i)}{\leq} \frac{\epsilon_1}{1-\epsilon_1} (\rad + \oo(B / (\epsilon_1\beta)^{1/2t})) \\
 &\leq 2\epsilon_1 \rad + \oo(B \epsilon_1^{1-1/2t} \beta^{-1/2t}),
\end{align} 
where (i) simply applies Proposition~\ref{prop:resilience-large}.

\end{document}